\newenvironment{megaalgorithm}[1][htb]
  {
   \begin{algorithm}[#1]%
  }{\end{algorithm}}
\newcommand{\rpm}{\raisebox{.2ex}{$\scriptstyle\pm$}}
\newcommand{\norm}[1]{\left\lVert#1\right\rVert}
\newtheorem{assumption}{Assumption}
\newtheorem{proposition}{Proposition}
\newtheorem{lemma}{Lemma}
\newtheorem{definition}{Definition}
\newtheorem{theorem}{Theorem}
\newtheorem{remark}{Remark}
\title{Finding the Homology of Decision Boundaries with Active Learning}
\author{%
  Weizhi Li \\
  Arizona State University\\
  \texttt{weizhili@asu.edu}\\ 
  \And
  Gautam Dasarathy \\
  Arizona State University \\
  \texttt{gautamd@asu.edu}\\ 
  \AND
  Karthikeyan Natesan Ramamurthy \\
  IBM Research \\
  \texttt{knatesa@us.ibm.com}\\ 
  \And
  Visar Berisha \\
    Arizona State University\\
  \texttt{visar@asu.edu}
}
\begin{document}
\maketitle

\begin{abstract}
Accurately and efficiently characterizing the decision boundary of classifiers is important for problems related to model selection and meta-learning. Inspired by topological data analysis, the characterization of decision boundaries using their homology has recently emerged as a general and powerful tool. 
In this paper, we propose an active learning algorithm to recover the homology of decision boundaries. Our algorithm sequentially and adaptively selects which samples it requires the labels of.  We theoretically analyze the proposed framework and show that the query complexity of our active learning algorithm depends naturally on the intrinsic complexity of the underlying manifold. 
We demonstrate the effectiveness of our framework in selecting best-performing machine learning models for datasets just using their respective homological summaries. Experiments on several standard datasets show the sample complexity improvement in recovering the homology and demonstrate the practical utility of the framework for model selection. Source code for our algorithms and experimental results is available at \url{https://github.com/wayne0908/Active-Learning-Homology}.
\end{abstract}

\section{Introduction}
\label{intro}
Meta learning refers to a family of algorithms that learn to generalize the experience from learning a task and is therefore dubbed ``learning to learn''~\cite{schmidhuber1987evolutionary}. The complexity of the data at hand is an important insight that, if gleaned correctly from past experience, can greatly enhance the performance of a meta-learning procedure~\cite{moran2017can}. 
A particularly useful characterization of data complexity is to understand the geometry of the decision boundary; for example, by using topological data analysis (TDA)~\cite{kusano2016persistence,chen2018topological,ramamurthy2018topological}. This scenario makes sense in settings where large corpora of labeled training data are available to recover the persistent homology of the decision boundary for use in downstream machine learning tasks~\cite{ramamurthy2018topological,rieck2019persistent,guss2018characterizing,rieck2018neural}. However the utility of this family of methods is limited in applications where labeled data is expensive to acquire. 

In this paper, we explore the intersection of active learning and topological data analysis for the purposes of efficiently learning the persistent homology of the decision boundary in classification problems. In contrast to the standard paradigm, in active learning, the learner has access to unlabeled data and sequentially selects a set of points for an oracle to label. We propose an efficient active learning framework that adaptively select points for labeling near the decision boundary. A theoretical analysis of the algorithm results in an upper bound on the number of samples required to recover the recover the decision boundary homology. Naturally, this query complexity depends on the intrinsic complexity of the underlying manifold. 

There have been several other studies that have explored the use of topological data analysis to characterize the decision boundary in classification problems. In~\cite{varshney2015persistent}, the authors use the persistent homology of the decision boundary to tune hyperparameters in kernel-based learning algorithms. They later extended this work and derived the conditions required to recover the homology of the decision from only samples~\cite{ramamurthy2018topological}. Other works have explored the use of other topological features to characterize the difficulty of classification problems~\cite{ho2006measures,hofer2017deep,guss2018characterizing}. While all previous work assumes full knowledge of data labels, only samples near the decision boundary are used to construct topological features. We directly address this problem in our work by proposing an active approach that adaptively and sequentially labels only samples near the decision boundary, thereby resulting in significantly reduced query complexity. To the best of our knowledge, this is the first work that explores the intersection of active learning and topological data analysis.

Our main contributions are as follows: 
\begin{itemize}
    \item We introduce a new algorithm for actively selecting samples to label in service of finding the persistent homology of the decision boundary. We provide theoretical conditions on the query complexity that lead to the successful recovery of the decision boundary homology. 
    \item We evaluate the proposed algorithm for active homology estimation using synthetic data and compare its performance to a passive approach that samples data uniformly. In addition, we demonstrate the utility of our approach relative to a passive approach on a stylized model selection problem using real data.
\end{itemize}

\section{Preliminaries}
\label{prelim}
In this section, we define the decision boundary manifold and discuss the labeled \v{C}ech Complex \cite{ramamurthy2018topological} which we then use to estimate the homology of this manifold from labeled data. For more background and details, we direct the reader to the Appendix~\ref{DecManifoldSupp}.

\subsection{The Decision Boundary Manifold and Data}
Let $\mathcal{X}$ be a Euclidean space that denotes the domain/feature space of our learning problem and let $\mu$ denote the standard Lebesgue measure on $\mathcal{X}$.
We will consider the binary classification setting in this paper and let $\mathcal{Y} = \{0,1\}$ denote the label set. Let $p_{XY}$ denote a joint distribution on $\mathcal{X}\times \mathcal{Y}$. Of particular interest to us in this paper is the so-called Bayes decision boundary $\mathcal{M}=\{\mathbf{x}\in \mathcal{X}|p_{Y|X}(1|\mathbf{x}) = p_{Y|X}(0|\mathbf{x})\}$. Indeed, identifying $\mathcal{M}$ is equivalent to being able to construct the provably optimal binary classifier called the Bayes optimal predictor: 
\begin{align}
f(\mathbf{x}) = \begin{cases} 1 & \mbox{ if }p_{Y\mid X}(1 \mid \mathbf{x}) \geq 0.5\\
0 & \mbox{otherwise}
\end{cases}.
\end{align}
Following along the lines of \cite{ramamurthy2018topological}, the premise of this paper relies on supposing that the set $\mathcal{M}$ is in fact a reasonably well-behaved manifold\footnote{Note that it is conceivable that the decision boundary is not strictly a manifold. While this assumption is critical to the rest of this paper, it is possible to extend the results here by following the theory in \cite{kim2020homotopy}. We will leave a thorough exploration of this for future work.}. That is, we will make the following assumption. 
\begin{assumption}
The decision boundary manifold $\mathcal{M}$ has a condition number $1/\tau$.
\label{TauCond}
\end{assumption}
The condition number $\frac{1}{\tau}$ is an intrinsic property of $\mathcal{M}$ (assumed to be a submanifold of $\mathbb{R}^d$) and encodes both the local and global curvature of the manifold. The value $\tau$ is the largest number such that the open normal bundle about $\mathcal{M}$ of radius $r$ is embedded in $\mathbb{R}^N$ for every $r < \tau$. \textit{E.g.}, in Figure \ref{ManifoldViz}, where $\mathcal{M}$ is a circle in $\mathbb{R}^2$ and $\tau$ is its radius. We refer the reader to the Appendix~\ref{suffiicentcondSuppl} (or \cite{niyogi2008finding}) for a formal definition. 

Now we will suppose that we have access to $N$ i.i.d samples $\mathcal{D}=\{\mathbf{x}_1,...,\mathbf{x}_N\} \subset \mathcal{X}$ that are drawn according to the marginal distribution $p_X$. Notice that in a typical passive learning setting, we assume access to $N$ i.i.d samples from the joint distribution $p_{XY}$. The goal of this paper is to demonstrate that we may obtain labels for far fewer labels than $N$ while achieving similar performance to the passive learning setting if we are allowed to sequentially and adaptively choose the labels observed. Based on the observed data, we define the set $\mathcal{D}^0 = \{ \mathbf{x} \in \mathcal{D} : f(\mathbf{x}) = 0 \}$, that is the set of all samples with Bayes optimal label of 0; similarly,  we let $\mathcal{D}^1 = \{ \mathbf{x} \in \mathcal{D} : f(\mathbf{x}) = 1 \}$.
\begin{figure}
\centering
 \includegraphics[width=0.7\linewidth]{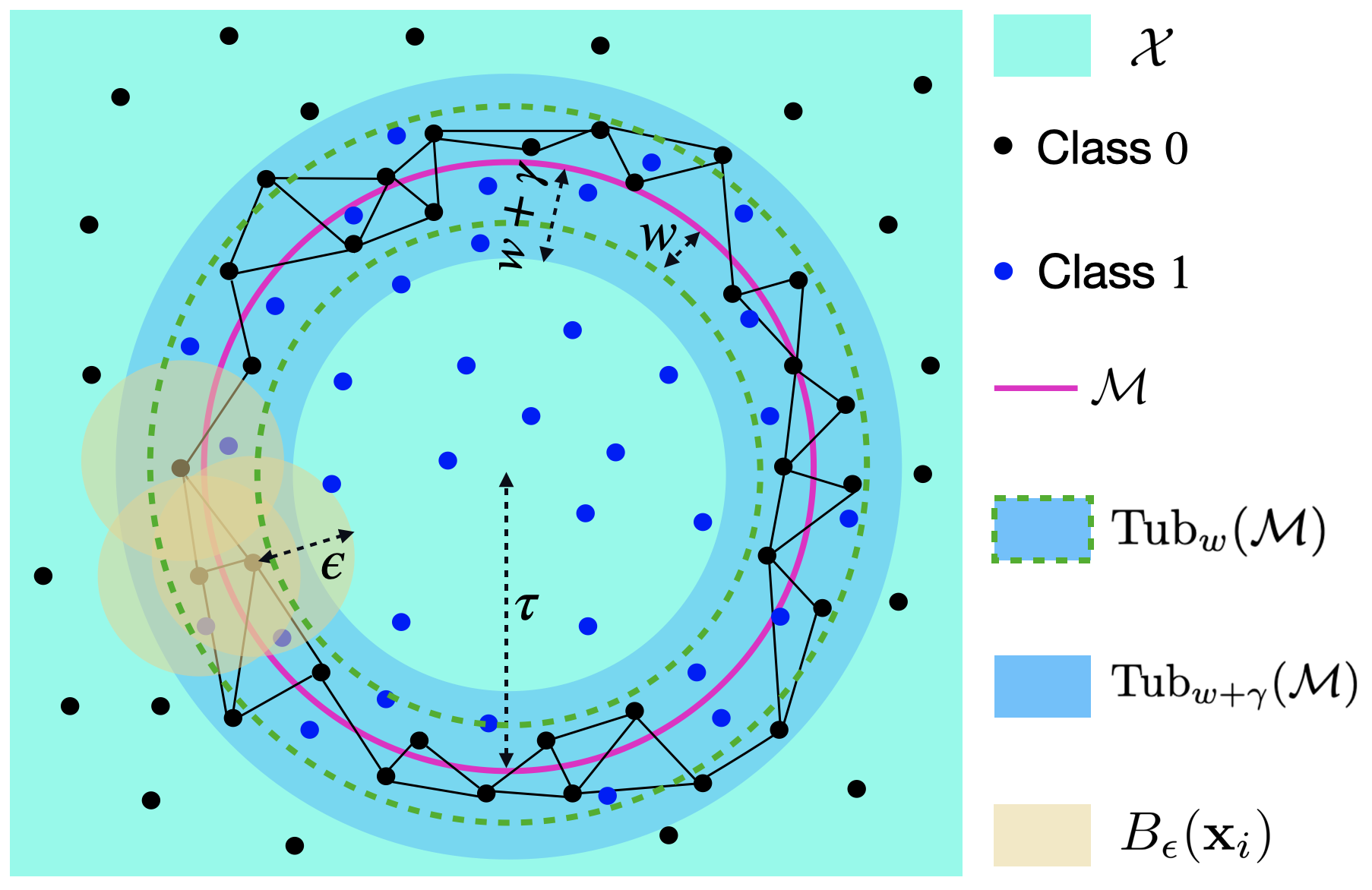}\\
\caption{An example of $(\epsilon, \gamma)-$labeled \v{C}ech complex, constructed in a tubular neighborhood ${\rm Tub}_{w+\gamma}(\mathcal{M})$ of radius $w+\gamma$, for a manifold $\mathcal{M}$ of condition number $1/\tau$. The overlap between the two classes ($\mathfrak{D}$) is contained in ${\rm Tub}_{w}(\mathcal{M})$. The complex is constructed on samples in class $0$, by placing balls of radius $\epsilon$ ($B_\epsilon(\mathbf{x}_i)$), and is ``witnessed'' by samples in class $1$. $\mathcal{X}$ is the compact probability space for the data. Each triangle is assumed to be a $2-$simplex in the simplicial complex. Note that we keep the samples from both classes sparse for aesthetic reasons.}
\vspace{-0.5cm}
\label{ManifoldViz}
\end{figure}

\subsection{The Labeled \v{C}ech Complex}
\label{SubsecLC}
As outlined in Section~\ref{intro}, our goal is to recover the homological summaries of $\mathcal{M}$ from data. Homological summaries such as Betti numbers estimate the number of connected components and the number of holes of various dimensions that are present in $\mathcal{M}$. Since we only have a sample of data points in practice, we first construct a simplicial complex from these points that mimics the shape of $\mathcal{M}$. We can then estimate the rank of any homology group $H_i$ of dimension $i$ from this complex. This rank is called the Betti number $\beta_i$ and informally denotes the number of holes of dimension $i$ in the complex. The multi-scale estimation of Betti numbers, which involves gradual ``thickening'' of the complex,  results in a persistence diagram PD$_i$ that encodes the \textit{birth} and \textit{death time} of the $i-$dimensional holes in the complex. For more background we refer the reader to  \cite{edelsbrunner2008persistent}.


In~\cite{ramamurthy2018topological}, the authors consider the passive learning setting for estimating the homology of $\mathcal{M}$ and propose a simplicial approximation for the decision boundary called the Labeled \v{C}ech (L$\check{\text{C}}$) Complex. We now provide a definition of this complex, letting $B_\epsilon(\mathbf{x})$ denote a ball of radius $\epsilon$ around $\mathbf{x}$. We refer the reader to the Appendix~\ref{DecManifoldSupp} or~\cite{ramamurthy2018topological} for the more details.

\begin{definition}
Given $\epsilon, \gamma>0$, an ($\epsilon$, $\gamma$)-labeled \v{C}ech complex is a simplicial complex constructed from a collection of simplices such that each simplex $\sigma$ is formed on the points in a set $S\subseteq\mathcal{D}^0$ witnessed by the reference set $\mathcal{D}^1$ satisfying the following conditions:
\textbf{(a)} $\bigcap_{\mathbf{x}_i\in \sigma} B_\epsilon(\mathbf{x}_i)\neq \emptyset$, where $\mathbf{x}_i \in S$ are the vertices of $\sigma$. \textbf{(b)} $\forall \mathbf{x}_i\in S\subseteq \mathcal{D}^0$, $\exists \mathbf{x}_j\in \mathcal{D}^1$ such that, $\norm{\mathbf{x}_i-\mathbf{x}_j}_2\leq \gamma$.
\label{LVRDef}
\end{definition} 


The set $S$ is used to construct the L$\check{\text{C}}$ complex witnessed by the reference set $\mathcal{D}^1$. This allows us to infer that each vertex of the simplices $\sigma$ are within distance  $\gamma$ to some point in $\mathcal{D}^1$. 
The authors in \cite{ramamurthy2018topological} show that, under certain assumption on the manifold and the distribution, provided sufficiently many random samples (and their labels) drawn according $p_{XY}$, the set $U = \bigcup_{\mathbf{x}_i\in\sigma}B_\epsilon(\mathbf{x}_i)$ forms a cover of $\mathcal{M}$ and deformation retracts to $\mathcal{M}$. Moreoever, the nerve of the covering is homotopy equivalent to $\mathcal{M}$. The assumptions under which the above result holds also turns out to be critical to achieve the results of this paper, and hence we will devote the rest of this section to elaborating on these. 

Before stating our assumptions, we need a few more definitions. For the distribution $p_{XY}$, we will let $$\mathfrak{D} \triangleq \{\mathbf{x}\in \mathcal{X}: p_{X\mid Y}(\mathbf{x}\mid 1)p_{X\mid Y}(\mathbf{x}\mid 0) > 0\}.$$ In other words, $\mathfrak{D}$ denotes the region of the feature space where both classes overlap, i.e., both class conditional distributions $p_{X\mid Y}(\cdot\mid 1)$ and $p_{X\mid Y}(\cdot\mid 0)$ have non-zero mass. For any $r > 0$, we let ${\rm Tub}_r(\mathcal{M})$ denote a ``tubular'' neighborhood of radius $r$ around $\mathcal{M}$  \cite{niyogi2008finding}. Furthermore, we write ${\rm Tub}_w(\mathcal{M})$ to denote the smallest tubular neighborhood enclosing $\mathfrak{D}$. A stylized example that highlights the relationship between these parameters is shown in Figure~\ref{ManifoldViz}. In the sequel, we will introduce the related assumption and the lemma underlying the results of this paper (similar to those in~\cite{niyogi2008finding,ramamurthy2018topological}). All assumptions and follow-on results are dependent on two parameters that are specific to the joint distribution: $w$ - the amount of overlap between the distributions and $\tau$ - the global geometric property of the decision boundary manifold.

\begin{assumption}
$w  <(\sqrt{9} - \sqrt{8})\tau$. 
\label{ManifoldProperties1}
\end{assumption}
A principal difference between~\cite{niyogi2008finding} and our work is that~\cite{niyogi2008finding} has direct access to the manifold and supposes that all generated samples are contained within ${\rm Tub}_{(\sqrt{9}-\sqrt{8})\tau}(\mathcal{M})$; this is one of the sufficient conditions for manifold reconstruction from samples. In contrast, as in~\cite{ramamurthy2018topological}, we do not have direct access to the decision boundary manifold. Rather, certain properties of $\mathcal{M}$ are \emph{inferred} from the labels. In this paper, we show that we can infer the homology of $\mathcal{M}$ with far fewer labels if we are allowed to sequentially and adaptively choose which labels to obtain. 

Since the generated samples do not necessarily reside within ${\rm Tub}_{(\sqrt{9}-\sqrt{8})\tau}(\mathcal{M})$, it is not immediately apparent that it is possible to find an $S$ (see Definition~\ref{LVRDef}) that is entirely contained in ${\rm Tub}_{(\sqrt{9}-\sqrt{8})\tau}(\mathcal{M})$. However, 
Assumption~\ref{ManifoldProperties1} allows us to guarantee precisely this. To see this, we will first state the following lemma. 
\begin{lemma}
Provided $\mathcal{D}^0$ and $\mathcal{D}^1$ are both $\frac{\gamma}{2}$-dense in $\mathcal{M}$, then $S$ is contained in $Tub_{w + \gamma}(\mathcal{M})$ and it is $\frac{\gamma}{2}$-dense\footnote{A set $W\subseteq \mathfrak{D}$ is  $\frac{\gamma}{2}$-dense in $\mathcal{M}$ if for every $\mathbf{p}\in\mathcal{M}$ there exists a $\mathbf{x}\in W$ such that $\norm{\mathbf{p}-\mathbf{x}}_2<\frac{\gamma}{2}$. In other words, there exists at least one $\mathbf{x}\in  W$ in  $B_{\frac{\gamma}{2}}(\mathbf{p})$ for every $\mathbf{p}\in\mathcal{M}$.} in $\mathcal{M}$.  
\label{rdense}
\label{gamma}
\end{lemma}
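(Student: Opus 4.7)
The plan is to split the lemma into two independent claims, namely the density statement ``$S$ is $\frac{\gamma}{2}$-dense in $\mathcal{M}$'' and the containment statement ``$S\subseteq{\rm Tub}_{w+\gamma}(\mathcal{M})$'', and handle them in that order. The density half is a direct triangle-inequality argument, whereas the containment half hinges on a single geometric fact about the overlap region $\mathfrak{D}$, which is where the only real subtlety lies.

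For density, I would fix an arbitrary $\mathbf{p}\in\mathcal{M}$ and invoke the two hypotheses in turn. Because $\mathcal{D}^0$ is $\frac{\gamma}{2}$-dense in $\mathcal{M}$, there is an $\mathbf{x}\in\mathcal{D}^0$ with $\norm{\mathbf{x}-\mathbf{p}}_2<\frac{\gamma}{2}$; because $\mathcal{D}^1$ is $\frac{\gamma}{2}$-dense in $\mathcal{M}$, there is a $\mathbf{y}\in\mathcal{D}^1$ with $\norm{\mathbf{y}-\mathbf{p}}_2<\frac{\gamma}{2}$. The triangle inequality then gives $\norm{\mathbf{x}-\mathbf{y}}_2<\gamma$, so $\mathbf{x}$ satisfies the witness condition of Definition~\ref{LVRDef} and hence belongs to $S$. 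The same $\mathbf{x}$ also witnesses the $\frac{\gamma}{2}$-closeness of $S$ to $\mathbf{p}$, and since $\mathbf{p}\in\mathcal{M}$ was arbitrary, $S$ is $\frac{\gamma}{2}$-dense in $\mathcal{M}$.

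For the containment, I would fix $\mathbf{x}_i\in S$ and pick the witnessing $\mathbf{x}_j\in\mathcal{D}^1$ with $\norm{\mathbf{x}_i-\mathbf{x}_j}_2\leq\gamma$. Since $f(\mathbf{x}_i)=0$ and $f(\mathbf{x}_j)=1$ are different Bayes labels, I would argue that the line segment between them must meet the overlap region $\mathfrak{D}$ at some intermediate point $\mathbf{q}$: outside $\mathfrak{D}$ only one class-conditional density is positive, so the Bayes label is locally constant there, and hence it cannot switch from $0$ to $1$ along the segment without first entering $\mathfrak{D}$. Given such a $\mathbf{q}$, the definition of ${\rm Tub}_w(\mathcal{M})$ as the smallest tubular neighborhood containing $\mathfrak{D}$ yields a $\mathbf{p}\in\mathcal{M}$ with $\norm{\mathbf{q}-\mathbf{p}}_2\leq w$. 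Combining this with $\norm{\mathbf{x}_i-\mathbf{q}}_2\leq\norm{\mathbf{x}_i-\mathbf{x}_j}_2\leq\gamma$ through the triangle inequality gives $\norm{\mathbf{x}_i-\mathbf{p}}_2\leq w+\gamma$, i.e., $\mathbf{x}_i\in{\rm Tub}_{w+\gamma}(\mathcal{M})$.

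The one place I expect friction is justifying that the segment actually hits $\mathfrak{D}$. The clean version of this argument treats $f$ as locally constant on each connected component of the complement of $\overline{\mathfrak{D}}$, which is the intended reading when $\mathcal{M}$ is a codimension-one manifold (Assumption~\ref{TauCond}) separating the two class-pure regions; the change in $f$ between the endpoints of the segment then forces the segment to cross $\overline{\mathfrak{D}}\subseteq{\rm Tub}_w(\mathcal{M})$. I would make this precise by viewing $\mathcal{M}$ as the zero set of the function $p_{Y\mid X}(1\mid\cdot)-p_{Y\mid X}(0\mid\cdot)$, so that any straight-line path whose endpoints lie in opposite sign regions must meet either $\mathcal{M}$ or $\mathfrak{D}$, which is enough to close the triangle-inequality computation above.
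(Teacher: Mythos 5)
Your proof is correct and follows essentially the same route as the paper: the density half is exactly the paper's triangle-inequality argument (a $\frac{\gamma}{2}$-close point of $\mathcal{D}^0$ and a $\frac{\gamma}{2}$-close point of $\mathcal{D}^1$ around each $\mathbf{p}\in\mathcal{M}$ are within $\gamma$ of each other, so the former lies in $S$), and the containment half rests, as in the paper, on the witness condition of Definition~\ref{LVRDef}(b) together with $\mathfrak{D}\subseteq{\rm Tub}_w(\mathcal{M})$. The only difference is that you make explicit, via the segment-crossing argument, the step the paper states only informally (that points of $S$ outside $\mathfrak{D}$ ``exceed'' ${\rm Tub}_w(\mathcal{M})$ by at most $\gamma$), which is a refinement of the same argument rather than a different approach.
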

That is, the lemma tells us that as long as we choose our sampling strategy to guarantee that both $\mathcal{D}^0$ and $\mathcal{D}^1$ are $\gamma/2$-dense in $\mathcal{M}$, and provided we choose $\gamma <  (\sqrt{9} - \sqrt{8})\tau - w$, we can guarantee that $S \subset {\rm Tub}_{(\sqrt{9}-\sqrt{8})\tau}(\mathcal{M})$. 
\
$\mathcal{D}^0$ and $\mathcal{D}^1$ being $\frac{\gamma}{2}$-dense in $\mathcal{M}$ implies $\| \mathbf{x}_i - \mathbf{x}_j\|_2 < \gamma$, where $\mathbf{x}_i\in \mathcal{D}^0\bigcap B_{\frac{\gamma}{2}}(\mathbf{p})$ and $\mathbf{x}_j\in \mathcal{D}^1\bigcap B_{\frac{\gamma}{2}}(\mathbf{p})$ for every $\mathbf{p}\in\mathcal{M}$.

Therefore, as the distance from $\mathbf{x}_i\in S\subseteq \mathcal{D}^0$ to $\mathbf{x}_j\in\mathcal{D}^1$ is bounded by $\gamma$ (see Definition~\ref{LVRDef}(b)), $S$ is also $\frac{\gamma}{2}$-dense in $\mathcal{M}$. This immediately implies $S$ is $(w+\gamma)$-dense in $\mathcal{M}$. 
Therefore, with $S\subset{\rm Tub}_{(\sqrt{9} -\sqrt{8})\tau}(\mathcal{M})$, $S$ being $(w+\gamma)$-dense in $\mathcal{M}$ and an appropriate $\epsilon$ properly selected, we will have the $(\epsilon, \gamma)$-L\v{C} complex homotopy equivalent to $\mathcal{M}$ per theorems in  \cite{ramamurthy2018topological,niyogi2008finding}. We now state the following proposition. 
\begin{proposition}
$(\epsilon, \gamma)$-L\v{C} complex is homotopy equivalent to $\mathcal{M}$ as long as (a) $\gamma <(\sqrt{9} - \sqrt{8})\tau - w$; (b) $\mathcal{D}^0$ and $\mathcal{D}^1$ are $\frac{\gamma}{2}$-dense in $\mathcal{M}$ and (c) $\epsilon \in (\frac{(w + \gamma + \tau) - \sqrt{(w + \gamma)^2 + \tau^2 - 6\tau (w + \gamma)}}{2},\frac{(w + \gamma + \tau) + \sqrt{(w + \gamma)^2 + \tau^2 - 6\tau (w + \gamma)}}{2})$.
\label{sufficient2}
\end{proposition}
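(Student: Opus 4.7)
The plan is to deduce Proposition \ref{sufficient2} as an application of Lemma \ref{rdense}, a Niyogi--Smale--Weinberger style reconstruction theorem (as in \cite{niyogi2008finding,ramamurthy2018topological}), and the Nerve Lemma. The argument proceeds in three steps.

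First, I would invoke Lemma \ref{rdense}. Hypothesis (b) of the proposition matches the hypothesis of that lemma, so we obtain that $S\subset {\rm Tub}_{w+\gamma}(\mathcal{M})$ and that $S$ is $\gamma/2$-dense, hence also $(w+\gamma)$-dense, in $\mathcal{M}$. Combining $S\subset {\rm Tub}_{w+\gamma}(\mathcal{M})$ with hypothesis (a), which says $w+\gamma<(\sqrt{9}-\sqrt{8})\tau$, places $S$ in the tubular neighborhood of radius $(\sqrt{9}-\sqrt{8})\tau$ required by the reconstruction theorem. At this point every hypothesis of the reconstruction theorem, with effective density parameter $r=w+\gamma$, is in force.

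Second, I would apply the reconstruction theorem itself: for any finite $S\subset {\rm Tub}_r(\mathcal{M})$ that is $r$-dense in $\mathcal{M}$ with $r<(\sqrt{9}-\sqrt{8})\tau$, and for every $\epsilon$ lying between the two roots of the quadratic $\epsilon^2-(r+\tau)\epsilon+2\tau r$, the union $U=\bigcup_{\mathbf{x}_i\in S} B_\epsilon(\mathbf{x}_i)$ deformation retracts onto $\mathcal{M}$. Substituting $r=w+\gamma$ yields exactly the open interval displayed in condition (c); moreover the discriminant $(w+\gamma)^2+\tau^2-6\tau(w+\gamma)$ is strictly positive because (a) forces $w+\gamma<(3-2\sqrt{2})\tau$, which is precisely the smaller root of that discriminant considered as a quadratic in $w+\gamma$. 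Hence the interval in (c) is nonempty and every $\epsilon$ in it gives $U\simeq \mathcal{M}$.

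Third, I would close with the Nerve Lemma. The balls $B_\epsilon(\mathbf{x}_i)$ are convex in Euclidean space, so every nonempty finite intersection among them is contractible, and the nerve of the cover $\{B_\epsilon(\mathbf{x}_i)\}_{\mathbf{x}_i\in S}$ is homotopy equivalent to $U$. That nerve is precisely the $(\epsilon,\gamma)$-L\v{C} complex: clause (a) of Definition \ref{LVRDef} is the nerve intersection condition, while clause (b) is the definition of the vertex set $S$ as the subset of $\mathcal{D}^0$ witnessed by $\mathcal{D}^1$. Composing the two homotopy equivalences yields $(\epsilon,\gamma)$-L\v{C}$\simeq U\simeq\mathcal{M}$. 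The main obstacle is really the bookkeeping in step two, namely matching the quadratic produced by the Niyogi--Smale--Weinberger covering argument to the explicit interval of condition (c), and verifying that the nonnegativity of the discriminant is exactly the bound imposed by (a); everything else is routine since both the reconstruction theorem and the Nerve Lemma are invoked as black boxes.
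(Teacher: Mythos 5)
Your proposal matches the paper's own argument: it invokes Lemma~\ref{rdense} to place $S$ inside ${\rm Tub}_{w+\gamma}(\mathcal{M})\subset{\rm Tub}_{(\sqrt{9}-\sqrt{8})\tau}(\mathcal{M})$ with the required density, applies the Niyogi--Smale--Weinberger reconstruction theorem with $r=w+\gamma$ to get $U=\bigcup_{\mathbf{x}_i\in S}B_\epsilon(\mathbf{x}_i)$ deformation retracting to $\mathcal{M}$, and identifies the L\v{C} complex with the nerve of this cover. This is essentially the same route as the paper, with your check that condition (a) makes the discriminant positive being a welcome extra detail rather than a divergence.
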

A pictorial description of relations between $w$, $\gamma$ and $\tau$ is in Figure~\ref{ManifoldViz}. In the stylized example in Figure~\ref{ManifoldViz}, $\mathcal{M}$ is a circle, ${\rm Tub}_{w+\gamma}(\mathcal{M})$ is an annulus and the radius $\epsilon$ of the covering ball $B_\epsilon(\mathbf{x})$ is constrained by $\tau$. 
\begin{figure}[h]
\vspace{-0.2cm}
\centering
 \includegraphics[width=1\linewidth]{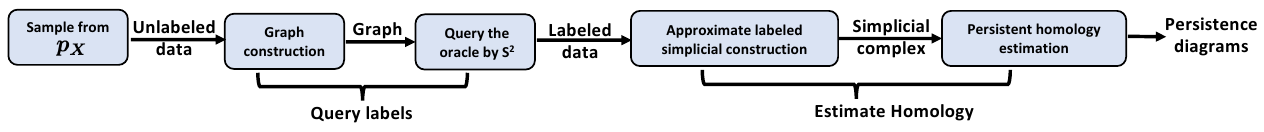}
\caption{\footnotesize{The proposed active learning framework for finding the homology of decision boundaries}.}
\vspace{-0.6cm}
\label{framework}
\end{figure}
\section{Active Learning for Finding the Homology of Decision Boundaries}
\label{Framework}
As the definitions above and results from \cite{ramamurthy2018topological} make clear, constructing a useful L\v{C} complex requires sampling both class-conditional distributions in the region around the decision boundary to a sufficient resolution. The key insight of our paper is to devise a framework based on active learning that sequentially and adaptively decides where to obtain data and therefore query-efficiently samples points near the decision boundary. In what follows, we will provide a brief description of our algorithm, and then we will establish rigorous theoretical guarantees on the query complexity of the proposed algorithm. 
\subsection{The Active Learning Algorithm}
A schematic diagram of the proposed active learning framework is presented in Figure~\ref{framework}. As illustrated, the framework starts from sampling sufficient unlabelled data from $p_X$. Subsequently, the framework takes as input an unlabeled dataset $\mathcal{D}$, and this dataset is used to generate an appropriate graph on the data. This graph is then used to iteratively query labels near the decision boundary. The subset of labeled samples are used to estimate the homology, resulting in the persistence diagram of the L\v{C} complex. We briefly outline the label query and homology estimation phases below, and refer the reader to the appendices for the full details. 

\noindent\textbf{Label query phase:} The label query phase starts with constructing a graph $G=(\mathcal{D}, E)$ from the unlabeled dataset $\mathcal{D}$. While other choices are possible, we will suppose that the graph we construct is either a  $k$-radius near neighbor or a $k$-nearest neighbors graph\footnote{The $k$-radius near neighbor graph connects all pairs of vertices that are a distance of at most $k$ away, and the $k$-nearest neighbor graph  connects a vertex to its $k$ nearest neighbors}. After graph construction, a graph-based active learning algorithm ($S^2$) path~\cite{dasarathy2015s2} accepts the graph $G=(\mathcal{D},E)$ and selects the data points whose labels it would like to see. This selection is based on the structure of the graph and all previous gathered labels. Specifically, $S^2$ continually queries for the label of the vertex that bisects the shortest path between any pair of oppositely-labeled vertices. The authors in~\cite{dasarathy2015s2} show that S$^2$ provably query and efficiently locate the cut-set in this graph (i.e., the edges of the graph that have oppositely labeled vertices). As a result, the query phase outputs a set $\tilde{\mathcal{D}}$ associated with the labels that is near the decision boundary.
\\\\
\textbf{Homology estimation phase:} During the homology estimation stage, we construct an approximation of the L\v{C} complex from the query set $\tilde{\mathcal{D}}$.
Specifically, we construct the locally scaled labeled Vietoris-Rips (LS-LVR) complex introduced in~\cite{ramamurthy2018topological}. Sticking to the query set $\tilde{\mathcal{D}}$ as an example, there are two steps to construct the LS-LVR complex: (1) Generate an initial graph from $\tilde{\mathcal{D}}$ by creating an edge set $\tilde{E}$ as follows: $\tilde{E}=\{\{\mathbf{x}_i, \mathbf{x}_j\}|(\mathbf{x}_i, \mathbf{x}_j)\in\tilde{\mathcal{D}}^2 \wedge y_i\neq y_j\wedge \norm{\mathbf{x}_i -\mathbf{x}_j} \leq\kappa\sqrt{\rho_i\rho_j}\}$. Here, $\kappa$ is a scale parameter, $\rho_i$ is the smallest radius of a sphere centered at $\mathbf{x}_i$ to enclose $k$-nearest opposite class neighbors and $\rho_j$ has a similar definition. This creates a bipartite graph where every edge connect points of opposite class; (2) Connect all 2-hop neighbors to build a simplicial complex. Varying scale parameter $\kappa$ produces a filtration of the simplicial complex and generates the persistent homology such as the persistent diagrams.

\subsection{Query Complexity of the Active Learning Algorithm}
\label{Theorysec}
Let $G=(\mathcal{D},E)$ denote a $k$-radius neighbor graph constructed from the dataset $\mathcal{D}$. This allows us to define the cut-set $C=\{(\mathbf{x}_i, \mathbf{x}_j)|y_i\neq y_j\wedge(\mathbf{x}_i,\mathbf{x}_j)\in E\}$ and cut-boundary $\partial C=\{\mathbf{x}\in V:\exists e\in C \textrm{ with } \mathbf{x}\in e\}$. We begin by sketching a structural lemma about the graph $G$ and refer the reader to the Appendix~\ref{ActiveTheorySupp} for a full statement and proof. 
\begin{lemma}
Suppose $\mathcal{D}^0$ and $\mathcal{D}^1$ are $ \frac{\gamma}{2}$-dense in $\mathcal{M}$, then the graph $G=(\mathcal{D},E)$ constructed from $\mathcal{D}$ is such that $\mathcal{D}^0 \bigcap \partial C$ and $\mathcal{D}^1 \bigcap \partial C$ are both $\frac{\gamma}{2}$-dense in $\mathcal{M}$ and $\partial C\subseteq {\rm Tub}_{w+\gamma}(\mathcal{M})$ for $k=\gamma$.
\label{Graph}
\end{lemma}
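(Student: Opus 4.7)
The plan is to reduce the statement to Lemma~\ref{rdense} by recognizing that $\mathcal{D}^0\cap\partial C$ (resp.\ $\mathcal{D}^1\cap\partial C$) is exactly an instance of the witnessed set $S$ from Definition~\ref{LVRDef}. Because $G$ is the $k$-radius neighbor graph with $k=\gamma$, a vertex $\mathbf{x}_i\in\mathcal{D}^0$ lies in $\partial C$ if and only if there is some $\mathbf{x}_j\in\mathcal{D}^1$ with $\norm{\mathbf{x}_i-\mathbf{x}_j}_2\leq\gamma$, which is precisely witness condition (b) of Definition~\ref{LVRDef}. Thus Lemma~\ref{rdense} applies verbatim with $S=\mathcal{D}^0\cap\partial C$ and yields both its $\frac{\gamma}{2}$-density in $\mathcal{M}$ and its containment in ${\rm Tub}_{w+\gamma}(\mathcal{M})$; the symmetric argument (swap the roles of the two classes, so $\mathcal{D}^0$ serves as the reference set) handles $\mathcal{D}^1\cap\partial C$.

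To make the density half self-contained in the graph language, I would fix $\mathbf{p}\in\mathcal{M}$ and use the $\frac{\gamma}{2}$-density of $\mathcal{D}^0$ and $\mathcal{D}^1$ in $\mathcal{M}$ to produce $\mathbf{x}_i\in\mathcal{D}^0\cap B_{\gamma/2}(\mathbf{p})$ and $\mathbf{x}_j\in\mathcal{D}^1\cap B_{\gamma/2}(\mathbf{p})$. By the triangle inequality, $\norm{\mathbf{x}_i-\mathbf{x}_j}_2<\gamma=k$, so $(\mathbf{x}_i,\mathbf{x}_j)\in E$; since the endpoints carry opposite labels, $(\mathbf{x}_i,\mathbf{x}_j)\in C$, and therefore $\mathbf{x}_i\in\mathcal{D}^0\cap\partial C$ and $\mathbf{x}_j\in\mathcal{D}^1\cap\partial C$ each lie within $\frac{\gamma}{2}$ of $\mathbf{p}$. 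The containment $\partial C\subseteq {\rm Tub}_{w+\gamma}(\mathcal{M})$ then follows from the decomposition $\partial C=(\mathcal{D}^0\cap\partial C)\cup(\mathcal{D}^1\cap\partial C)$ and the two tubular containments obtained from Lemma~\ref{rdense} applied on each side.

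The main obstacle I anticipate is the tubular-containment half rather than the density half. The density claim is a one-line triangle-inequality argument, but placing a cut-boundary vertex within $w+\gamma$ of $\mathcal{M}$ relies on the structural fact that $\mathfrak{D}\subseteq {\rm Tub}_w(\mathcal{M})$ together with the observation that an opposite-label pair at distance $\leq\gamma$ cannot avoid the overlap region $\mathfrak{D}$. This is exactly what Lemma~\ref{rdense} packages for us, so the strategy is to cite it rather than reprove that step. If one were to unpack it, the key idea is that for $\mathbf{x}\in\partial C$ with opposite-class neighbor $\mathbf{x}'\in B_\gamma(\mathbf{x})$, some point on a continuous path from $\mathbf{x}$ to $\mathbf{x}'$ must lie in $\mathfrak{D}\subseteq {\rm Tub}_w(\mathcal{M})$, whence the triangle inequality yields $\mathbf{x}\in {\rm Tub}_{w+\gamma}(\mathcal{M})$.
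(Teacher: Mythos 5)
Your proposal is correct and follows essentially the same route as the paper, which likewise justifies the density claim by the triangle inequality applied to points of $\mathcal{D}^0\cap B_{\gamma/2}(\mathbf{p})$ and $\mathcal{D}^1\cap B_{\gamma/2}(\mathbf{p})$ (these form a cut-edge once $k=\gamma$) and obtains $\partial C\subseteq {\rm Tub}_{w+\gamma}(\mathcal{M})$ by appealing to the same mechanism as Lemma~\ref{rdense}. Your explicit identification of $\mathcal{D}^0\cap\partial C$ (and, by symmetry, $\mathcal{D}^1\cap\partial C$) with the witnessed set $S$ of Definition~\ref{LVRDef} is just a cleaner packaging of the argument the paper sketches informally.
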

$\mathcal{D}^0$ and $\mathcal{D}^1$ being $\frac{\gamma}{2}$-dense in $\mathcal{M}$ indicates the longest distance between $\mathbf{x}_i\in \mathfrak{D}^0\bigcap B_{\frac{\gamma}{2}}(\mathbf{p})$ and $\mathbf{x}_j\in \mathfrak{D}^0\bigcap B_{\frac{\gamma}{2}}(\mathbf{p})$ for $\mathbf{p}\in\mathcal{M}$ is $\gamma$.   Therefore, letting $k=\gamma$ as Lemma~\ref{Graph} suggested results in both $\mathcal{D}^0\bigcap\partial C$ and $\mathcal{D}^1\bigcap\partial C$ being $\frac{\gamma}{2}$-dense in $\mathcal{M}$. Similar to Lemma~\ref{rdense}, constructing a graph with a $\gamma$ radius inevitably results in a subset of points in $\partial C$ leaking out of ${\rm Tub}_w(\mathcal{M})$ and we formally have $\partial C\subseteq {\rm Tub}_{w + \gamma}(\mathcal{M})$. The key intuition behind our approach is that $S^2$ is naturally turned to focusing the labels acquired within ${\rm Tub}_{w + \gamma}(\mathcal{M})$.  As we show below, this is done in a remarkably query efficient manner, and furthermore, when we obtain labeled data via querying we can construct an L\v{C} complex; this allows us to find the homology of the manifold $\mathcal{M}$. We next need some structural assumptions about the manifold $\mathcal{M}$.
\begin{assumption}
 \textbf{(a)} $\inf_{\mathbf{x}\in\mathcal{M}}\mu_{\mathcal{X}|y}(B_{\gamma/4}(\mathbf{x}))>\rho_{\gamma/4}^y, y\in \{0,1\}$. 
 \textbf{(b)} $\sup_{\mathbf{x}\in\mathcal{M}}\mu_{\mathcal{X}}(B_{(w + \gamma)}(\mathbf{x}))<h_{(w + \gamma)} $.
 \textbf{(c)} $\mu_\mathcal{X}({\rm Tub}_{w + \gamma}(\mathcal{M}))\leq N_{w + \gamma}h_{w + \gamma}$.
 \label{manifoldprob}
\end{assumption}
 Assumption~\ref{manifoldprob}(a) ensures sufficient mass in both classes such that $\mathcal{D}^0$ and $\mathcal{D}^1$ are $\frac{\gamma}{2}$-dense in $\mathcal{M}$. Assumption~\ref{manifoldprob}(b)(c) upper-bounds the measure of ${\rm Tub}_{w+\gamma}(\mathcal{M})$. Recall that $G=(\mathcal{D},E)$ in Lemma~\ref{Graph} is a labeled graph; we further write $\beta$ to denote the proportion of the smallest connected component with all the examples identically labeled. We lay out our main theorem as follows
\begin{theorem}
Let $N_{w+\gamma}$ be the covering number of the manifold $\mathcal{M}$. Under Assumptions~\ref{TauCond}~\ref{ManifoldProperties1} and~\ref{manifoldprob}, for any $\delta>0$, we have that the $(\epsilon, \gamma)$-L\v{C} complex estimated by our framework is homotopy equivalent to $\mathcal{M}$  with probability at least $1- \delta$ provided 
\begin{align}
    &|\tilde{\mathcal{D}}|> \frac{\log\left\{1/\left[\beta\left(1-\sqrt{1-\delta}\right)\right]\right\}}{\log\left[1/(1-\beta)\right]} + |\mathcal{D}|N_{w + \gamma}h_{w + \gamma}(\lceil \log_2|\mathcal{D}|\rceil + 1)
    \label{s2Deq}
\end{align}
where
\begin{align}
    \begin{split}
    &|\mathcal{D}|> \max\left\{\frac{1}{P(y=0)\rho_{\gamma/4}^0}\left[\log\left(2N_{\gamma/4}\right) + \log\left(\frac{1}{(1-\sqrt{1-\delta})}\right)\right],\right.\\
    &\left.\frac{1}{P(y=1)\rho_{\gamma/4}^1}\left[\log\left(2N_{\gamma/4}\right) + \log\left(\frac{1}{(1-\sqrt{1-\delta})}\right)\right]\right\}
    \end{split}
\label{SamplePassive}.
\end{align}
\label{s2D}
\end{theorem}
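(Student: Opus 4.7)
The plan is to split the failure event into two parts and make each fail with probability at most $1-\sqrt{1-\delta}$, so that a union bound yields the overall success probability $1-\delta$. The first part controls the raw unlabeled sample $\mathcal{D}\sim p_X$ and produces the bound \eqref{SamplePassive}; the second controls the $S^2$-based active labeling phase and produces the additional budget for $|\tilde{\mathcal{D}}|$ in \eqref{s2Deq}. Once both events hold, Lemma~\ref{Graph} and Proposition~\ref{sufficient2} combine directly to give the required homotopy equivalence.

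For the density bound \eqref{SamplePassive}, I would fix a minimum $\gamma/4$-cover of $\mathcal{M}$ of size $N_{\gamma/4}$. Assumption~\ref{manifoldprob}(a) gives, uniformly in the center $\mathbf{x}\in\mathcal{M}$, a lower bound $P(y{=}t)\rho_{\gamma/4}^t$ on the joint probability that an i.i.d.\ sample lands in $B_{\gamma/4}(\mathbf{x})$ with label $t\in\{0,1\}$. Hence a fixed cover-ball receives no class-$t$ sample with probability at most $\exp(-|\mathcal{D}|P(y{=}t)\rho_{\gamma/4}^t)$; union-bounding over the $N_{\gamma/4}$ cover centers and the two labels, and setting the result below $1-\sqrt{1-\delta}$, yields the claimed lower bound on $|\mathcal{D}|$. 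A triangle-inequality argument then promotes ``every cover ball of radius $\gamma/4$ contains a class-$t$ sample'' to ``$\mathcal{D}^t$ is $\gamma/2$-dense in $\mathcal{M}$'' for both $t\in\{0,1\}$.

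For the query bound \eqref{s2Deq}, I would invoke the $S^2$ guarantee on the $k=\gamma$ radius neighbor graph $G=(\mathcal{D},E)$. The first summand accounts for the warm-up random phase, which queries uniform vertices until an oppositely-labeled pair is found: the probability of missing the smallest uniformly-labeled component of proportion $\beta$ after $m$ queries is at most $(1-\beta)^m$, and requiring $(1-\beta)^m\leq \beta(1-\sqrt{1-\delta})$ (the form that arises in the $S^2$ analysis) solves to exactly the first summand. The second summand absorbs the bisection phase, which is known to use at most $|\partial C|(\lceil \log_2|\mathcal{D}|\rceil + 1)$ queries; by Lemma~\ref{Graph} we have $\partial C\subseteq {\rm Tub}_{w+\gamma}(\mathcal{M})$, and Assumption~\ref{manifoldprob}(c) bounds the $p_X$-mass of this tube by $N_{w+\gamma}h_{w+\gamma}$, which I would use to bound $|\partial C|$ by $|\mathcal{D}|N_{w+\gamma}h_{w+\gamma}$.

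The main subtlety I expect is this last step, since $|\mathcal{D}|N_{w+\gamma}h_{w+\gamma}$ is really an expected cardinality; converting it into a deterministic or high-probability upper bound on $|\partial C|$ without spending extra failure budget needs either a concentration argument (Chernoff on a binomial with mean at most $|\mathcal{D}|N_{w+\gamma}h_{w+\gamma}$) or a worst-case covering argument bounding how many $\mathcal{D}$-samples can simultaneously sit inside ${\rm Tub}_{w+\gamma}(\mathcal{M})$. After that step, when $\tilde{\mathcal{D}}\supseteq\partial C$ (the event that $S^2$ succeeds), Lemma~\ref{Graph} forces $\tilde{\mathcal{D}}^0$ and $\tilde{\mathcal{D}}^1$ to still be $\gamma/2$-dense in $\mathcal{M}$, so Assumptions~\ref{TauCond}, \ref{ManifoldProperties1} and the premises of Proposition~\ref{sufficient2} are all in force and the homotopy equivalence of the $(\epsilon,\gamma)$-L\v{C} complex with $\mathcal{M}$ follows immediately.
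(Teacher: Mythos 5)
Your proposal follows essentially the same route as the paper's proof: condition on the event $E_a$ that $\mathcal{D}^0$ and $\mathcal{D}^1$ are $\gamma/2$-dense in $\mathcal{M}$ (giving Eq.~\eqref{SamplePassive} via the passive-learning result of Ramamurthy et al., which your cover-plus-union-bound derivation reproduces), then invoke the $S^2$ guarantee on the $k=\gamma$ graph with $|\partial C|$ controlled through Lemma~\ref{Graph} and Assumption~\ref{manifoldprob}(b)(c) (giving Eq.~\eqref{s2Deq}), calibrating each stage to $\sqrt{1-\delta}$ before applying Proposition~\ref{sufficient2}. The only deviations are minor: the paper combines the two stages multiplicatively, $P(E_b)=P(E_b\mid E_a)P(E_a)\ge\left(\sqrt{1-\delta}\right)^2=1-\delta$ using $P(E_b\mid\overline{E_a})=0$, which is what makes the $\sqrt{1-\delta}$ calibration come out exactly (a union bound would only give failure probability $2\left(1-\sqrt{1-\delta}\right)>\delta$), and it asserts $|\partial C|\le|\mathcal{D}|N_{w+\gamma}h_{w+\gamma}$ directly from the tube-measure assumption, without the expectation-versus-realization concentration step you rightly flag as a subtlety.
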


\begin{remark}
Theorem~\ref{s2D} demonstrates that our active learning framework has a query complexity of  $\mathcal{O}(NN_{w+\gamma}h_{w+\gamma}log_2N)$. That is, after $\mathcal{O}(NN_{w+\gamma}h_{w+\gamma}log_2N)$ queries at most, a $(\epsilon, \gamma)-L\check{C}$ complex constructed from the queried examples will be homotopy equivalent to $\mathcal{M}$ with high probability. Notice that the intrinsic complexity of the manifold naturally plays a significant role, and the more complex the manifold the more significant gains the active learning framework has over its passive counterpart (cf.~Eq.~\ref{SamplePassive}). In the Appendix~\ref{NumaricialSupp}, we also provide a simple and concrete example that numerically shows the improvement in query complexity associated with our proposed framework relative to its passive counterpart.
\label{complexityremark}
\end{remark}

\begin{remark}
The results of Theorem~\ref{s2D} can be improved by carrying out a more intricate analysis of the active learning algorithm as in \cite{dasarathy2015s2}. Indeed, one may also replace the $S^2$ algorithm in our framework with a different graph-based active learning algorithm seamlessly to leverage the properties of that algorithm for active homology estimation of decision boundaries. These, and the relaxation of Assumption~\ref{ManifoldProperties1} , are promising directions for future work.
\label{s2remark}
\end{remark}
 \begin{remark}
Parameters $w$ and $\tau$ are intrinsic properties of $p_{xy}$ and $\mathcal{M}$ and these properties are fixed to a classification problem. Variables $\gamma$ and $\epsilon$ are algorithm variables and they are bounded as stated in Proposition~\ref{sufficient2}.
\label{intrinsicproperties}
\vspace{-0.4cm}
\end{remark}
\begin{wrapfigure}{h}{0.4\textwidth}
\vspace{-0.5cm}
\centering
 \includegraphics[width=1\linewidth]{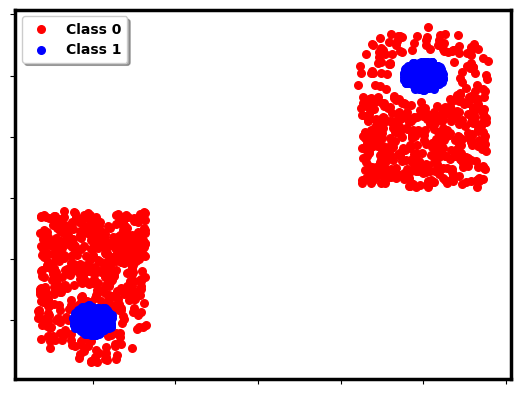}\\
\caption{\footnotesize{Visualization of the synthetic data}.}
\label{syn}
\vspace{-0.5cm}
\end{wrapfigure}
We provide a complete proof of Theorem~\ref{s2D} in the Appendix~\ref{ActiveTheorySupp}. However, we will provide some intuition about the operation of our algorithm, and hence to the proof of the theorem here. 

The $S^2$ algorithm is split into two phases: uniform querying of labels and querying via path bisection. The uniform querying serves to finding a path connecting vertices of opposite labels. The path bisection phase queries at the mid-point of the shortest path that connects oppositely labeled vertices in the underlying graph. As the authors in \cite{dasarathy2015s2} show, this endows $S^2$ with the ability to quickly narrow in on the cut-boundary $\partial C$. The uniform querying phase accounts for the first term in Eq.~\ref{s2Deq}, which guarantees that there are sufficient paths to identify $\partial C$ completely. During the path bisection phase, we take $\left(\lceil \log_2|\mathcal{D}| \rceil + 1\right)$  queries at most (this may be tightened using the techniques in \cite{dasarathy2015s2}) to find the end point of the cut-edge inside a path; this needs to be done at most $|\partial C|$ to complete the querying phase. 
Next, with the Assumption~\ref{ManifoldProperties1} and the Lemma~\ref{Graph}, it is guaranteed that $\partial C\subseteq {\rm Tub}_{w+\gamma}(\mathcal{M})\subset {\rm Tub}_{(\sqrt{9}-\sqrt{8})\tau}(\mathcal{M})$ with $\gamma$ properly selected following proposition~\ref{sufficient2}. Therefore, we may use the measure $N_{w+\gamma}h_{w + \gamma}$ from Assumption~\ref{manifoldprob}(b)(c) to upper-bound $|\partial C|$ which results in the second term of Eq.~\ref{s2Deq}. This naturally ties the query complexity to the manifold complexity via  $N_{w + \gamma}$ and ${\rm Tub}_{w + \gamma}(\mathcal{M})$. Eq.~\ref{SamplePassive} comes from the necessary condition for the L\v{C} complex being homotopy equivalent to $\mathcal{M}$, following along the lines of~\cite{ramamurthy2018topological}. 

\section{Experimental Results}
We compare the homological properties estimated from our active learning algorithm to a passive learning approach  on both synthetic data and real data. 

In the experiments we use the characteristics of homology group of dimension $1$ ($\beta_1$, PD$_1$). We chose to use dimension $1$ since \cite{ramamurthy2018topological} shows that this provides the best topological summaries for applications related to model selection. We have the performance evaluation for using the characteristics of homology group of dimension 0 ($\beta_0$, PD$_0$) presented in the Appendix~\ref{CompExpSupp}. 

Using the synthetic data, we study the query complexity of active learning by examining the homological summaries $\beta_1$ and PD$_1$. For real data, we estimate PD$_1$ of the Banknote, MNIST and CIFAR10 and then utilize PD$_1$ to do model selection from several families of classifiers. 
\begin{wrapfigure}{t}{0.4\textwidth}
\begin{subfigure}{1\linewidth}
  \centering
  \includegraphics[width=1\linewidth]{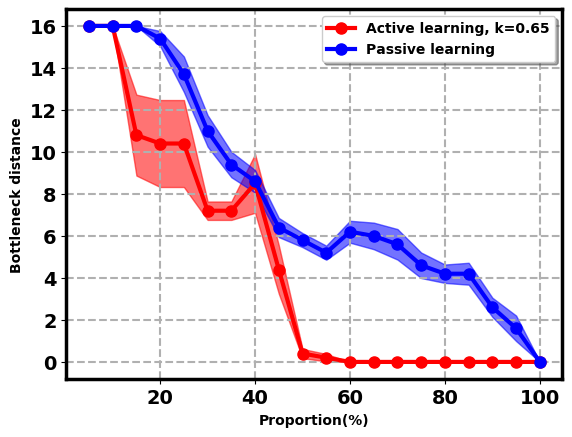}
\end{subfigure}
\caption{\footnotesize{Bottleneck distance from ground-truth PD$_1$ by the passive learning and active learning.}}
\label{Synbottleneck}
\vspace{-1.5cm}
\end{wrapfigure}
\subsection{Experiments on Synthetic Data}
The synthetic data in Figure~\ref{syn} has decision boundaries that are homeomorphic to two disjoint circles. This dataset has 2000 examples. Clearly from Figure~\ref{syn}, $\beta_1$ of the decision boundary is two. 

Per the first step of our active learning algorithm, we construct a $k$-radius NN graph with $k=0.65$. The scale parameter is set assuming we have full knowledge of the decision boundary manifold. Subsequently, we use $S^2$ to query the labels of examples on the created graph. After the label query phase, we construct the LS-LVR complex with the queried samples and compute $\beta_1$ and PD$_1$ using the Ripser package ~\cite{1908.02518} and its interface~\cite{scikittda2019}. For the passive learning baseline, we uniformly query the examples with all other aspects of the experiment remaining identical to the active case. We also compute $\beta_1$ and PD$_1$ from the complete dataset and consider them as the ``ground-truth" homology summaries. We evaluate the similarities between the estimated homology summaries and the ground-truth homology summaries to show the effectiveness of our active learning framework.
\begin{wrapfigure}{r}{0.6\textwidth}
\centering
\begin{subfigure}{.19\linewidth}
  \centering
  \captionsetup{justification=centering}
  \caption*{5\%}
 \vspace{-0.3cm}
  \includegraphics[width=1\linewidth]{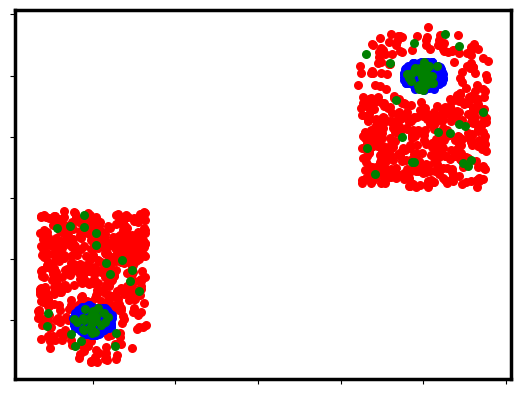}
\end{subfigure}
\begin{subfigure}{.19\linewidth}
  \centering
  \captionsetup{justification=centering}
  \caption*{15\%}
  \vspace{-0.3cm}
  \includegraphics[width=1\linewidth]{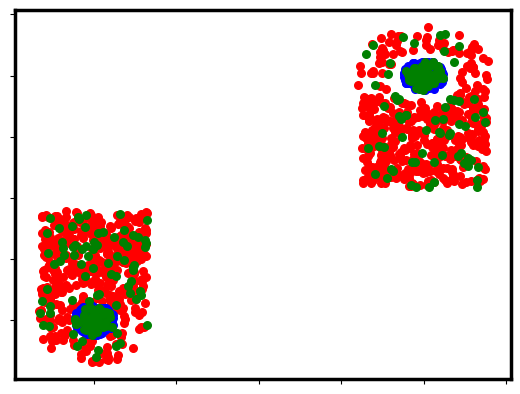}
\end{subfigure}
\begin{subfigure}{.19\linewidth}
  \centering
  \captionsetup{justification=centering}
  \caption*{25\%}
  \vspace{-0.3cm}
  \includegraphics[width=1\linewidth]{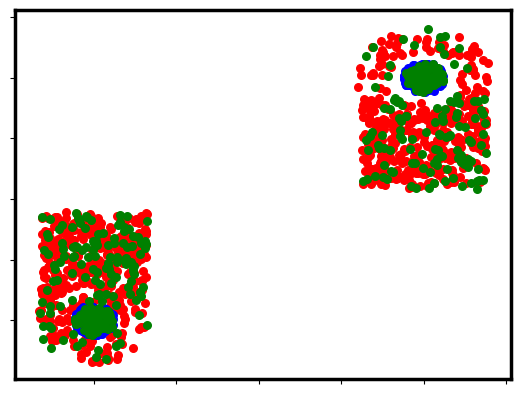}
\end{subfigure}
\begin{subfigure}{.19\linewidth}
  \centering
  \captionsetup{justification=centering}
  \caption*{35\%}
  \vspace{-0.3cm}
  \includegraphics[width=1\linewidth]{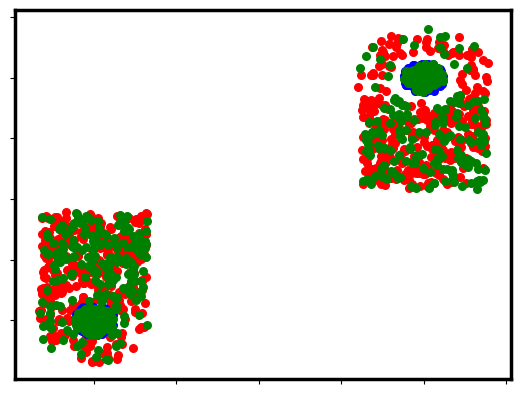}
\end{subfigure}
\begin{subfigure}{.19\linewidth}
  \centering
  \captionsetup{justification=centering}
  \caption*{45\%}
  \vspace{-0.3cm}
  \includegraphics[width=1\linewidth]{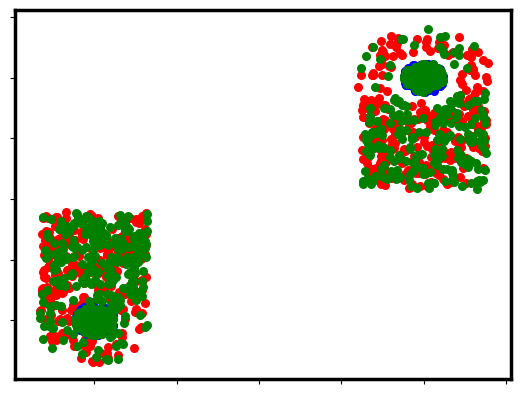}
\end{subfigure}
\\
\begin{subfigure}{.19\linewidth}
\captionsetup{justification=centering}
  \centering
  \includegraphics[width=1\linewidth]{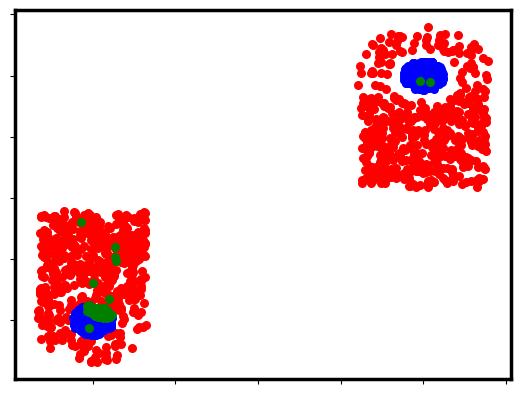}
\end{subfigure}
\begin{subfigure}{.19\linewidth}
\captionsetup{justification=centering}
  \centering
  \includegraphics[width=1\linewidth]{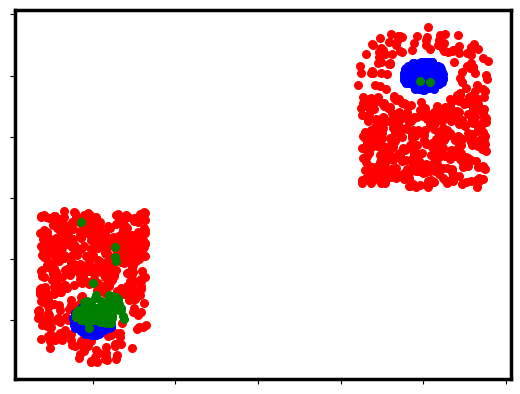}
\end{subfigure}
\begin{subfigure}{.19\linewidth}
\captionsetup{justification=centering}
  \centering
  \includegraphics[width=1\linewidth]{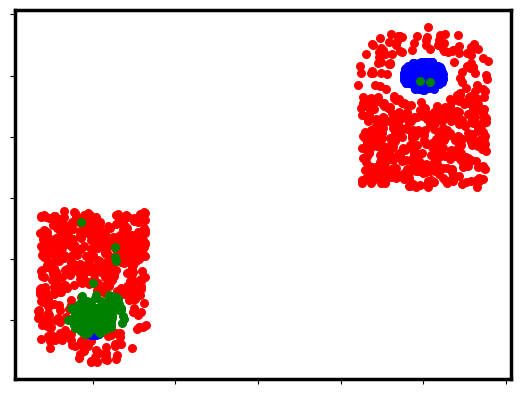}
\end{subfigure}
\begin{subfigure}{.19\linewidth}
\captionsetup{justification=centering}
  \centering
  \includegraphics[width=1\linewidth]{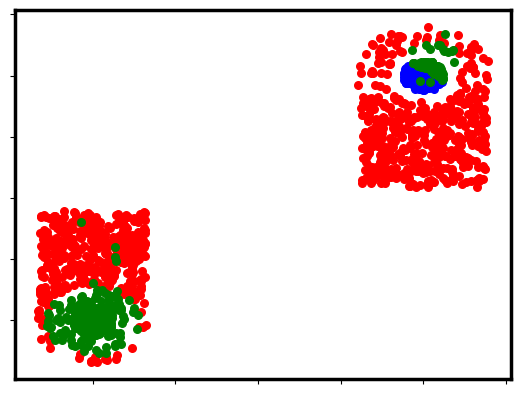}
\end{subfigure}
\begin{subfigure}{.19\linewidth}
\captionsetup{justification=centering}
  \centering
  \includegraphics[width=1\linewidth]{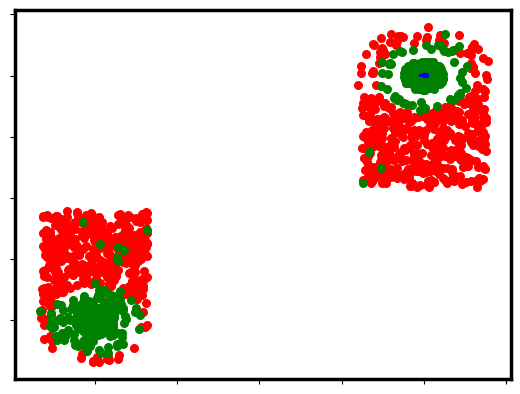}
\end{subfigure}
\caption{\footnotesize{Visualization of the query process by passive learning (top row) and our active learning framework (bottom row). More examples (highlighted by green) near the decision boundaries are selected to query in the proposed framework.}}
\label{QueryViz}
\vspace{-0.4cm}
\end{wrapfigure} 

We compare the bottleneck distance~\cite{efrat2001geometry,kerber2017geometry} between the ground-truth and estimated values of PD$_1$ for different percent of data labeled. These results are shown on Figure~\ref{Synbottleneck}. As is clear from the figure, the bottleneck distance for our active learning framework decreases faster than the passive learning approach and perfectly recovers the homology with only 50\% of data. A visualization of the query process is shown on Figure~\ref{QueryViz}. As expected, the active learning framework selects more examples to query near the decision. Please refer to the Appendix~\ref{CompExpSupp} to evaluate the performance of the active learning framework for different $k$-radius near neighbor graphs and $\beta_1$ recovery.
\subsection{Experiments on Real Data}
\begin{table*}[t]
\centering
\renewcommand\tabcolsep{4pt} 
\resizebox{0.85\textwidth}{!}{
\begin{tabular}{|c|cccc|}
\cline{2-5}
\multicolumn{1}{c|}{\textbf{Banknote}}&KNN&SVM&Neural network&Decision tree\\\hline
Passive&0.1072\rpm 0.0000&	0.3753\rpm	0.0005&	0.4316\rpm	0.0000&	0.1997\rpm	0.0000
\\
Active$^1$&0.0783\rpm 	0.0014&	0.3231\rpm	0.0012&	0.4316\rpm	0.0000&	0.1901\rpm	0.0004
\\
Active$^2$&0.1017\rpm 	0.0001&	0.3431\rpm	0.0012&	0.3730\rpm	0.0138&	0.1744\rpm	0.0026
\\
Active$^3$&\textbf{0.0346\rpm 	0.0013}&	\textbf{0.0836\rpm	0.0133}&	\textbf{0.1058\rpm	0.0265}&	\textbf{0.1613\rpm	0.0004}
\\\hline
Passive (ens)&0.0176\rpm	0.0000&	\textbf{0.0259\rpm	0.0000}&	0.0068\rpm	0.0000&	0.0741\rpm	0.0000
\\
Active$^1$ (ens) &0.0173\rpm	0.0000&	\textbf{0.0259\rpm	0.0000}&\textbf{0.0039\rpm	0.0000}&	\textbf{0.0731\rpm	0.0000}
\\
Active$^2$ (ens)&\textbf{0.0149\rpm	0.0000}&	\textbf{0.0259\rpm	0.0000}&	0.0134\rpm	0.0001&	\textbf{0.0731\rpm	0.0000}
\\
Active$^3$ (ens)&\textbf{0.0149\rpm0.0000}&	\textbf{0.0259\rpm	0.0000}&	0.0072\rpm	0.0000&	0.0770\rpm	0.0000\\\hline
\end{tabular}}
\resizebox{0.85\textwidth}{!}{
\begin{tabular}{|c|cccc|}
\cline{2-5}
\multicolumn{1}{c|}{\textbf{MNIST}}&KNN&SVM&Neural network &Decision tree\\\hline
Passive&0.0129\rpm	0.0000&	\textbf{0.0141\rpm	0.0000}&	0.0202\rpm	0.0000&	\textbf{0.0332\rpm	0.0000}
\\
Active$^1$&0.0128\rpm	0.0000&	0.0161\rpm	0.0001&	\textbf{0.0150\rpm	0.0000}&	0.0388\rpm	0.0001
\\
Active$^2$&0.0122\rpm	0.0000&	0.0162\rpm	0.0001&	0.0177\rpm	0.0000&	\textbf{0.0332\rpm	0.0000}
\\
Active$^3$&\textbf{0.0104\rpm	0.0000}&	0.0156\rpm	0.0001&	0.0388\rpm	0.0020&	\textbf{0.0332\rpm	0.0000}
\\\hline
Passive (ens)&0.0119	\rpm0.0000&	0.0124\rpm	0.0000&	\textbf{0.0104\rpm	0.0000}&	0.0290\rpm	0.0000
\\
Active$^1$ (ens)&0.0123\rpm	0.0000&\textbf{	0.0119\rpm	0.0000}&	\textbf{0.0104\rpm	0.0000}&	0.0284\rpm	0.0000\\
Active$^2$ (ens)&0.0108\rpm	0.0000&	\textbf{0.0119\rpm	0.0000}&	0.0125\rpm	0.0000&	0.0284\rpm	0.0000
\\
Active$^3$ (ens)&\textbf{0.0104\rpm	0.0000}&	\textbf{0.0119\rpm	0.0000}&	0.0127\rpm	0.0000&	\textbf{0.0274\rpm	0.0000}
\\\hline
\end{tabular}}
\resizebox{0.85\textwidth}{!}{
\begin{tabular}{|c|cccc|}
\cline{2-5}
\multicolumn{1}{c|}{\textbf{CIFAR10}}&KNN&SVM&Neural network&Decision tree\\\hline
Passive&\textbf{0.3065\rpm	0.0002}&	0.4683\rpm	0.0000&	0.3185\rpm	0.0000&	\textbf{0.3625\rpm	0.0000}
\\
Active$^1$&0.3201\rpm	0.0000&	0.4591\rpm	0.0005&\textbf{	0.3058\rpm	0.0006}&	\textbf{0.3625\rpm	0.0000}
\\
Active$^2$&0.3095\rpm	0.0001&\textbf{	0.4007\rpm	0.0038}&	\textbf{0.3058\rpm	0.0006}&	\textbf{0.3625\rpm	0.0000}
\\
Active$^3$&0.3109\rpm	0.0001&	0.4464	\rpm0.0005&	0.3185\rpm	0.0000&	\textbf{0.3625	\rpm0.0000}
\\\hline
Passive (ens)&0.2987	\rpm0.0001&		0.2698\rpm	0.0000&		0.2651\rpm	0.0001&		\textbf{0.3137\rpm	0.0002}
\\
Active$^1$ (ens) &0.2911	\rpm0.0001&		0.2797\rpm	0.0000&		\textbf{0.2558\rpm	0.0000}&		0.3146\rpm	0.0000
\\
Active$^2$ (ens)&0.2987\rpm	0.0001&		0.2864\rpm	0.0003&		0.2649\rpm	0.0001&		0.3214\rpm	0.0005
\\
Active$^3$ (ens)&\textbf{0.2935\rpm	0.0000}&		\textbf{0.2665\rpm	0.0000}&		0.2615\rpm	0.0001&		0.3221\rpm	0.0004
\\\hline
\end{tabular}}
\caption{\footnotesize{Average test error rates(five trials) on Banknote, MNIST and CIFAR10 for the model selected with 15\%  unlabeleld pool data. Passive/Active stands for the non-ensemble classifiers selected by the \textbf{PD$_1$} homological similarities. Passive/Active (ens) stands for the  classifiers ensembled from two classifiers: one is selected by the \textbf{PD$_1$} homological similarities and the other one is selected by the validation error. The subscript 1, 2 and 3 of the active learning indicates the used 3NN, 5NN and 7NN graphs. Best performance in the non-ensemble and ensemble cases are boldfaced.}}
\label{TestErr}
\vspace{-0.75cm}
\end{table*}
To demonstrate the effectiveness of our active learning framework on real data, we consider the classifier selection problem discussed in \cite{ramamurthy2018topological}. A bank of pretrained classifiers is accessible in the marketplace and customers select a proper one without changing the hyperparameters of the selected classifier. We consider two selection strategies as follow. One is topologically-based where the classifier with the smallest bottleneck distance from PD$_1$ of queried data is selected. The other one is to ensemble the topologically-selected classifier and the classifier selected based on the validation error of the queried data. We ensemble these two classifiers by averaging the output probabilities.
\begin{wrapfigure}{h}{0.6\textwidth}
\vspace{-0.3cm}
\centering
\begin{subfigure}{.24\linewidth}
  \centering
  \captionsetup{justification=centering}
  \caption*{\footnotesize{KNN}}
 \vspace{-0.3cm}
  \includegraphics[width=1\linewidth]{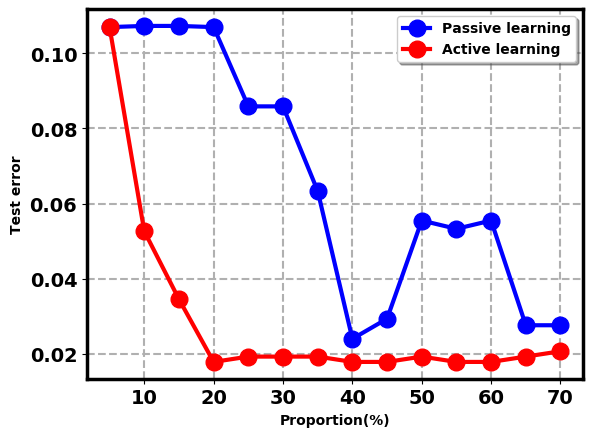}
\end{subfigure}
\begin{subfigure}{.24\linewidth}
  \centering
  \captionsetup{justification=centering}
  \caption*{\footnotesize{SVM}}
  \vspace{-0.3cm}
  \includegraphics[width=1\linewidth]{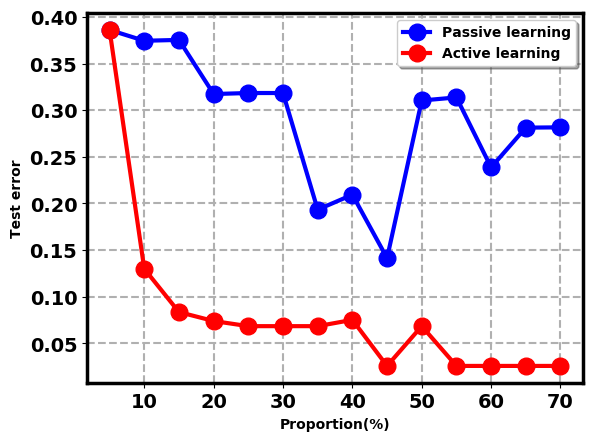}
\end{subfigure}
\begin{subfigure}{.24\linewidth}
  \centering
  \captionsetup{justification=centering}
  \caption*{\footnotesize{NN}}
  \vspace{-0.3cm}
  \includegraphics[width=1\linewidth]{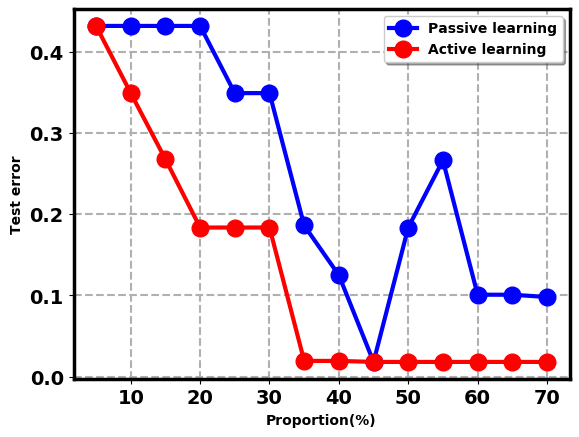}
\end{subfigure}
\begin{subfigure}{.24\linewidth}
  \centering
  \captionsetup{justification=centering}
  \caption*{\footnotesize{DT}}
  \vspace{-0.3cm}
  \includegraphics[width=1\linewidth]{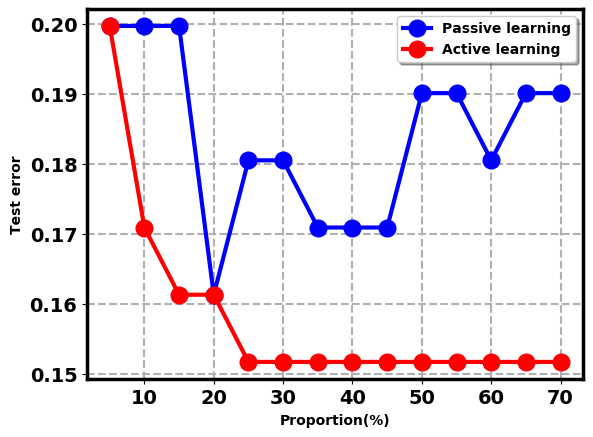}
\end{subfigure}\\
\centering
\begin{subfigure}{.24\linewidth}
  \centering
  \includegraphics[width=1\linewidth]{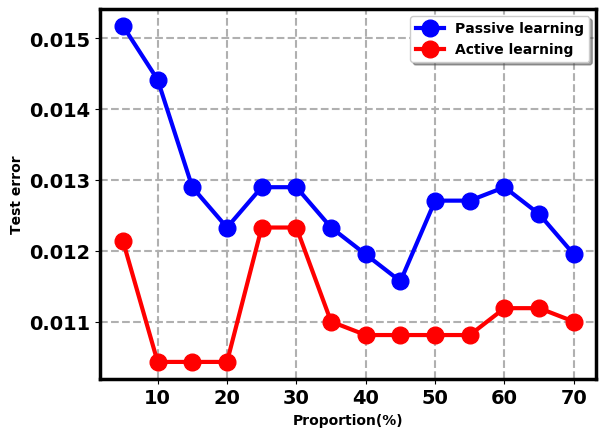}
\end{subfigure}
\begin{subfigure}{.24\linewidth}
  \centering
  \includegraphics[width=1\linewidth]{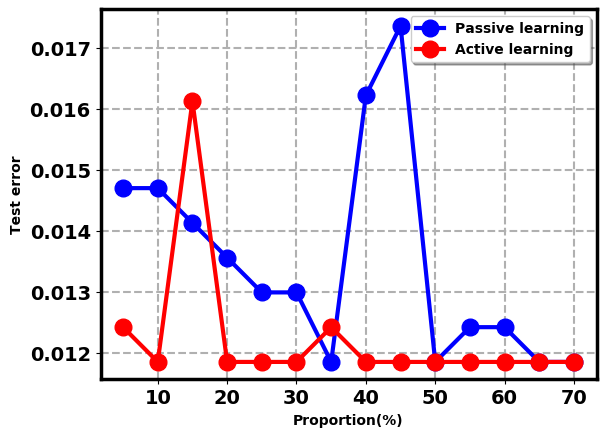}
\end{subfigure}
\begin{subfigure}{.24\linewidth}
  \centering
  \includegraphics[width=1\linewidth]{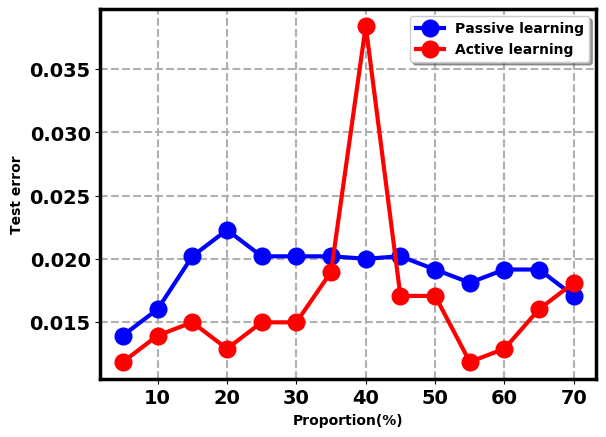}
\end{subfigure}
\begin{subfigure}{.24\linewidth}
  \centering
  \includegraphics[width=1\linewidth]{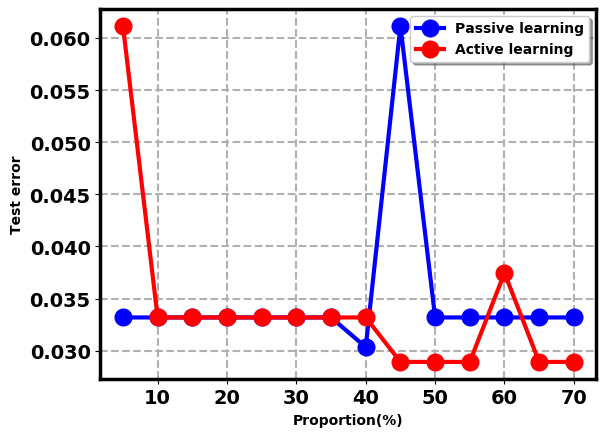}
\end{subfigure}\\
\begin{subfigure}{.24\linewidth}
  \centering
  \includegraphics[width=1\linewidth]{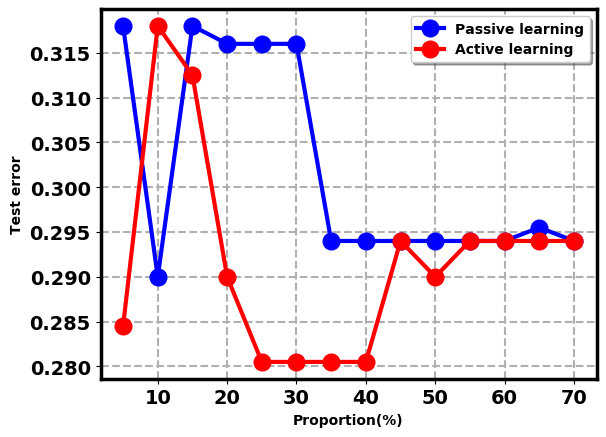}
\end{subfigure}
\begin{subfigure}{.24\linewidth}
  \centering
  \includegraphics[width=1\linewidth]{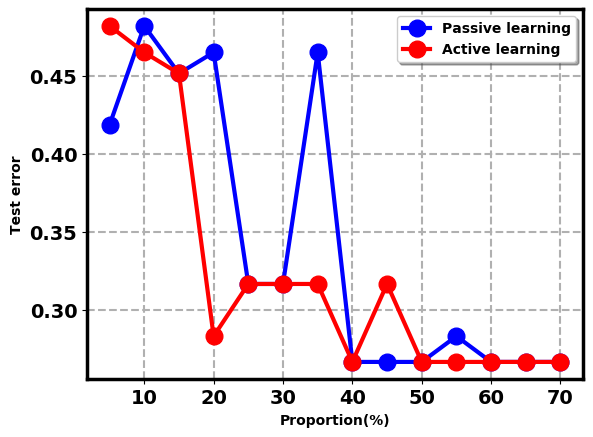}
\end{subfigure}
\begin{subfigure}{.24\linewidth}
  \centering
  \includegraphics[width=1\linewidth]{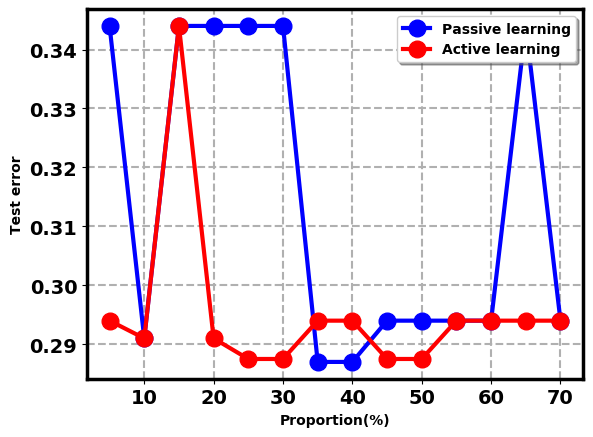}
\end{subfigure}
\begin{subfigure}{.24\linewidth}
  \centering
  \includegraphics[width=1\linewidth]{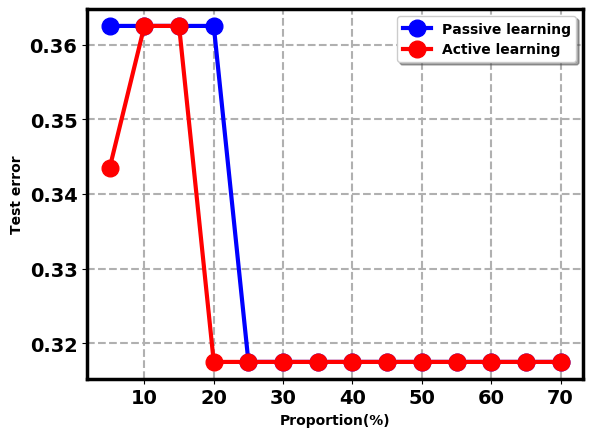}
\end{subfigure}
\caption{\footnotesize{Test errors as a function of proportions of queried data on banknote (top), MNIST (middle) and CIFAR10 (bottom) in the model selection (non-ensamble) for different classifier families.}}
\label{TestErrProgress}
\vspace{-0.8cm}
\end{wrapfigure}
We split the data to a training set, a test set and an unlabeled data pool. The training set is used to generate four different banks of classifiers: $k$-NN with $k$ ranging from 1 to 29, SVM with polynomial kernel function degree ranging from 1 to 14, decision tree with maximum depth ranging from 1 to 27, and neural networks with the number of layers ranging from 1 to 6. The test set is used to evaluate the test error of each classifier. The unlabelled data pool is used to evaluate our active learning algorithm via selective querying.

We use the proposed active learning framework to estimate the homological properties of the queried data: constructing a $k$-nearest neighbors graph, query examples by $S^2$ and computing the PD$_1$ with the queried examples. We set $k=$3, 5, and 7. For passive learning, we keep all the operations the same as the active learning framework except the queried examples are collected by uniform random sampling. To compute the PD of the decision boundary of the classifier, we simply use the test set input and the classifier output. Having estimated the homological summaries from the queried data and the classifiers, we compute the bottleneck distance between the PD$_1$ of the queried data and the classifiers. For the non-ensemble method, we simply select the classifier having the smallest bottleneck distance as a topologically-selected classifier. For the ensemble method, we further include an additional classifier selected based on the validation error computed from the queried data and ensemble it with the topologically-selected classifier.

We implement the above procedure and evaluate on  Banknote~\cite{Dua:2019}, MNIST~\cite{lecun1998gradient} and CIFAR10~\cite{krizhevsky2009learning} datasets. Banknote contains 1372 instances in two classes with four input features for a binary classification task. We randomly sample 100 examples to construct the training set. Given the small size of the Banknote dataset,  we use the remaining data as both the test set and unlabelled data pool. Although the test set and the data pool are not rigorously split in the Banknote case, it is still a fair comparison since the performance difference is only subject to the querying strategy. For the MNIST and the CIFAR10 datasets, we create (1) a 1 vs. 8 classification task from MNIST and (2) an automobile vs. ship classification task from CIFAR10. We randomly sample the data to create a training set with a sample size of 200, a test set with a sample size of 2000, and an unlabelled data pool with a sample size of 2000. 

Table~\ref{TestErr} shows the test error on banknote, MNIST and CIFAR10 for the classifier selected by querying 15\% of the unlabelled data pool. We observe that the classifiers selected by our proposed active learning framework generally has a lower test error rate than the passive learning, especially in an ensemble classifier selection framework. As the experimental set-ups are identical (except for the querying strategy) we attribute the performance improvement to the proposed active learning framework used during model selection. 
Figure~\ref{TestErrProgress} indicates the performance of the non-ensemble classifiers selected by the homological similarities at the cost of the different proportions of the data pool. As expected, the proposed active learning framework achieves the best model selection faster than the passive learning for all  classifier families. Note that the selection performance may be unstable with an increasing number of the queries since the active learning algorithm exhausts  informative examples rapidly and begins to query noisy examples. In summary, Table~\ref{TestErr} and Figure~\ref{TestErrProgress} indicate that the advantage of active learning in finding good homology summaries is also useful for model selection; this is evidenced by the lower error rates for the active learning approach relative to the passive learning approach.  
\subsection{Analysis of Homological Properties of Real Data}
We present the homological properties estimated by passive learning and the proposed active learning framework. Similar to the experiments with the synthetic dataset, we access the complete unlabelled data pool and their labels to compute $\beta_1$ and PD$_1$ and use them as the ground-truth $\beta_1$ and PD$_1$. On the other hand, we query the unlabelled data pool and estimate $\beta_1$ and PD$_1$ from the queried data. As we observe in the Figure~\ref{RealBetti}(a), $\beta_1$ estimated by our active learning algorithm has a more similar trend to the ground-truth $\beta_1$ in all three real datasets. Furthermore, CIFAR10 has a significantly higher $\beta_1$ than MNIST and Banknote datasets indicating more complex decision boundaries. This is consistent with the Table~\ref{TestErr} which shows that the error rates for the CIFAR10 binary classification tasks is higher than the other two datasets. Figure~\ref{RealBetti}(b) shows the bottleneck distance between the estimated PD$_1$ and the ground-truth PD$_1$ for different proportions of labelled data. We observe that the active learning algorithm maintains a smaller bottleneck distance at early stages of querying. Such benefits gradually diminish as more of the data is labelled.

\begin{figure}[h]
\centering
\begin{subfigure}{0.5\linewidth}
  \centering
  \vspace{-0.3cm}
  \includegraphics[width=0.32\linewidth]{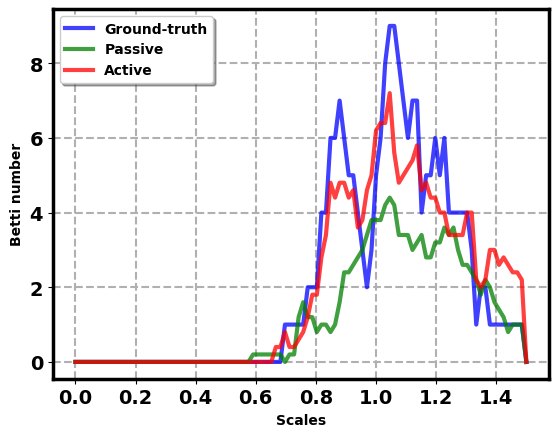}
  \includegraphics[width=0.31\linewidth]{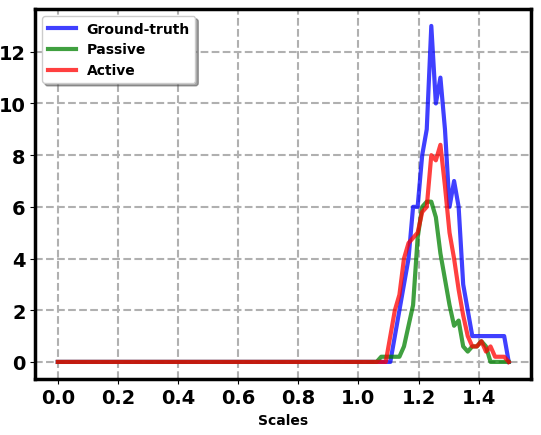}
  \includegraphics[width=0.32\linewidth]{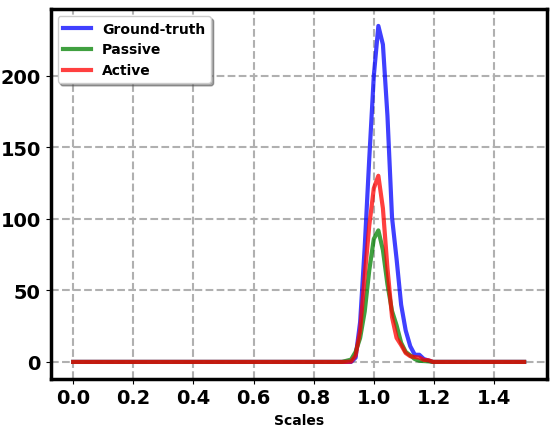}
  \caption{\footnotesize{$\beta_1$ estimation comparisons at the cost of 50\% labelled data.}}
\end{subfigure}
\begin{subfigure}{0.49\linewidth}
  \centering
    \vspace{-0.3cm}
  \includegraphics[width=0.32\linewidth]{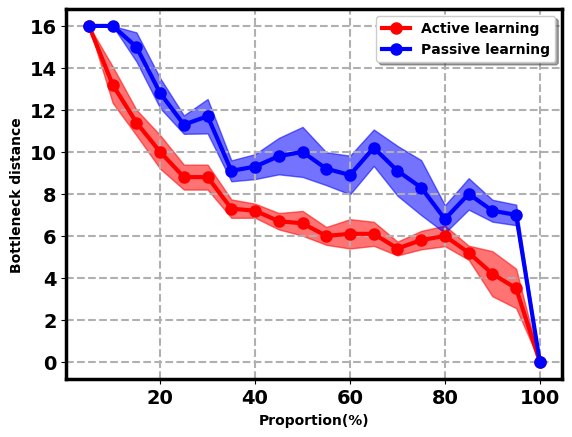}
  \includegraphics[width=0.3\linewidth]{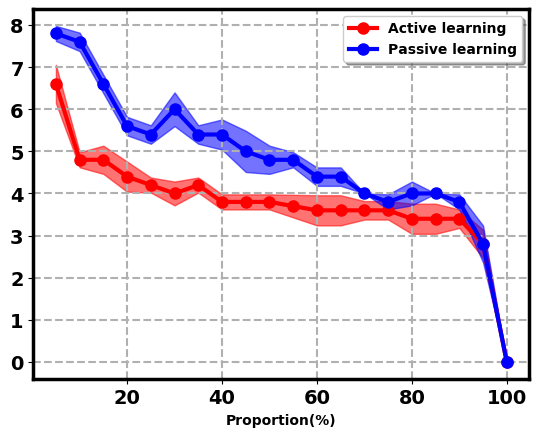}
  \includegraphics[width=0.32\linewidth]{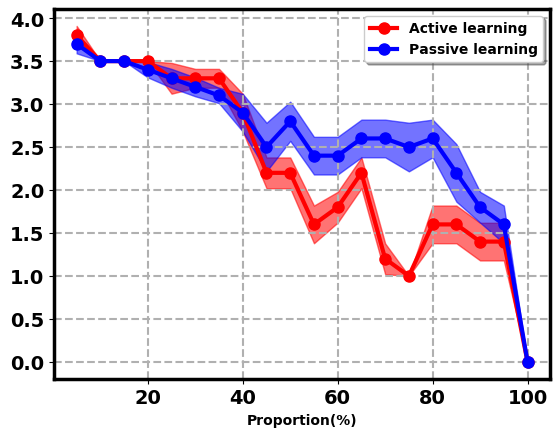}
   \caption{\footnotesize{Bottleneck distance for different proportions of labelled data.}}
\end{subfigure}
\caption{\footnotesize{Homological properties for the banknote (left), MNIST (middle) and CIFAR10 (right).}}
\label{RealBetti}
\vspace{-0.1in}
\end{figure}

\section{Conclusions}
We propose an active learning algorithm to find the homology of decision boundaries. We theoretically analyze the query complexity of the proposed algorithm and prove the sufficient conditions to recover the homology of decision boundaries in the active learning setting. The extensive experiments on synthetic and real datasets with the application on model selection corroborate our theoretical results.
\section*{Broader Impact}
The proposed approach, although has strong algorithmic and theoretical merits, has potential real-world application as we demonstrated.

One of the key uses of this approach is to create efficient summaries of decision boundaries of datasets \cite{Edwards2016TowardsAN} and models. Such summaries can be quite useful in applications like AI model marketplaces \cite{bridgwater_2018}, where data and models can be securely matched without revealing too much information about each other. This is helpful in scenarios where the data is private and models are proprietary or sensitive.

A downside of being able to compute homology of decision boundaries with few examples is that malicious users may be able to learn about the key geometric / topological properties of the models with fewer examples than they would use otherwise. While this in itself may be benign, combined with other methods, they may be able to design better adversarial attacks on this model for instance. Ways of mitigating it in sensitive scenarios include ensuring that users do not issue too many queries of examples close to the boundary successively, since this may be revealing of malicious intent.
\section*{Acknowledgements}
This work is funded in part by the Office of Naval Research under grant N00014-17-1-2826 and the National Science Foundation under grant OAC-1934766. Karthikeyan Natesan Ramamurthy is employed at IBM Corporation.

\begin{appendices}
\numberwithin{theorem}{section}
\numberwithin{assumption}{section}
\numberwithin{definition}{section}
\numberwithin{remark}{section}
\numberwithin{lemma}{section}
\numberwithin{figure}{section}
\numberwithin{proposition}{section}
\numberwithin{equation}{section}
In Appendix~\ref{suffiicentcondSuppl}, we present the sufficient conditions for estimating the homology of a manifold. 
In Appendix~\ref{DecManifoldSupp}, by assuming the decision boundaries is a manifold, we present the sample complexity result to generate a simplical complex homotopy equivalent to the decision boundary manifold. This is the passive learning result that we extend to the active learning case. In Appendix~\ref{secS2Supp}, we present the shortest shortest (S$^2$) path algorithm~\cite{dasarathy2015s2}. As a graph-based active learning algorithm for nonparametric classification,  S$^2$ is used in the label query of our proposed active learning framework (see Figure~\ref{framework} in the main document). After these three preliminary sections, in Appendix~\ref{ActiveTheorySupp}, we provide a complete proof of Theorem~\ref{s2D} in the main content. In Appendix~\ref{NumaricialSupp}, we provide numerical comparisons of the query complexity between the passive learning and our active learning algorithm for finding the homology of decision boundaries. Lastly, we present our complete experimental results in Appendix~\ref{CompExpSupp}. 

\section{Sufficient Conditions for Finding the Homology of a Manifold}
\label{suffiicentcondSuppl}
The authors in~\cite{niyogi2008finding} show how to ``learn'' the homology of a manifold from samples. Specifically,  \cite{niyogi2008finding} assumes the samples are generated from a constrained domain, and with these generated samples, \cite{niyogi2008finding} provide the sufficient conditions to learn the manifold the samples lie on or nearby. The sufficient conditions are repetitively used to find the homology of decision boundaries in both passive learning setting~\cite{ramamurthy2018topological} and active learning setting of our work, therefore we describe the sufficient conditions to learn a manifold from samples in this section. Several assumptions are needed to be made.
\begin{assumption}
The manifold $\mathcal{M}$ has a condition number $1/\tau$.
\label{TauCondSupp}
\end{assumption}
The quantity $\tau$ associated with $\mathcal{M}$ encodes both the local and global curvature of the manifold and is linked with the intrinsic complexity of the manifold $\mathcal{M}$: If $\mathcal{M}$ consists of several components, then $\tau$ bounds the separation between them. For example, as shown in Figure~\ref{ManifoldVizSupp}, if $\mathcal{M}$ is a sphere, then $\tau$ is the radius of the sphere. 

In the setting of~\cite{niyogi2008finding}, one has access to points that are near $\mathcal{M}$. Let $\mathcal{D} = (\mathbf{x}_1, ..., \mathbf{x}_N)$ denote a set of these sample points drawn from a feature space/domain $\mathcal{X}$. Furthermore, we define $u_\mathcal{X}$ as a standard Lebesgue measure on $\mathcal{X}$ and we write $Tub_r(\mathcal{M})$ to denote a tubular neighborhood with radius $r$ around $\mathcal{M}$. As the goal is to learn $\mathcal{M}$ from samples, foreseeably, the domain $\mathcal{X}$ cannot be unconstrainedly large beyond $\mathcal{M}$ otherwise the samples from $\mathcal{X}$ would not capture the fine structure of $\mathcal{M}$.To formalize this, the authors in~\cite{niyogi2008finding} make the following assumption. 
\begin{assumption}
The domain $\mathcal{X}$ is contained in a ${\rm Tub}_r(\mathcal{M})$ for $r<(\sqrt{9}-\sqrt{8})\tau$ 
\label{support}
\end{assumption}
Assumption~\ref{support} specifies that the generated samples are within at-most a distance of $r$ to $\mathcal{M}$. In addition, we also define a notion of density on  $\mathcal{M}$. 
\begin{definition}
The set $\mathcal{D}$ is said to be $r-$dense in $\mathcal{M}$ if for every $\mathbf{p}\in\mathcal{M}$ there exists some $\mathbf{x}\in \mathcal{D}$ such that $\norm{\mathbf{p}-\mathbf{x}}_2<r$.
\label{rdensedef}
\end{definition}

\begin{figure}
\centering
 \includegraphics[width=0.7\linewidth]{{Synthetic/Intro0}.png}\\
\caption{An example of $(\epsilon, \gamma)-$labeled \v{C}ech complex, constructed in a tubular neighborhood ${\rm Tub}_{w+\gamma}(\mathcal{M})$ of radius $w+\gamma$, for a manifold $\mathcal{M}$ of condition number $1/\tau$. The overlap between the two classes is contained in ${\rm Tub}_{w}(\mathcal{M})$. The complex is constructed on samples in class $0$, by placing balls of radius $\epsilon$ ($B_\epsilon(\mathbf{x}_i)$), and is ``witnessed'' by samples in class $1$. $\mathcal{X}$ is the compact probability space for the data. Each triangle is assumed to be a $2-$simplex in the simplicial complex. Note that we keep the samples from both classes sparse for aesthetic reasons.}
\label{ManifoldVizSupp}
\end{figure}
This ensures a sufficient mass of samples are generated near $\mathcal{M}$. Typically, for learning a manifold $\mathcal{M}$ and then finding the homology of $\mathcal{M}$, one need to construct $\epsilon$-balls centered at the points of $\mathcal{D}$ such that the union $U=\bigcup_{\mathbf{x}\in\mathcal{D}}B_\epsilon(\mathbf{x})$ deformation retracts to $\mathcal{M}$. ~\cite{niyogi2008finding} presents the sufficient conditions for having $U=\bigcup_{\mathbf{x}\in\mathcal{D}}B_\epsilon(\mathbf{x})$ be homotopy equivalent to $\mathcal{M}$:
\begin{theorem}
$\mathcal{M}$ is a deformation retract of  $U=\bigcup_{\mathbf{x}\in\mathcal{\mathcal{D}}}B_\epsilon(\mathbf{x})$ if \textbf{(a)} $r\leq (\sqrt{9}-\sqrt{8})\tau$, \textbf{(b)} $\epsilon\in\left(\frac{(r+\tau)-\sqrt{r^2 + \tau^2 - 6\tau r}}{2}, \frac{(r+\tau)+\sqrt{r^2 + \tau^2 + 6\tau r}}{2}\right)$, and \textbf{(c)} $\mathcal{D}$ is $r$-dense in $\mathcal{M}$.
\label{niyougi_sufficientSupp}
\end{theorem}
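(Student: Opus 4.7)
The plan is to exhibit an explicit strong deformation retraction of $U=\bigcup_{\mathbf{x}\in\mathcal{D}} B_\epsilon(\mathbf{x})$ onto $\mathcal{M}$ via the nearest-point projection $\pi:U\to\mathcal{M}$. First, I would verify that $U\subset {\rm Tub}_\tau(\mathcal{M})$, so that $\pi$ is well-defined and continuous. By Assumption~\ref{support} each sample $\mathbf{x}_i$ satisfies $\mathrm{dist}(\mathbf{x}_i,\mathcal{M})<r$, so any $x\in U$ is within $r+\epsilon$ of $\mathcal{M}$; a short algebraic check shows that the upper bound on $\epsilon$ in condition (b) forces $r+\epsilon<\tau$, which combined with Assumption~\ref{TauCondSupp} guarantees a unique nearest point on $\mathcal{M}$ for every $x\in U$.

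Next, I would introduce the straight-line homotopy $H(x,t)=(1-t)x+t\,\pi(x)$ and argue it never leaves $U$. Given $x\in B_\epsilon(\mathbf{x}_i)$, the $r$-density of $\mathcal{D}$ (condition (c), via Definition~\ref{rdensedef}) supplies a sample $\mathbf{x}_j$ with $\norm{\mathbf{x}_j-\pi(x)}_2<r$, serving as a witness for $t$ near $1$, while $\mathbf{x}_i$ itself witnesses small $t$. The crux is showing that for every intermediate $t$ some sample center $\mathbf{x}_k\in\mathcal{D}$ keeps $\norm{H(x,t)-\mathbf{x}_k}_2<\epsilon$. Here I would exploit the curvature estimate that follows from the condition number $1/\tau$: the component of $\mathbf{x}_k-H(x,t)$ normal to $\mathcal{M}$ at $\pi(H(x,t))$ is bounded by a $\tau$-controlled quadratic approximation of $\mathcal{M}$, after which a direct expansion of $\norm{H(x,t)-\mathbf{x}_k}_2^2$ in the tangent-normal decomposition yields the desired bound.

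The main obstacle is precisely this geometric covering lemma, which culminates in a quadratic inequality in $\epsilon$ whose admissible interval is exactly condition (b). The inequality reduces to $\epsilon^2-(r+\tau)\epsilon+2r\tau\le 0$, whose roots are $\tfrac{(r+\tau)\pm\sqrt{r^2+\tau^2-6r\tau}}{2}$; its discriminant is non-negative precisely when $r\le (\sqrt{9}-\sqrt{8})\tau$, which explains condition (a). The lower root guarantees that neighboring $\epsilon$-balls overlap densely enough to carry the homotopy across ${\rm Tub}_r(\mathcal{M})$, while the upper root keeps each ball inside ${\rm Tub}_\tau(\mathcal{M})$ so that $\pi$ stays well-defined along the segment. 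Once this covering lemma is in place, continuity of $H$ follows from continuity of $\pi$, and the identities $H(\cdot,0)=\mathrm{id}_U$, $H(\cdot,1)=\pi$, and $H(p,t)=p$ for $p\in\mathcal{M}$ are immediate, establishing $\mathcal{M}$ as a (strong) deformation retract of $U$.
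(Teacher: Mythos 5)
Your proposal is correct and takes essentially the same route as the proof this paper relies on: the paper does not prove this theorem itself but defers to Proposition 7.1 of \cite{niyogi2008finding}, whose argument is exactly the nearest-point-projection retraction along normal fibers together with the covering/star-shapedness lemma you describe, culminating in the quadratic $\epsilon^2-(r+\tau)\epsilon+2r\tau\le 0$ whose roots give condition (b) and whose discriminant gives condition (a) (note the paper's stated upper endpoint has a sign typo; it should read $-6\tau r$, as in your quadratic). The only caveat is that this covering lemma for intermediate $t$ — the real geometric content, handled in \cite{niyogi2008finding} by several curvature estimates tied to the condition number $1/\tau$ — is asserted rather than derived in your sketch, so the proposal is an accurate outline of the cited proof rather than a self-contained one.
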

The proof of Theorem~\ref{niyougi_sufficientSupp} is under proposition 7.1 in~\cite{niyogi2008finding}. Clearly,  conditions (a) and (b) establish the relationship between $\epsilon$, $r$ and $\tau$. Condition (c) ensures there are sufficient samples covering $\mathcal{M}$. 

\begin{remark}
Theorem~\ref{niyougi_sufficientSupp} states, with appropriately constructed $B_\epsilon(\mathbf{x})$, one can find a $U=\bigcup_{\mathbf{x}\in\mathcal{\mathcal{D}}}B_\epsilon(\mathbf{x})$ where $\mathcal{M}$ is a deformation retract. This naturally turns directly accessing the homolicial features of $\mathcal{M}$ to finding the homology of the auxiliary $U$. A practical way to do the above is construct the simplicial complex of the cover of $U$ and then compute the homological properties from the constructed  simplicial complex.
\label{simplicialcomplex}
\end{remark}

One instructive way to think about the difference between the current paper and Niyogi et al.,\cite{niyogi2008finding} is to think of the \emph{sampling mechanism} of the data. The sampling mechanism in \cite{niyogi2008finding} essentially generates points on or very close to the manifold (depending on the amount of noise - see Assumption~\ref{TauCondSupp}). In our setting, however, we suppose that the feature data is supported on a much larger space $\mathcal{X}$, and there is no direct access to a manifold sampling mechanism. Instead, we have access to a \emph{labeling oracle} which indirectly clues to us the location of the manifold. We therefore cannot make an assumption as strong as Assumption~\ref{TauCondSupp}, and we will relax this in Appendix~\ref{DecManifoldSupp}. 
\section{The Manifold of Decision Boundaries}
\label{DecManifoldSupp}
A recent line of work~\cite{varshney2015persistent, chen2018topological} has emerged to find the homology of decision boundaries. The decision boundaries are simply taken as a manifold, and in this way, one can capture the holomoloical features of the decision boundaries by finding the homology of the related manifold. In the sequel, by referring to~\cite{ramamurthy2018topological}, we first describe the setting of a classification problem, then present a special simplicial complex called labeled \v{C}ech (L\v{C}) complex, and finally present the sample complexity result in the passive learning setting through the L\v{C} complex.   

We consider a binary classification problem such that $\mathcal{D}$ and labels $y\in\{0,1\}$ are drawn from joint distribution $p_{XY}$. For the distribution $p_{XY}$, we will let $$\mathfrak{D} = \{\mathbf{x}\in \mathcal{X}: p_{X\mid Y}(\mathbf{x}\mid 1)p_{X\mid Y}(\mathbf{x}\mid 0) > 0\}.$$ In other words, $\mathfrak{D}$ denotes the region of the feature space where both classes overlap, i.e.,  both class conditional distributions $p_{X\mid Y}(\cdot\mid 1)$ and $p_{X\mid Y}(\cdot\mid 0)$ have non-zero mass. Similar to the notation $\mu_\mathcal{X}$, we write $\mu_{\mathcal{X}|y}$ to denote the measure for class $y$ on $\mathcal{X}$. \textbf{From here on, we reuse the notation $\mathcal{M}$ to denote the manifold of the decision boundaries.} Specially, we define $\mathcal{M}=\{\mathbf{x}\in \mathcal{X}|p_{Y|X}(1|\mathbf{x}) = p_{Y|X}(0|\mathbf{x})\}$. The optimal decision boundaries is given by the classification function $$f(\mathbf{x}) = \begin{cases} 1 & \mbox{ if }p_{Y\mid X}(1 \mid \mathbf{x}) \geq 0.5\\
0 & \mbox{otherwise}
\end{cases}.$$  Based on the observed data, we define the set $\mathcal{D}^0 = \{\mathbf{x} \in \mathcal{D} : f(\mathbf{x}) = 0 \}$, that is the set of all samples with a Bayes optimal label of 0; similarly,  we let $\mathcal{D}^1 = \{ \mathbf{x} \in \mathcal{D} : f(\mathbf{x}) = 1 \}$. Furthermore, we write ${\rm Tub}_w(\mathcal{M})$ to denote the smallest tubular neighborhood enclosing $\mathfrak{D}$.
\begin{definition}
Given $\epsilon, \gamma>0$, an ($\epsilon$, $\gamma$)-labeled \v{C}ech complex is a simplicial complex constructed from a collection of simplices such that each simplex $\sigma$ is formed on the points in a set $S\subseteq\mathcal{D}^0$ witnessed by the reference set $\mathcal{D}^1$ satisfying the following conditions:
\textbf{(a)} $\bigcap_{\mathbf{x}_i\in \sigma} B_\epsilon(\mathbf{x}_i)\neq \emptyset$, where $\mathbf{x}_i \in S$ are the vertices of $\sigma$. \textbf{(b)} $\forall \mathbf{x}_i\in S\subseteq \mathcal{D}^0$, $\exists \mathbf{x}_j\in \mathcal{D}^1$ such that, $\norm{\mathbf{x}_i-\mathbf{x}_j}_2\leq \gamma$.
\label{LVRDefSupp}
\end{definition} 
Definition~\ref{LVRDefSupp}(a) ensures the L\v{C} complex is constructed following the definitions of a typical \v{C}ech complex, that is, a set of $\epsilon$ ball centered at points of $\sigma$ has a non-empty intersection~\cite{ghrist2014elementary}. Definition~\ref{LVRDefSupp}(b) states that a subset $S$ of $\mathcal{D}^0$ is selected such that $S$ is at-most $\gamma$ distance to $\mathcal{D}^1$. The authors in~\cite{ramamurthy2018topological} show that, under certain assumption on the manifold and the distribution, provided that sufficiently many random samples (and their labels) are drawn according $p_{XY}$ then $U=\bigcup_{\mathbf{x}_i\in\sigma}B_\epsilon(\mathbf{x}_i)$ is homotopy equivalent to $\mathcal{M}$. Therefore one could access the homolocial features of the decision boundaries through the L\v{C} complex. Herein, we introduce the assumptions and the lemmas before getting to the sample complexity result for the passive learning. These assumption are standard introductions in deriving theoretical results with respect to a manifold as explained in ~\cite{niyogi2008finding} therefore also being critical to the results of the proposed theorem in this paper. 

On the other hand, a principal difference between~\cite{niyogi2008finding} and our work is that~\cite{niyogi2008finding} assumes all generated samples are contained within ${\rm Tub}_{(\sqrt{9}-\sqrt{8})\tau}(\mathcal{M})$ (See Assumption~\ref{support}); this is one of the sufficient conditions for manifold reconstruction from samples. In contrast, ~\cite{ramamurthy2018topological} considers a more practical scenario for classification where samples can be generated anywhere following the joint distribution $p_{XY}$. Our approach more closely follows ~\cite{ramamurthy2018topological}, but with an important distinction. We sample from the marginal distribution $p_{X}$ without knowledge of the underlying labels and use active learning to discover the labels. Nevertheless, our work and~\cite{ramamurthy2018topological} share the assumptions and the lemmas introduced in the sequel. We also emphasize that, as we would show in the end, the follow-on results are dependent on two parameters that are specific to the joint distribution: $w$ - the amount of overlap between the distributions and $\tau$ - the global geometric property of the decision boundary manifold. 

\begin{assumption}
$w  <(\sqrt{9} - \sqrt{8})\tau$. 
\label{ManifoldProperties1Supp}
\end{assumption}

Since the generated samples do not necessarily reside within ${\rm Tub}_{(\sqrt{9}-\sqrt{8})\tau}(\mathcal{M})$, it is not immediately apparent that it is possible to find an $S$ (see Definition~\ref{LVRDefSupp}) that is entirely contained in ${\rm Tub}_{(\sqrt{9}-\sqrt{8})\tau}(\mathcal{M})$. However, 
Assumption~\ref{ManifoldProperties1Supp} allows us to guarantee precisely this. To see this, we will first state the following lemma. 
\begin{lemma}
Provided $\mathcal{D}^0$ and $\mathcal{D}^1$ are both $\frac{\gamma}{2}$-dense in $\mathcal{M}$, then $S$ is contained in $Tub_{w + \gamma}(\mathcal{M})$ and it is $\frac{\gamma}{2}$-dense in $\mathcal{M}$.  
\label{rdensesupp}
\end{lemma}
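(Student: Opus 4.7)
The plan is to split the lemma into its two claims, handling the $\frac{\gamma}{2}$-density of $S$ first and then the tubular containment $S \subseteq \mathrm{Tub}_{w+\gamma}(\mathcal{M})$. Throughout, it will be convenient to take $S$ to be the maximal subset of $\mathcal{D}^0$ satisfying Definition~\ref{LVRDefSupp}(b), i.e.\ $S = \{\mathbf{x}_i \in \mathcal{D}^0 : \exists\, \mathbf{x}_j \in \mathcal{D}^1 \text{ with } \|\mathbf{x}_i - \mathbf{x}_j\|_2 \leq \gamma\}$, since enlarging $S$ can only help density and, by construction, every element of $S$ must already satisfy (b).

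For density, the plan is a direct triangle-inequality argument. Fix an arbitrary $\mathbf{p} \in \mathcal{M}$. By the hypothesis that $\mathcal{D}^0$ is $\frac{\gamma}{2}$-dense in $\mathcal{M}$, choose $\mathbf{x}_i \in \mathcal{D}^0$ with $\|\mathbf{x}_i - \mathbf{p}\|_2 < \frac{\gamma}{2}$; by the same hypothesis applied to $\mathcal{D}^1$, choose $\mathbf{x}_j \in \mathcal{D}^1$ with $\|\mathbf{x}_j - \mathbf{p}\|_2 < \frac{\gamma}{2}$. The triangle inequality yields $\|\mathbf{x}_i - \mathbf{x}_j\|_2 < \gamma$, so $\mathbf{x}_i$ satisfies Definition~\ref{LVRDefSupp}(b) and therefore lies in $S$. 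Since this produces an element of $S$ in $B_{\gamma/2}(\mathbf{p})$ for \emph{every} $\mathbf{p} \in \mathcal{M}$, the set $S$ is $\frac{\gamma}{2}$-dense in $\mathcal{M}$.

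For the containment, I would take an arbitrary $\mathbf{x}_i \in S$ and its witness $\mathbf{x}_j \in \mathcal{D}^1$ with $\|\mathbf{x}_i - \mathbf{x}_j\|_2 \leq \gamma$, and bound the distance from $\mathbf{x}_i$ to $\mathcal{M}$ by $w + \gamma$. The key observation is that $\mathbf{x}_i$ has Bayes label $0$ while $\mathbf{x}_j$ has Bayes label $1$, so walking along the segment $[\mathbf{x}_i,\mathbf{x}_j]$ we must transition from the region $\{p_{Y|X}(1|\cdot) < 1/2\}$ into $\{p_{Y|X}(1|\cdot) \geq 1/2\}$. Any such transition point $\mathbf{z}$ lies either on $\mathcal{M}$ itself or inside the overlap region $\mathfrak{D}$, and since $\mathfrak{D} \subseteq \mathrm{Tub}_w(\mathcal{M})$ by the very definition of $w$, we have $\mathrm{dist}(\mathbf{z}, \mathcal{M}) \leq w$. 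Combining $\|\mathbf{x}_i - \mathbf{z}\|_2 \leq \|\mathbf{x}_i - \mathbf{x}_j\|_2 \leq \gamma$ with $\mathrm{dist}(\mathbf{z}, \mathcal{M}) \leq w$ gives $\mathrm{dist}(\mathbf{x}_i, \mathcal{M}) \leq w + \gamma$, so $\mathbf{x}_i \in \mathrm{Tub}_{w+\gamma}(\mathcal{M})$.

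The main obstacle is making rigorous the step that the segment $[\mathbf{x}_i,\mathbf{x}_j]$ actually contains a transition point in $\mathcal{M} \cup \mathfrak{D}$. Under a mild continuity assumption on $p_{Y|X}$, this is immediate from the intermediate value theorem applied to $t \mapsto p_{Y|X}(1 \mid (1-t)\mathbf{x}_i + t\mathbf{x}_j)$ crossing the level $1/2$. Without continuity one argues directly: if no point of the segment lay in $\mathfrak{D} \cup \mathcal{M}$, the segment would be partitioned into two relatively open sets on which the Bayes label is constant and opposite, contradicting the connectedness of $[\mathbf{x}_i,\mathbf{x}_j]$; hence some $\mathbf{z}$ on the segment lies in $\mathfrak{D} \cup \mathcal{M} \subseteq \mathrm{Tub}_w(\mathcal{M})$, and the triangle-inequality bound above goes through. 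Once both claims are established, the lemma follows.
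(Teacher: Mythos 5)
Your proof is correct and, on the density half, is exactly the paper's argument: for each $\mathbf{p}\in\mathcal{M}$ you pick $\mathbf{x}_i\in\mathcal{D}^0\cap B_{\gamma/2}(\mathbf{p})$ and $\mathbf{x}_j\in\mathcal{D}^1\cap B_{\gamma/2}(\mathbf{p})$, use the triangle inequality to get $\|\mathbf{x}_i-\mathbf{x}_j\|_2<\gamma$, and conclude $\mathbf{x}_i\in S$ (taking $S$ maximal, which is also the paper's implicit reading of Definition~\ref{LVRDefSupp}). On the containment half your route is more explicit than the paper's: the paper simply partitions $S$ into $S\cap\mathfrak{D}$, which lies in ${\rm Tub}_w(\mathcal{M})$ because ${\rm Tub}_w(\mathcal{M})$ is by definition the smallest tube enclosing $\mathfrak{D}$, and $S\setminus\mathfrak{D}$, whose ``excess'' it asserts is controlled by $\gamma$ via the witness condition (b), without spelling out why a label-$0$ point within $\gamma$ of a label-$1$ point must be within $w+\gamma$ of $\mathcal{M}$. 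Your segment-crossing argument supplies precisely that missing step, and it buys a cleaner, pointwise bound (indeed, under continuity of the posterior it even gives ${\rm dist}(\mathbf{x}_i,\mathcal{M})\le\gamma$ for the crossing through $\mathcal{M}$ itself). The only caveat is your ``without continuity'' fallback: the relative openness of the two label regions on the segment is exactly what fails without some regularity of $p_{Y\mid X}$ (or an assumption that $\mathcal{M}\cup\mathfrak{D}$ topologically separates the two Bayes decision regions), so that step is not a free replacement for the continuity hypothesis. This is not a real defect relative to the paper, which leans on the same implicit separation picture and argues far less formally, but you should state the separation/continuity assumption as part of the lemma's standing hypotheses rather than as an optional remark.
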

Assumption~\ref{ManifoldProperties1Supp} is imposed on $p_{XY}$ therefore there exists a $S$ contained in ${\rm Tub}_{(\sqrt{9}-\sqrt{8})\tau}(\mathcal{M})$ with a proper $\gamma$. In fact, $S$ can be partitioned to $S_\mathfrak{D}=S\bigcap\mathfrak{D}$ and $S\textbackslash S_\mathfrak{D}$. Under Assumption~\ref{ManifoldProperties1Supp}, we immediately have $S_\mathfrak{D}\subseteq {\rm Tub}_w(\mathcal{M})\subset {\rm Tub}_{(\sqrt{9}-\sqrt{8})\tau}(\mathcal{M})$. $S\textbackslash S_\mathfrak{D}$ is actually an excess from $\mathfrak{D}$ caused by identifying the necessary points $\mathbf{x}\in\mathcal{D}^0$ in the construction of L\v{C} complex (see (b) in Definition~\ref{LVRDefSupp}), and the exceeding extent is controlled by $\gamma$. With $\gamma<(\sqrt{9} - \sqrt{8})\tau -w$, we immediately have $S\subset {\rm Tub}_{(\sqrt{9}-\sqrt{8})\tau}(\mathcal{M})$. Besides, as the distance from $\mathbf{x}_i\in S\subseteq \mathfrak{D}^0$ to $\mathbf{x}_j\in\mathcal{D}^1$ is bounded by $\gamma$ (see definition~\ref{LVRDefSupp}(b)), $S$ is also $\frac{\gamma}{2}$-dense in $\mathcal{M}$ therefore implying $(w+\gamma)$-dense in $\mathcal{M}$. 
 
In fact, having $S\subset{\rm Tub}_{(\sqrt{9} -\sqrt{8})\tau}(\mathcal{M})$, $S$ being $(w+\gamma)$-dense in $\mathcal{M}$ and $\epsilon$ properly selected are the sufficient conditions (see Theorem~\ref{niyougi_sufficientSupp}) to construct a $U=\bigcup_{\mathbf{x}_i\in\sigma}B_\epsilon(\mathbf{x}_i)$ homotopy equivalent to $\mathcal{M}$. Remembering that the L\v{C} complex is the cover of $U$, we have the following proposition  
\begin{proposition}
$(\epsilon, \gamma)$-L\v{C} complex is homotopy equivalent to $\mathcal{M}$ as long as (a) $\gamma <(\sqrt{9} - \sqrt{8})\tau - w$; (b) $\mathcal{D}^0$ and $\mathcal{D}^1$ are $\frac{\gamma}{2}$-dense in $\mathcal{M}$ and (c) $\epsilon \in (\frac{(w + \gamma + \tau) - \sqrt{(w + \gamma)^2 + \tau^2 - 6\tau (w + \gamma)}}{2},\frac{(w + \gamma + \tau) + \sqrt{(w + \gamma)^2 + \tau^2 - 6\tau (w + \gamma)}}{2})$.
\label{sufficient2Supp}
\end{proposition}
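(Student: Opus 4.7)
The plan is to reduce Proposition~\ref{sufficient2Supp} to Theorem~\ref{niyougi_sufficientSupp} of Niyogi et al., applied with noise radius $r = w + \gamma$, and then invoke the Nerve Lemma to pass from the union of $\epsilon$-balls to its nerve, which I will identify with the $(\epsilon,\gamma)$-L\v{C} complex. The preceding Lemma~\ref{rdensesupp} is exactly what bridges the gap between the indirect (label-based) access to $\mathcal{M}$ and the direct manifold-sampling setup of \cite{niyogi2008finding}.

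First, I would use hypothesis (b) to invoke Lemma~\ref{rdensesupp}: since $\mathcal{D}^0$ and $\mathcal{D}^1$ are both $\gamma/2$-dense in $\mathcal{M}$, the set $S$ is contained in ${\rm Tub}_{w+\gamma}(\mathcal{M})$ and is itself $\gamma/2$-dense in $\mathcal{M}$; in particular $S$ is also $(w+\gamma)$-dense in $\mathcal{M}$. Next, hypothesis (a), $\gamma < (\sqrt{9}-\sqrt{8})\tau - w$, rearranges to $w+\gamma < (\sqrt{9}-\sqrt{8})\tau$, so ${\rm Tub}_{w+\gamma}(\mathcal{M}) \subseteq {\rm Tub}_{(\sqrt{9}-\sqrt{8})\tau}(\mathcal{M})$. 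Thus $S$ plays the role of the sample set $\mathcal{D}$ in Theorem~\ref{niyougi_sufficientSupp}, lying in an admissible tube of width $r = w+\gamma$. Hypothesis (c) is, by direct substitution of $r = w+\gamma$, precisely the $\epsilon$-interval required by Theorem~\ref{niyougi_sufficientSupp}(b).

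With all three conditions of Theorem~\ref{niyougi_sufficientSupp} verified, I conclude that the union $U = \bigcup_{\mathbf{x}_i \in S} B_\epsilon(\mathbf{x}_i)$ deformation retracts to $\mathcal{M}$, and in particular $U \simeq \mathcal{M}$. To finish, I would apply the Nerve Lemma: the cover $\{B_\epsilon(\mathbf{x}_i)\}_{\mathbf{x}_i \in S}$ consists of Euclidean balls, whose pairwise (and higher) intersections are convex and therefore contractible, so the nerve of this cover is homotopy equivalent to $U$. I would then check that this nerve is exactly the $(\epsilon,\gamma)$-L\v{C} complex of Definition~\ref{LVRDefSupp}: condition (a) of the definition is the defining incidence relation of the nerve, and condition (b) (the $\gamma$-witnessing by $\mathcal{D}^1$) is already absorbed into the construction of $S$. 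Chaining the two homotopy equivalences yields L\v{C} $\simeq U \simeq \mathcal{M}$.

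The main obstacle is not any single hard calculation but the careful verification of the nerve identification in the third step, because the L\v{C} complex is built only on vertices drawn from $S \subseteq \mathcal{D}^0$, not on arbitrary samples of $\mathcal{M}$; one must argue that restricting the cover to balls centered at $S$ (rather than at all of ${\rm Tub}_{w+\gamma}(\mathcal{M})$) still yields a good cover of a set homotopy equivalent to $\mathcal{M}$. Lemma~\ref{rdensesupp}, which controls both where $S$ lives and how densely it sits inside $\mathcal{M}$, is exactly what licenses this restriction, so once that lemma is in place the remaining steps are essentially bookkeeping.
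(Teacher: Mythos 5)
Your proposal is correct and follows essentially the same route as the paper: the paper likewise uses Lemma~\ref{rdensesupp} together with condition (a) to place the $\gamma/2$-dense (hence $(w+\gamma)$-dense) set $S$ inside ${\rm Tub}_{(\sqrt{9}-\sqrt{8})\tau}(\mathcal{M})$, invokes Theorem~\ref{niyougi_sufficientSupp} with $r=w+\gamma$ and the $\epsilon$-range of condition (c) to get that $U=\bigcup_{\mathbf{x}_i\in S}B_\epsilon(\mathbf{x}_i)$ deformation retracts to $\mathcal{M}$, and then passes to the L\v{C} complex as the nerve of this cover. Your explicit invocation of the Nerve Lemma and the check that the witnessing condition is absorbed into $S$ only makes precise what the paper states informally.
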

A pictorial description of relations between $w$, $\gamma$ and $\tau$ is in Figure~\ref{ManifoldVizSupp} in the main content. In the stylized example in Figure~\ref{ManifoldVizSupp}, $\mathcal{M}$ is a circle, ${\rm Tub}_{w+\gamma}(\mathcal{M})$ is an annulus and the radius $\epsilon$ of the covering ball $B_\epsilon(\mathbf{x})$ is constrained by $\tau$. 

~\cite{ramamurthy2018topological} derives a sample complexity result in a passive learning setting such that a L\v{C} complex is homotopy equivalent to $\mathcal{M}$. The heart of the derivation is figuring out how many samples are needed to have $\mathcal{D}^0$ and $\mathcal{D}^1$ both $\frac{\gamma}{2}$-dense in $\mathcal{M}$. We require an additional assumption:
\begin{assumption} 
 $\inf_{\mathbf{x}\in\mathcal{M}}\mu_{\mathcal{X}|y}(B_{\gamma/4}(\mathbf{x}))>\rho_{\gamma/4}^y, y\in \{0,1\}$. 
 \label{manifoldprobSupp}
\end{assumption}
The Assumption~\ref{manifoldprobSupp} ensures sufficient mass in both classes such that $\mathcal{D}^0$ and $\mathcal{D}^1$ are $\frac{\gamma}{2}$-dense in $\mathcal{M}$. Given the  Proposition~\ref{sufficient2Supp}, this leads to the following theorem presented in Theorem 3 in~\cite{ramamurthy2018topological}
\begin{theorem}
Let $N_{\gamma/4}$ be the covering number of the manifold $\mathcal{M}$. Under Assumptions~\ref{TauCondSupp}~\ref{ManifoldProperties1Supp}~and~\ref{manifoldprobSupp}, for any $\delta>0$, we have that the $(\epsilon, \gamma)$-L\v{C} complex constructed from $\mathcal{D}_0$ and $\mathcal{D}_1$ is homotopy equivalent to $\mathcal{M}$  with probability at least $1- \delta$ provided
\begin{align}
   \begin{split}
    &|\mathcal{D}|> \max\left\{\frac{1}{P(y=0)\rho_{\gamma/4}^0}\left[\log\left(2N_{\gamma/4}\right) + \log\left(\frac{1}{(\delta)}\right)\right],\frac{1}{P(y=1)\rho_{\gamma/4}^1}\left[\log\left(2N_{\gamma/4}\right) + \log\left(\frac{1}{(\delta)}\right)\right]\right\}
    \end{split}
\label{SamplePassiveSupp}
\end{align}
\label{Passive}
\end{theorem}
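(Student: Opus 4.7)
The plan is to reduce the theorem to verifying the hypotheses of Proposition~\ref{sufficient2Supp}. Hypothesis (a) on $\gamma$ and hypothesis (c) on $\epsilon$ are non-probabilistic constraints on the algorithm parameters; the only probabilistic content is hypothesis (b), namely that both $\mathcal{D}^0$ and $\mathcal{D}^1$ are $\frac{\gamma}{2}$-dense in $\mathcal{M}$. The entire argument therefore reduces to lower-bounding the probability of this joint density event as a function of $|\mathcal{D}|$, and then inverting to read off a sample complexity.

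First, I would fix a minimal $\gamma/4$-covering of $\mathcal{M}$ by balls $B_{\gamma/4}(\mathbf{c}_1), \ldots, B_{\gamma/4}(\mathbf{c}_{N_{\gamma/4}})$ with centers $\mathbf{c}_i \in \mathcal{M}$; the existence of such a covering with $N_{\gamma/4}$ balls is exactly the definition of the covering number appearing in the theorem. A one-line triangle inequality then shows that if each of these balls contains at least one sample from $\mathcal{D}^0$ and at least one from $\mathcal{D}^1$, then both $\mathcal{D}^0$ and $\mathcal{D}^1$ are $\frac{\gamma}{2}$-dense in $\mathcal{M}$: for any $\mathbf{p} \in \mathcal{M}$, pick a center $\mathbf{c}_i$ with $\|\mathbf{p}-\mathbf{c}_i\| \leq \gamma/4$, and any witness $\mathbf{x}$ of the desired class in $B_{\gamma/4}(\mathbf{c}_i)$ satisfies $\|\mathbf{p}-\mathbf{x}\| \leq \gamma/4 + \gamma/4 = \gamma/2$.

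Next I would bound the failure probability of this covering event. Assumption~\ref{manifoldprobSupp} guarantees that for each center $\mathbf{c}_i \in \mathcal{M}$ and each class $y \in \{0,1\}$, an i.i.d.\ draw $(\mathbf{x},y') \sim p_{XY}$ has $\mathbf{x} \in B_{\gamma/4}(\mathbf{c}_i)$ and $y' = y$ with probability at least $P(y)\,\rho_{\gamma/4}^y$. Hence the probability that none of the $|\mathcal{D}|$ i.i.d.\ samples satisfies this is at most $(1 - P(y)\rho_{\gamma/4}^y)^{|\mathcal{D}|} \leq \exp(-|\mathcal{D}|\,P(y)\rho_{\gamma/4}^y)$. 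A union bound over the $N_{\gamma/4}$ covering balls and the two classes gives an overall failure probability of at most $2 N_{\gamma/4}\,\max_{y \in \{0,1\}} \exp(-|\mathcal{D}|\,P(y)\rho_{\gamma/4}^y)$.

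Finally, I would invert this bound: setting it $\leq \delta$ and solving separately for $y=0$ and $y=1$ yields the per-class requirement $|\mathcal{D}| \geq \tfrac{1}{P(y)\rho_{\gamma/4}^y}\bigl[\log(2 N_{\gamma/4}) + \log(1/\delta)\bigr]$, and taking the maximum over $y \in \{0,1\}$ recovers exactly Eq.~\ref{SamplePassiveSupp}. Chaining this with Proposition~\ref{sufficient2Supp} then gives homotopy equivalence of the $(\epsilon,\gamma)$-L\v{C} complex to $\mathcal{M}$ with probability at least $1-\delta$. There is no real obstacle here beyond bookkeeping: the geometric content is encapsulated in Proposition~\ref{sufficient2Supp} and the triangle-inequality reduction, and the probabilistic content is a standard coupon-collector-style union bound over a finite covering, made possible precisely by the minimum-mass condition in Assumption~\ref{manifoldprobSupp}.
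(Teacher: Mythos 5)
Your argument is correct and is essentially the same proof the paper relies on: the paper does not reprove this statement but defers to Theorem 3 of \cite{ramamurthy2018topological}, whose proof is exactly your reduction to the $\frac{\gamma}{2}$-density condition of Proposition~\ref{sufficient2Supp}, followed by a minimal $\gamma/4$-covering of $\mathcal{M}$, the per-class mass lower bound $P(y)\rho_{\gamma/4}^y$ from Assumption~\ref{manifoldprobSupp}, and a union bound over the $2N_{\gamma/4}$ ball--class pairs inverted to give Eq.~\ref{SamplePassiveSupp}. No substantive difference to report.
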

The complete proof is elaborated in~\cite{ramamurthy2018topological}. 
\begin{remark}
Considering the selection of $\gamma$ needs to follow the Proposition~\ref{sufficient2Supp}(b) as a function of $\tau$ and $w$, therefore given a probability factor $\delta$, the sample complexity is dependent on $w$ and $\tau$.
\label{SuppPassiveDependent}
\end{remark}

\section{Shortest Shortest Path Algorithm}
\label{secS2Supp}
\begin{figure}[h]
 \begin{minipage}{0.5\textwidth}
\begin{algorithm}[H]
\hspace*{\algorithmicindent} \textbf{Input}: Graph $G = (\mathcal{D} , E)$, BUDGET$\leq N$ 
\caption{$S^2$: Shortest Shortest Path}
\begin{algorithmic}[1]
\State $\tilde{\mathcal{D}} \leftarrow \emptyset$ 
\State \textbf{while} 1 \textbf{do}
\State \hspace{0.3cm} $\mathbf{x}\leftarrow$ Randomly chosen unlabeled vertex
\State \hspace{0.3cm}\textbf{do}
\State\hspace{0.6cm}Add $(\mathbf{x}, f(\mathbf{x}))$ to $\tilde{\mathcal{D}} $
\State\hspace{0.6cm}Remove all the found cut-edges of G
\State\hspace{0.6cm}\textbf{if} $|\tilde{\mathcal{D}}|$ = BUDGET \textbf{then}
\State\hspace{0.9cm}\textbf{Return} $\tilde{\mathcal{D}}$
\State\hspace{0.6cm} \textbf{end if}
\State\hspace{0.3cm}\textbf{while} $x\leftarrow$MSSP($G$, $\tilde{\mathcal{D}}$) exists
\State \textbf{end while}
\end{algorithmic}
\label{S2algo}
\end{algorithm}
\end{minipage}\hspace{0.2cm}
 \begin{minipage}{0.5\textwidth}
\begin{megaalgorithm}[H]
\hspace*{\algorithmicindent} \textbf{Input}: $G = (\mathcal{D},E)$, $\tilde{\mathcal{D}}\subseteq \mathcal{D}$
\caption*{MSSP: mid-point of the shortest shortest path}
\begin{algorithmic}[1]
\State \textbf{for} each $\mathbf{x}_i, \mathbf{x}_j \in \tilde{\mathcal{D}}$ such that $f(\mathbf{x}_i) \neq f(\mathbf{x}_j)$ 
\State \hspace{0.3cm}$P_{ij}\leftarrow$ shortest path between $\mathbf{x}_i$ and $\mathbf{x}_j$ in $G$
\State \hspace{0.3cm} $\mathcal{L}_{ij}\leftarrow$ length of $P_{ij}$ ($\infty$ if no path exists)
\State\textbf{end for}
\State $i^\ast, j^\ast\leftarrow argmin_{\mathbf{x}_i, \mathbf{x}_j\in \tilde{\mathcal{D}}:f(\mathbf{x}_i)\neq f(\mathbf{x}_j)}\mathcal{L}_{ij}$  
\State \textbf{if} $(i^\ast, j^\ast)$ exists \textbf{then}
\State\hspace{0.3cm} \textbf{Return} mid-point of $P_{i^\ast j^\ast}$
\State\textbf{else}
\State\hspace{0.3cm} \textbf{Return}\quad$\emptyset$
\State\textbf{end if}
\end{algorithmic}
\end{megaalgorithm}
\end{minipage}
\label{S2V1}
\end{figure}
We use an algorithm called shortest shortest ($S^2$) path~\cite{dasarathy2015s2} to query labels in the proposed framework. As a graph-based active learning algorithm for binary classification, $S^2$ has a properties of efficiently revealing the vertices near the cut-edges of a graph. Therefore, we apply $S^2$ to the label query stage of our proposed active learning framework (See Figure~\ref{framework}). The details of $S^2$ are presented in Algorithm~\ref{S2algo}. Repetitively using the defined notations, we let $G=(\mathcal{D}, E)$ denote a graph constructed from $\mathcal{D}$. This allows us to define the cut-set $C=\{(\mathbf{x}_i, \mathbf{x}_j)|y_i\neq y_j\wedge(\mathbf{x}_i,\mathbf{x}_j)\in E\}$ and cut-boundary $\partial C=\{\mathbf{x}\in\mathcal{D}:\exists e\in C \textrm{ with } \mathbf{x}\in e\}$. $S^2$ functions to efficiently identify $\partial C$ by querying labels based on the structure of $G$. The query process is split to a uniform sampling phase and a path bisection phase. The uniform sampling serves to find a path connecting vertices of opposite labels. This corresponds to line 3 in Algorithm~\ref{S2algo}. The path bisection phase queries the mid-point of the shortest path that connects oppositely labeled vertices in the underlying graph. This corresponds to line 10 in Algorithm~\ref{S2algo}. The BUDGET in Algorithm~\ref{S2algo} represents the cardinality of the query set $\tilde{\mathcal{D}}$. $S^2$ is designed in a consideration of running limited expensive number of label queries on vertices, as a result, efficiently finding the vertices of cut-edges with the limited budget. 
Based on Theorem 1 in~\cite{dasarathy2015s2}, we provide a simplified query complexity result for recovering $\partial C$:
\begin{theorem}
Suppose a graph $G = (\mathcal{D},E)$ with a binary function $f:\mathcal{D}\xrightarrow{}\{0,1\}$ partitioning the graph $G$ into two components identically labeled. Let $\beta$ denote the proportion of the smallest components. Then for any $\delta >0$, $S^2$ will recover $C$ with probability at least 1 - $\delta$ if the complexity of queries is at least
\begin{align}
    \frac{log(1/(\beta\delta))}{log(1/(1-\beta))} +|\partial C|(\lceil log_2|\mathcal{D}| \rceil + 1)
\end{align}
\label{s2A}
\end{theorem}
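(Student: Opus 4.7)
The plan is to reproduce the two-phase analysis of $S^2$ from Dasarathy et al. 2015 in the simplified regime where the labeling function $f$ induces only two connected components (one per class) after cut-edge removal. I would partition the queries spent by the algorithm into those issued on line~3 of Algorithm~\ref{S2algo} (uniform sampling) and those issued through the MSSP subroutine (bisection), bound each contribution separately, and combine them by a union bound. Only the first phase is probabilistic; MSSP is deterministic given the labels already seen.

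For the uniform sampling phase, I would observe that, conditioned on the current state of $G$ (with previously identified cut-edges deleted), a uniformly drawn unlabeled vertex lies in the minority class with probability at least $\beta$. Hence the probability that $n$ consecutive uniform queries all fail to produce a minority-class vertex is at most $(1-\beta)^n$. Solving $(1-\beta)^n \le \beta\delta$ gives
\[
n \;\ge\; \frac{\log(1/(\beta\delta))}{\log(1/(1-\beta))},
\]
which is the first term of the claimed bound. The deliberately conservative slack factor $\beta$ on the right-hand side is what allows the subsequent union bound to close at $\delta$ rather than at a larger constant.

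For the bisection phase, once $\tilde{\mathcal{D}}$ contains an oppositely labeled pair, MSSP returns the mid-vertex of the globally shortest such path in $G$. Since any simple path has at most $|\mathcal{D}|$ vertices, repeated bisection halves the length of the candidate path at each step, so after at most $\lceil \log_2 |\mathcal{D}| \rceil$ queries the algorithm has narrowed the path down to a single cut-edge, with one additional query needed to certify and remove it. I would then argue by an amortized charging argument that the total number of such bisection rounds is bounded by $|\partial C|$: each round irreversibly removes a cut-edge from $G$, each such removal exposes only vertices already in $\partial C$, and so the number of completed rounds cannot exceed $|\partial C|$. This gives the second term $|\partial C|(\lceil \log_2 |\mathcal{D}| \rceil + 1)$. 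A final union bound — the only stochastic event is the uniform phase succeeding — yields overall success probability at least $1-\delta$.

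The main obstacle, as usual for this style of argument, is justifying the clean $|\partial C|$ factor in the bisection bound rather than a potentially larger $|C|$; this requires carefully verifying that MSSP never redundantly re-queries vertices, that each completed bisection chain strictly shrinks the residual graph, and that chains initiated from different uniform-phase seeds interact monotonically. These bookkeeping details mirror Lemma~2 and the proof of Theorem~1 in Dasarathy et al. 2015, which can be invoked essentially verbatim in the simplified two-class-component setting assumed here.
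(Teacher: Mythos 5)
Your two-phase reconstruction --- uniform sampling until the minority component is witnessed, giving the $\log(1/(\beta\delta))/\log(1/(1-\beta))$ term, then charging at most $\lceil\log_2|\mathcal{D}|\rceil+1$ bisection queries to each of at most $|\partial C|$ completed MSSP runs --- is exactly the analysis the paper relies on: the paper gives no independent proof of this theorem, stating only that it follows from Theorem~1 of \cite{dasarathy2015s2} with the path length upper-bounded by $|\mathcal{D}|$, which is precisely your simplification. Your one loosely worded step (``each round removes a cut-edge, so rounds $\le|\partial C|$'' would by itself only give $|C|$; the correct charge is that each completed run labels at least one previously unlabeled vertex of $\partial C$) is explicitly flagged by you and deferred to Lemma~2/Theorem~1 of \cite{dasarathy2015s2}, matching the paper's own treatment, so the proposal is correct and takes essentially the same route.
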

The query complexity is simplified by upper-bounding the path length with $|\mathcal{D}|$ and the complete proof can be found in~\cite{dasarathy2015s2}.  
\section{The Query Complexity Proof for the Proposed Active Learning Algorithm}
\label{ActiveTheorySupp}
The proof of Theorem~\ref{s2D} in our main content comprises of upper-bounding the query complexity to identify our target $\partial C$ and upper-bounding the sample complexity of having $\mathcal{D}^0$ and $\mathcal{D}^1$ both $\frac{\gamma}{2}$-dense in $\mathcal{M}$. \textbf{We recap Theorem~\ref{s2D} in the main content as the Theorem~\ref{s2DSupp} in as follows and provide the complete proof.} Before we move to the proof, we make one last assumption,
\begin{assumption}
  \textbf{(a)} $\sup_{\mathbf{x}\in\mathcal{M}}\mu_{\mathcal{X}}(B_{(w + \gamma)}(\mathbf{x}))<h_{(w + \gamma)} $.
 \textbf{(b)} $\mu_\mathcal{X}({\rm Tub}_{w + \gamma}(\mathcal{M}))\leq N_{w + \gamma}h_{w + \gamma}$.
 \label{manifoldproblastsupp}
\end{assumption}

Assumption~\ref{manifoldproblastsupp} upper-bounds the measure of $Tub_{w+\gamma}(\mathcal{M})$. Besides, for the convenience of the theorem derivation, we use $k$-radius neighbor paradigm to construct $G=(D, E)$. We also need the following lemma,
\begin{lemma}
Suppose $\mathcal{D}^0$ and $\mathcal{D}^1$ are $ \frac{\gamma}{2}$-dense in $\mathcal{M}$, then the graph $G=(\mathcal{D},E)$ constructed from $\mathcal{D}$ is such that $\mathcal{D}^0 \bigcap \partial C$ and $\mathcal{D}^1 \bigcap \partial C$ are both $\frac{\gamma}{2}$-dense in $\mathcal{M}$ and $\partial C\subseteq {\rm Tub}_{w+\gamma}(\mathcal{M})$ for $k=\gamma$.
\label{GraphSupp}
\end{lemma}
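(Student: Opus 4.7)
The lemma has two pieces: (i) the density claim that $\mathcal{D}^0 \cap \partial C$ and $\mathcal{D}^1 \cap \partial C$ are both $\frac{\gamma}{2}$-dense in $\mathcal{M}$, and (ii) the containment claim $\partial C \subseteq {\rm Tub}_{w+\gamma}(\mathcal{M})$. My plan is to attack these independently. The choice $k = \gamma$ in the construction of $G$ is the essential bridge between the hypothesized sample-density on $\mathcal{M}$ and the combinatorial structure of the cut-boundary.

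For (i), I would fix an arbitrary $\mathbf{p} \in \mathcal{M}$ and use the $\frac{\gamma}{2}$-density hypothesis to pick a pair $\mathbf{x}_0 \in \mathcal{D}^0 \cap B_{\gamma/2}(\mathbf{p})$ and $\mathbf{x}_1 \in \mathcal{D}^1 \cap B_{\gamma/2}(\mathbf{p})$. The triangle inequality immediately gives $\norm{\mathbf{x}_0 - \mathbf{x}_1}_2 < \gamma = k$, so $\{\mathbf{x}_0, \mathbf{x}_1\}$ is an edge of the $k$-radius neighbor graph. Since the endpoints carry opposite labels this edge lies in the cut-set $C$, and consequently both vertices belong to $\partial C$. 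That exhibits one element of each of $\mathcal{D}^0 \cap \partial C$ and $\mathcal{D}^1 \cap \partial C$ within distance $\gamma/2$ of $\mathbf{p}$, which closes the density claim.

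For (ii), I would pick an arbitrary $\mathbf{x} \in \partial C$, without loss of generality with label $0$, and use some cut-edge incident to $\mathbf{x}$ to produce an $\mathbf{x}' \in \mathcal{D}$ of the opposite label with $\norm{\mathbf{x} - \mathbf{x}'}_2 \leq \gamma$. Along the straight segment $[\mathbf{x}, \mathbf{x}']$ the Bayes classifier $f$ switches from $0$ to $1$, so the switching point $\mathbf{p}$ lies on $\mathcal{M}$. A short triangle-inequality bookkeeping then yields $d(\mathbf{x}, \mathcal{M}) \leq \norm{\mathbf{x} - \mathbf{p}}_2 \leq \norm{\mathbf{x} - \mathbf{x}'}_2 \leq \gamma \leq w + \gamma$, i.e.\ $\mathbf{x} \in {\rm Tub}_{w+\gamma}(\mathcal{M})$, as desired.

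The only delicate step is rigorously justifying that $[\mathbf{x}, \mathbf{x}']$ actually meets $\mathcal{M}$. Morally this is an intermediate-value-theorem argument applied to $p_{Y\mid X}(1\mid \cdot) - 1/2$, and it is clean whenever $p_{Y\mid X}$ is continuous along the segment -- a property consistent with the condition-number regularity imposed on $\mathcal{M}$ in Assumption~\ref{TauCondSupp}. A slightly more robust route, which sidesteps the continuity technicality, is to split into cases: if $\mathbf{x}' \in \mathfrak{D}$ then $\mathbf{x}' \in {\rm Tub}_w(\mathcal{M})$ by the definition of $w$, and the triangle inequality directly gives $d(\mathbf{x}, \mathcal{M}) \leq w + \gamma$; otherwise the continuity argument applies and yields the tighter bound $\gamma$. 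Either way the containment holds, and the extra slack $w$ in the tube radius is precisely the price paid for accommodating the possibility that the opposite-class neighbor sits outside the overlap region.
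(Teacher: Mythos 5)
Your proof is correct and follows essentially the same route as the paper: the paper also obtains the density claim by noting that, for each $\mathbf{p}\in\mathcal{M}$, the opposite-class points inside $B_{\gamma/2}(\mathbf{p})$ are within distance $\gamma=k$ and hence form a cut-edge placing both in $\partial C$, and it obtains $\partial C\subseteq {\rm Tub}_{w+\gamma}(\mathcal{M})$ by the same ``$\gamma$-excess beyond $\mathfrak{D}\subseteq{\rm Tub}_w(\mathcal{M})$'' reasoning it uses for Lemma~\ref{rdensesupp}. Your explicit case split on whether the opposite-class neighbor lies in $\mathfrak{D}$ is, if anything, slightly more careful than the paper's one-line justification of the containment.
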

$\mathcal{D}^0$ and $\mathcal{D}^1$ being $\frac{\gamma}{2}$-dense in $\mathcal{M}$ indicates the longest distance between $\mathbf{x}_i\in \mathcal{D}^0\bigcap B_{\frac{\gamma}{2}}(\mathbf{p})$ and $\mathbf{x}_j\in \mathcal{D}^1\bigcap B_{\frac{\gamma}{2}}(\mathbf{p})$ for $\mathbf{p}\in\mathcal{M}$ is $\gamma$.   Therefore, letting $k=\gamma$ as Lemma~\ref{GraphSupp} suggests will result in $\mathcal{D}^0\bigcap\partial C$ and $\mathcal{D}^1\bigcap\partial C$ both  being $\frac{\gamma}{2}$-dense in $\mathcal{M}$. Similar to Lemma~\ref{rdensesupp}, constructing a graph with a $\gamma$ radius inevitably results in a subset of points of $\partial C$ leaking out of ${\rm Tub}_w(\mathcal{M})$ and we formally have $\partial C\subseteq {\rm Tub}_{w + \gamma}(\mathcal{M})$. Proposition~\ref{sufficient2Supp} states the proper choice of $\gamma$. With the above introduced, one can see the key intuition behind our approach is turning $S^2$ to focus on labels of points falling within ${\rm Tub}_{w + \gamma}(\mathcal{M})$.  As we show below, this is done in a remarkably query efficient manner; when the labeled data is obtained we can construct an L\v{C} complex and this allows us to find the homology of the manifold $\mathcal{M}$.
\begin{theorem}
Let $N_{w+\gamma}$ be the covering number of the manifold $\mathcal{M}$. Under Assumptions~\ref{TauCondSupp}~\ref{ManifoldProperties1Supp} ~\ref{manifoldprobSupp} and~\ref{manifoldproblastsupp}, for any $\delta>0$, we have that the $(\epsilon, \gamma)$-L\v{C} complex estimated by our framework is homotopy equivalent to $\mathcal{M}$  with probability at least $1- \delta$ provided 
\begin{align}
    &|\tilde{\mathcal{D}}|> \frac{\log\left\{1/\left[\beta\left(1-\sqrt{1-\delta}\right)\right]\right\}}{\log\left[1/(1-\beta)\right]} + |\mathcal{D}|N_{w + \gamma}h_{w + \gamma}(\lceil \log_2|\mathcal{D}|\rceil + 1)
    \label{s2DeqSupp}
\end{align}
where
\begin{align}
    \begin{split}
    &|\mathcal{D}|> \max\left\{\frac{1}{P(y=0)\rho_{\gamma/4}^0}\left[\log\left(2N_{\gamma/4}\right) + \log\left(\frac{1}{(1-\sqrt{1-\delta})}\right)\right],\right.\\
    &\left.\frac{1}{P(y=1)\rho_{\gamma/4}^1}\left[\log\left(2N_{\gamma/4}\right) + \log\left(\frac{1}{(1-\sqrt{1-\delta})}\right)\right]\right\}
    \end{split}
\label{SamplePassive2Supp}.
\end{align}
\label{s2DSupp}
\end{theorem}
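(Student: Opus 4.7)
The plan is to reduce the theorem to two independent probabilistic events and combine them, then invoke Proposition~\ref{sufficient2Supp} to conclude homotopy equivalence. Let $E_1$ be the event that $\mathcal{D}^0$ and $\mathcal{D}^1$ are both $\gamma/2$-dense in $\mathcal{M}$, and let $E_2$ be the event that the $S^2$ algorithm, run on the $\gamma$-radius graph $G=(\mathcal{D},E)$, queries a set $\tilde{\mathcal{D}}$ that contains the full cut-boundary $\partial C$. Under $E_1 \cap E_2$, Lemma~\ref{GraphSupp} guarantees that $\tilde{\mathcal{D}}^0 := \tilde{\mathcal{D}} \cap \mathcal{D}^0$ and $\tilde{\mathcal{D}}^1 := \tilde{\mathcal{D}} \cap \mathcal{D}^1$ inherit the $\gamma/2$-density property and lie inside ${\rm Tub}_{w+\gamma}(\mathcal{M})$; together with Assumption~\ref{ManifoldProperties1Supp} and the stated range for $\epsilon$, Proposition~\ref{sufficient2Supp} then certifies that the $(\epsilon,\gamma)$-L\v{C} complex built from $\tilde{\mathcal{D}}$ is homotopy equivalent to $\mathcal{M}$. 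So it suffices to control $\Pr[E_1 \cap E_2] \geq 1-\delta$.

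For $E_1$, I would directly apply Theorem~\ref{Passive} (the passive-learning sample complexity from \cite{ramamurthy2018topological}), but with the per-class failure probability inflated to $1-\sqrt{1-\delta}$ instead of $\delta$. This substitution is precisely what produces Eq.~\ref{SamplePassive2Supp} and gives $\Pr[E_1] \geq \sqrt{1-\delta}$.

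For $E_2$, I would condition on $\mathcal{D}$ (so the topology of $G$ and the cut-boundary $\partial C$ are frozen) and treat the remaining randomness as the internal coin flips of $S^2$. Theorem~\ref{s2A} then says that $S^2$ recovers $\partial C$ with conditional probability at least $\sqrt{1-\delta}$ as long as the query budget exceeds $\frac{\log\{1/[\beta(1-\sqrt{1-\delta})]\}}{\log[1/(1-\beta)]} + |\partial C|(\lceil\log_2|\mathcal{D}|\rceil+1)$. To convert this into the stated bound, I would use Lemma~\ref{GraphSupp} (which puts $\partial C \subseteq {\rm Tub}_{w+\gamma}(\mathcal{M})$) together with Assumption~\ref{manifoldproblastsupp}(b) and (c), bounding
\begin{equation*}
|\partial C| \;\leq\; |\mathcal{D}|\cdot\mu_{\mathcal X}\bigl({\rm Tub}_{w+\gamma}(\mathcal{M})\bigr) \;\leq\; |\mathcal{D}|\, N_{w+\gamma} h_{w+\gamma}.
\end{equation*}
Plugging this into Theorem~\ref{s2A} yields the $|\tilde{\mathcal{D}}|$ bound in Eq.~\ref{s2DeqSupp}. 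Because, conditional on $\mathcal{D}$, the event $E_2$ depends only on the independent $S^2$ randomness while $E_1$ is measurable with respect to $\mathcal{D}$, we obtain $\Pr[E_1 \cap E_2] = \mathbb{E}[\mathbf{1}_{E_1}\Pr[E_2 \mid \mathcal{D}]] \geq \sqrt{1-\delta}\cdot\sqrt{1-\delta} = 1-\delta$.

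The main obstacle I expect is the bound on $|\partial C|$: treating $|\mathcal{D}|\mu_{\mathcal{X}}({\rm Tub}_{w+\gamma}(\mathcal{M}))$ as a deterministic ceiling is somewhat loose and really reflects an expected-count argument, so some care is needed either to promote it to a uniform bound (e.g., by absorbing fluctuations into constants or by treating it as a conservative deterministic surrogate under Assumption~\ref{manifoldproblastsupp}) or to pay a small additional probability via a Chernoff-type concentration and redistribute the failure budget across $E_1$, $E_2$, and the $|\partial C|$ bound. A secondary subtlety is keeping the independence argument clean: $E_1$ and $|\partial C|$ both depend on $\mathcal{D}$, so the cleanest route is to condition on $\mathcal{D}$ throughout the $S^2$ step rather than invoking outright independence, which is what the calculation above does.
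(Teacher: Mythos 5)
Your proposal follows essentially the same route as the paper's proof: the same decomposition into the density event and the $S^2$-recovery event, the same use of Theorem~\ref{Passive} and Theorem~\ref{s2A} each at confidence level $\sqrt{1-\delta}$, the same bound $|\partial C|\leq|\mathcal{D}|N_{w+\gamma}h_{w+\gamma}$ via Lemma~\ref{GraphSupp} and Assumption~\ref{manifoldproblastsupp}, and the same product of the two probabilities to reach $1-\delta$ (the paper phrases this as $P(E_b)=P(E_b\mid E_a)P(E_a)$, which matches your conditioning on $\mathcal{D}$). The looseness you flag in treating $|\mathcal{D}|\,\mu_{\mathcal{X}}({\rm Tub}_{w+\gamma}(\mathcal{M}))$ as a deterministic ceiling on $|\partial C|$ is present in the paper's argument as well, which likewise invokes it without a concentration step.
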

\begin{proof}
Let $E_a$ denote an event that $\mathcal{D}^0$ and $\mathcal{D}^1$ are both $\frac{\gamma}{2}$-dense in $\mathcal{M}$. Let $E_b$ denote an event that the L\v{C} complex constructed from the query set $\tilde{\mathcal{D}}$ is homotopy equivalent to $\mathcal{M}$. Clearly $E_b$ never happens if $E_a$ does not happen due to not satisfying the condition (b) in Proposition~\ref{sufficient2Supp}, and this results the conditional probability $P(E_b|\overline{E_a})=0$.
Now, we expand the probability of $E_b$ as follow:
\begin{align}
    P(E_b) &= P(E_b|E_a)P(E_a) + P(E_b|\overline{E_a})P(\overline{E_a})\nonumber\\
           &= P(E_b|E_a)P(E_a)
    \label{uniprob}
\end{align}
We first prove a query complexity result for the event $E_b|E_a$, i.e., how likely the event $E_b|E_a$ would happen with a certain amount of queries. Similarly, Eq.~\ref{SamplePassiveSupp} already provides a sample complexity result on the occurrence of the event $E_a$. With the Eq.~\ref{uniprob}, we unify both complexity results and derive the theorem. 
We now consider $P(E_b|E_a)$. The query set $\tilde{\mathcal{D}}$  requires that $\tilde{\mathcal{D}}^0$ and $\tilde{\mathcal{D}}^1$ are $\frac{\gamma}{2}$-dense in $\mathcal{M}$ for $E_b$ to happen. This can be surely achieved by constructing appropriate $k$-radius near neighbor/$k$ nearest neighbor graph $G=(\mathcal{D}, E)$ stated by Lemma~\ref{GraphSupp}, provided $\partial C\subseteq\tilde{\mathcal{D}}$. Hypothetically if a proper $k$ for construction of $G$ is selected, the event $E_b$ becomes completely identifying the examples in $\partial C$ through label querying. Theorem~\ref{s2A} upper-bounds the query complexity of finding $\partial C$ with a probably correct result. As the Assumption~\ref{manifoldproblastsupp} holds, we further upper-bound $|\partial C|$ by $|\partial C|\leq |\mathcal{D}|N_{w+\gamma}h_{w+\gamma}$.  This gives us the query complexity of having $E_b|E_a$ happen:  
\begin{align}
    |\tilde{\mathcal{D}}|> \frac{log(1/(\beta\alpha))}{log(1/(1-\beta))} + |\mathcal{D}|N_{w+\gamma}h_{w+\gamma}|log_2|\mathcal{D}|
\end{align}
with the probability at least $1-\alpha$. We now turn to $P(E_a)$. The occurrence of $E_a$ is a dual implication of the L\v{C} complex constructed from $\mathcal{D}$ being homotopy to $\mathcal{M}$. Therefore, reusing the result in Theorem~\ref{Passive} (Eq.~\ref{SamplePassiveSupp}), we get if 
\begin{align}
   |\mathcal{D}|> max\left(\frac{1}{P(y=0)\rho_{\gamma/4}^0}\left(log\left(2N_{\gamma/4}\right) + log\left(\frac{1}{\eta}\right)\right), \frac{1}{P(y=1)\rho_{\gamma/4}^1}\left(log\left(2N_{\gamma/4}\right) + log\left(\frac{1}{\eta}\right)\right)\right)
\end{align}
then $E_a$ happens with the probability at least $1-\eta$.

Picking $1-\alpha=\sqrt{1-\delta}$ and $1-\eta=\sqrt{1-\delta}$, we can unify the sample complexity results of $E_b|E_a$ and $E_a$ to $E_b$ and complete the proof.
\end{proof}

\section{Numerical Comparison for the Active Learning and Passive Learning Algorithms}
\label{NumaricialSupp}

As Eq.~\ref{s2DeqSupp} and Eq.~\ref{SamplePassiveSupp} directly provide the upper-bound query/sample complexity results of the active learning and passive learning methods, we can numerically compare the two methods. Herein, we provide a description of the evaluation; for implementation details we refer the reader to our \href{https://github.com/wayne0908/Active-Learning-Homology}{code}. 

We created a stylized example illustrated in Figure~\ref{ManifoldVizSupp}. We assume the feature space/domain $\mathcal{X}$ is a square area. In the domain $\mathcal{X}$, we draw samples generated from $p_{XY}$ with a circular decision boundary of radius $\tau$. We further use $w$ to denote the radius of a smallest ${\rm Tub}_r(\mathcal{M})$ to enclose the overlap $\mathfrak{D}$ between two classes. Both $\tau$ and $w$ are intrinsic properties of $\mathcal{M}$ and $p_{XY}$. Given these two properties, we set $\gamma = (\sqrt{9}-\sqrt{8})\tau - w - 10^{-5}$ to satisfy Proposition~\ref{sufficient2Supp}(b). We make several additional assumptions regarding the problem in order to conduct the numerical experiments. Let us suppose that $\mathcal{X}$ is a square of $5\times 5$ units. We assume $\mathcal{D}^0$ and $\mathcal{D}^1$ are uniformly distributed in the subspace $\mathcal{X}_0\subseteq \mathcal{X}$ and subspace $\mathcal{X}_1\subseteq\mathcal{X}$. Let $\mathcal{X}_0\bigcap \mathcal{X}_1={\rm Tub}_w(\mathcal{M})$ and $\mathcal{X}_0\bigcup \mathcal{X}_1=\mathcal{X}$. Class-conditional distributions $p_{X|0}$ and $p_{X|1}$ are both uniform density functions in $\mathcal{X}_0$ and $\mathcal{X}_1$ such that class $0$ and $1$ completely overlap in ${\rm Tub}_w(\mathcal{M})$. Furthermore, we have $Area(\mathcal{X}_0)=Area(\mathcal{X})-\pi(\tau - w)^2=25-\pi(\tau - w)^2$ and $Area(\mathcal{X}_1)=\pi (\tau + w)^2$.
Having the uniform probability density $d_0=\frac{1}{Area(\mathcal{X}_0)}$ for class $0$ and $d_1=\frac{1}{Area(\mathcal{X}_1)}$ for class 1, we can easily compute the actual values of  $h_{(w+\gamma)}=\mu_\mathcal{X}(B_{w+\gamma}(\mathbf{x}))$, $\rho^0_{\gamma/4}=\mu_{\mathcal{X}|0}(B_{\gamma/4}(\mathbf{x}))$ and $\rho^1_{\gamma/4}=\mu_{\mathcal{X}|1}(B_{\gamma/4}(\mathbf{x}))$ in Eq.~\ref{SamplePassiveSupp} and Eq.~\ref{s2DeqSupp} by simple algebra operations. $N_{\gamma/4}$ in Eq.~\ref{SamplePassiveSupp} indicates the cover number of $\mathcal{M}$ realized by $\frac{\gamma}{4}-$ balls. We simulate $N_{\gamma/4}$ by covering $\mathcal{M}$ with least number of  $B_{\gamma/4}(\mathbf{x})$ on $\mathcal{M}$. The same operations can be applied to obtain $N_{w + \gamma}$ in Eq.~\ref{s2DeqSupp}. $\beta$ in Eq.~\ref{s2DeqSupp} indicates the proportion of the smallest component with the datapoints identically labelled in $G=(\mathcal{D}, E)$. For $G$ constructed by the datapoints in our created stylized example, there are only two such components thus each component contains all the datapoints from class 0 or 1. Therefore, $\beta$ is same as the mixture probability where $\beta=P(y=1)$. We set $P(y=1)=\frac{\pi\tau^2}{25}$ such that the probability accessing $\mathcal{M}$ by samples generated from $p_{XY}$ increases with $\tau$.  

We compare the sample complexity results by fixing $w$ and varying $\tau$ or fixing $\tau$ and varying $w$. For the case of fixing $w$, we vary $\tau$ from 0.1 to 0.7 and set $\delta=0.1$ and $w=10^{-10}$. For the case of fixing $\tau$, on the other hand, we vary $w$ from  $10^{-10}$ to $1.75\times 10^{-2}$ and fix $\delta=0.1$ and $\tau=0.1$. Having $w$, $\tau$ and $\delta$, we quantify other variables in Eq.~\ref{s2DeqSupp} and Eq.~\ref{SamplePassiveSupp} with the method described above and therefore acquire the query complexity for the active learning and the sample complexity for the passive learning. We calculate the ratio of the query complexity to the sample complexity and the results are shown in Figure~\ref{numcomp}. As expected, the proposed active learning algorithm has a significant complexity gain compared to the passive learning case, especially for smaller values of $\tau$ and $w$.    
\begin{figure}[!htb]
\centering
\captionsetup[subfigure]{justification=centering}
\begin{subfigure}{0.4\linewidth}
 \includegraphics[width=1\linewidth]{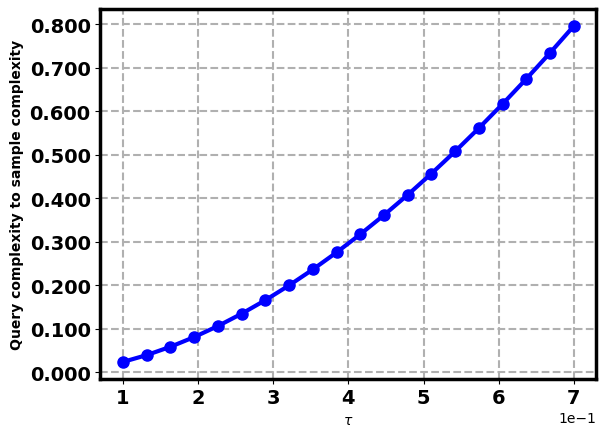}
 \vspace{-0.7cm}
\caption{\footnotesize{Varying $\tau$}.}
\end{subfigure}\hspace{0.6cm}
\begin{subfigure}{0.4\linewidth}
 \includegraphics[width=1\linewidth]{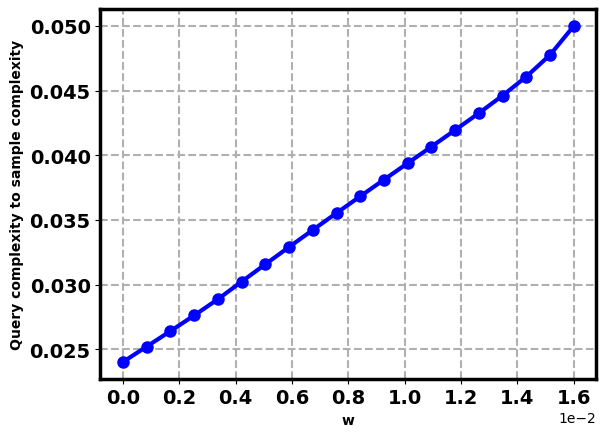}
  \vspace{-0.7cm}
\caption{\footnotesize{Varying $w$}.}
\end{subfigure}
\caption{\footnotesize{The ratio of query complexity to sample complexity by varying $\tau$ or $w$}.}
\label{numcomp}
\end{figure}
\section{Complete Experimental Results}
\label{CompExpSupp}
We provide comprehensive performance results evaluated from using the characteristics of homology group of dimension 0 ($\beta_0$, PD$_1$) and dimension 1 ($\beta_1$, PD$_1$). 
\subsection{Experimental Results on Synthetic Data}
\begin{figure}[h]
\vspace{-0.4cm}
\centering
\hspace*{\fill}
\begin{subfigure}{.119\linewidth}
  \centering
  \captionsetup{justification=centering}
  \caption*{5\%}
 \vspace{-0.3cm}
  \includegraphics[width=\linewidth]{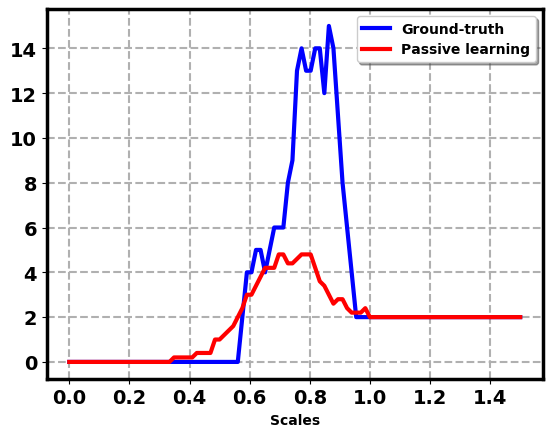}
\end{subfigure}
\begin{subfigure}{.119\linewidth}
  \centering
  \captionsetup{justification=centering}
  \caption*{10\%}
  \vspace{-0.3cm}
  \includegraphics[width=\linewidth]{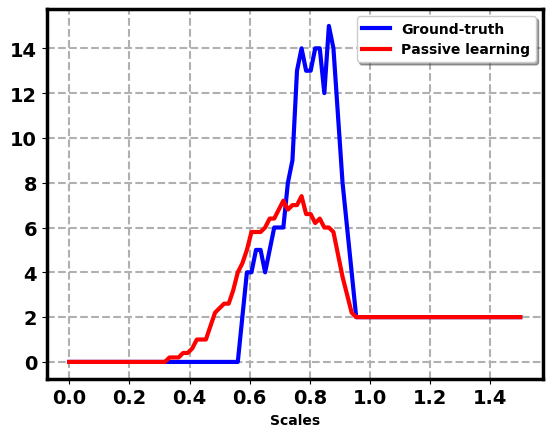}
\end{subfigure}
\begin{subfigure}{.119\linewidth}
  \centering
  \captionsetup{justification=centering}
  \caption*{15\%}
  \vspace{-0.3cm}
  \includegraphics[width=\linewidth]{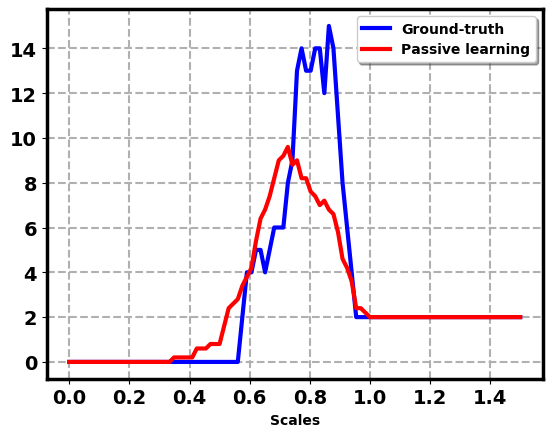}
\end{subfigure}
\begin{subfigure}{.119\linewidth}
  \centering
  \captionsetup{justification=centering}
  \caption*{20\%}
  \vspace{-0.3cm}
  \includegraphics[width=\linewidth]{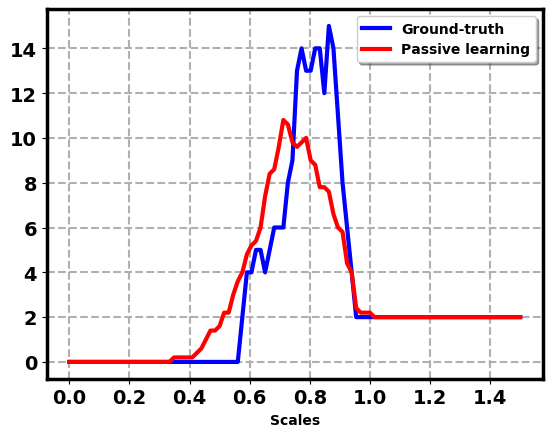}
\end{subfigure}
\begin{subfigure}{.119\linewidth}
  \centering
  \captionsetup{justification=centering}
  \caption*{25\%}
  \vspace{-0.3cm}
  \includegraphics[width=\linewidth]{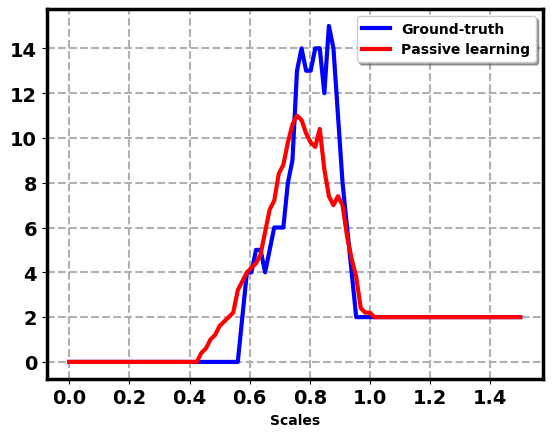}
\end{subfigure}
\begin{subfigure}{.119\linewidth}
  \centering
  \captionsetup{justification=centering}
  \caption*{30\%}
  \vspace{-0.3cm}
  \includegraphics[width=\linewidth]{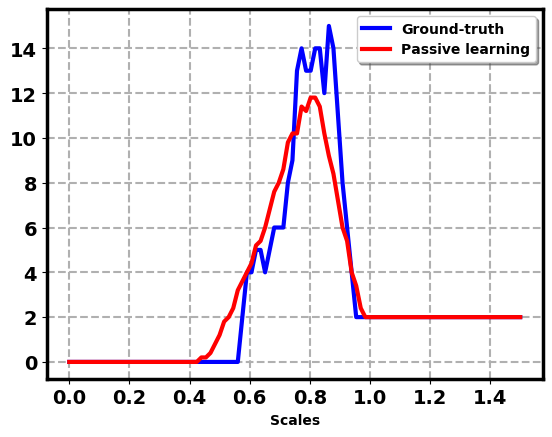}
\end{subfigure}
\begin{subfigure}{.119\linewidth}
  \centering
  \captionsetup{justification=centering}
  \caption*{35\%}
  \vspace{-0.3cm}
  \includegraphics[width=\linewidth]{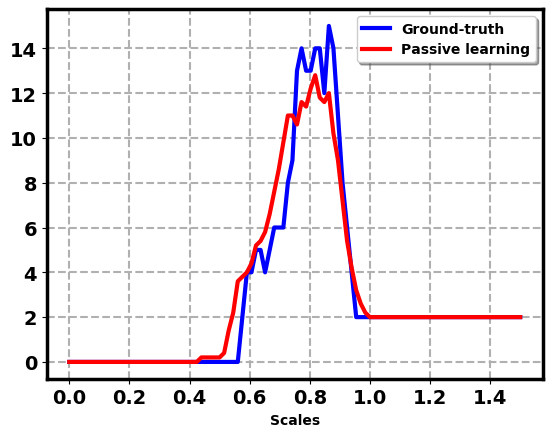}
\end{subfigure}
\begin{subfigure}{.119\linewidth}
  \centering
  \captionsetup{justification=centering}
  \caption*{40\%}
  \vspace{-0.3cm}
  \includegraphics[width=\linewidth]{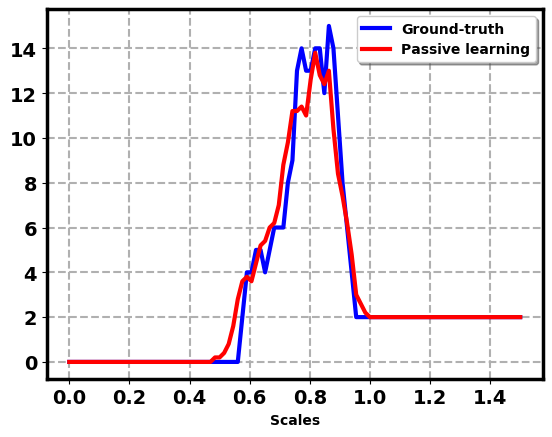}
\end{subfigure}
\\
\begin{subfigure}{.119\linewidth}
\captionsetup{justification=centering}
  \centering
  \includegraphics[width=\linewidth]{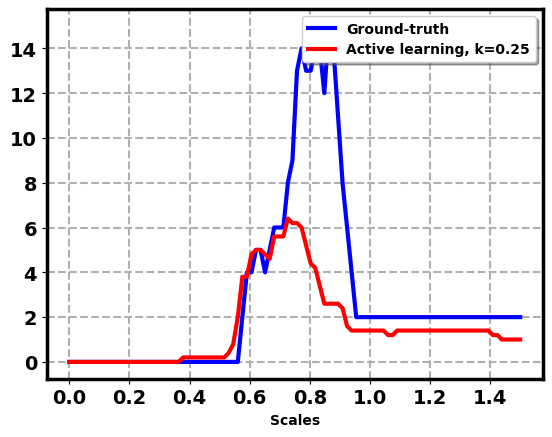}
\end{subfigure}
\begin{subfigure}{.119\linewidth}
\captionsetup{justification=centering}
  \centering
  \includegraphics[width=\linewidth]{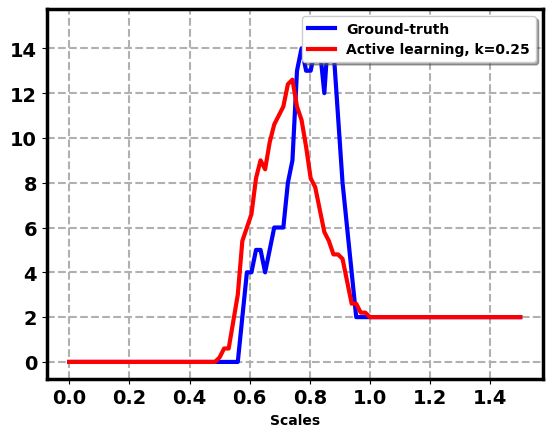}
\end{subfigure}
\begin{subfigure}{.119\linewidth}
\captionsetup{justification=centering}
  \centering
  \includegraphics[width=\linewidth]{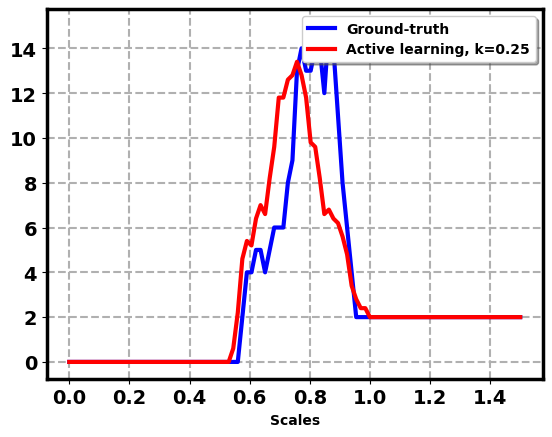}
\end{subfigure}
\begin{subfigure}{.119\linewidth}
\captionsetup{justification=centering}
  \centering
  \includegraphics[width=\linewidth]{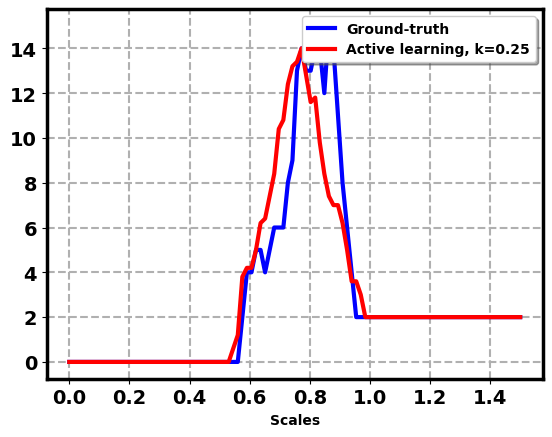}
\end{subfigure}
\begin{subfigure}{.119\linewidth}
\captionsetup{justification=centering}
  \centering
  \includegraphics[width=\linewidth]{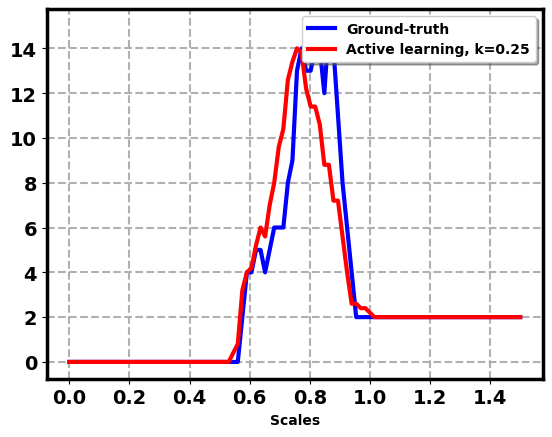}
\end{subfigure}
\begin{subfigure}{.119\linewidth}
\captionsetup{justification=centering}
  \centering
  \includegraphics[width=\linewidth]{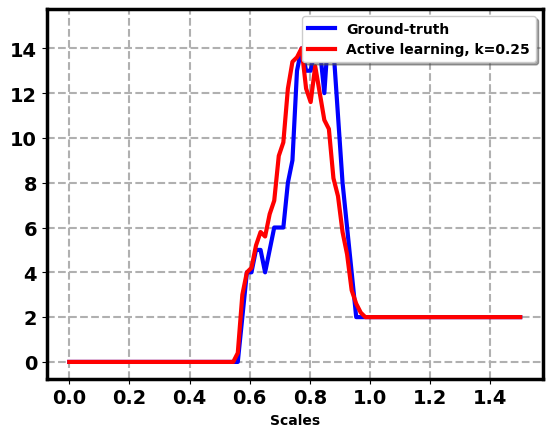}
\end{subfigure}
\begin{subfigure}{.119\linewidth}
\captionsetup{justification=centering}
  \centering
  \includegraphics[width=\linewidth]{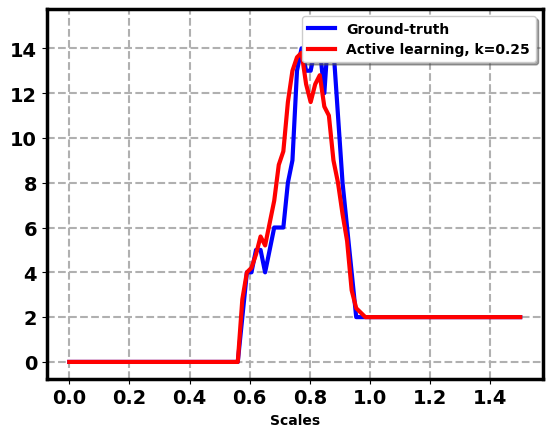}
\end{subfigure}
\begin{subfigure}{.119\linewidth}
\captionsetup{justification=centering}
  \centering
  \includegraphics[width=\linewidth]{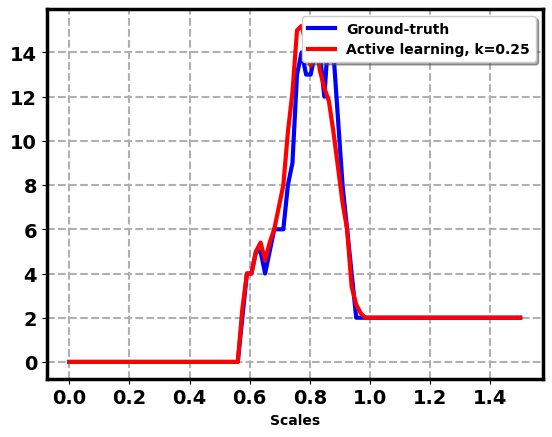}
\end{subfigure}\\
\begin{subfigure}{.119\linewidth}
\captionsetup{justification=centering}
  \centering
  \includegraphics[width=\linewidth]{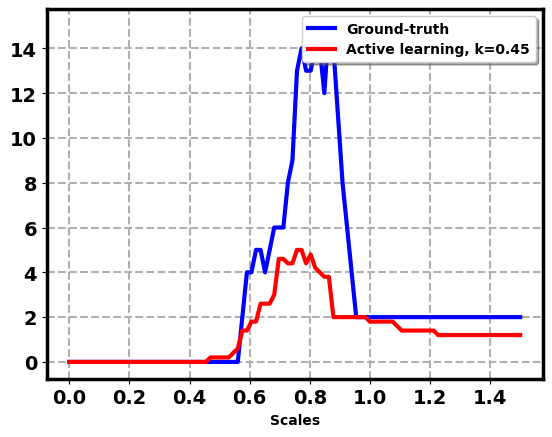}
\end{subfigure}
\begin{subfigure}{.119\linewidth}
\captionsetup{justification=centering}
  \centering
  \includegraphics[width=\linewidth]{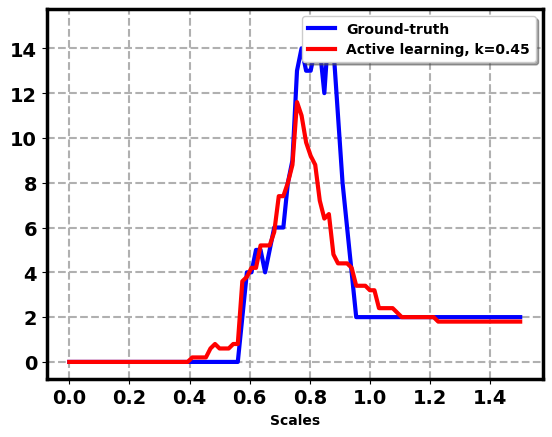}
\end{subfigure}
\begin{subfigure}{.119\linewidth}
\captionsetup{justification=centering}
  \centering
  \includegraphics[width=\linewidth]{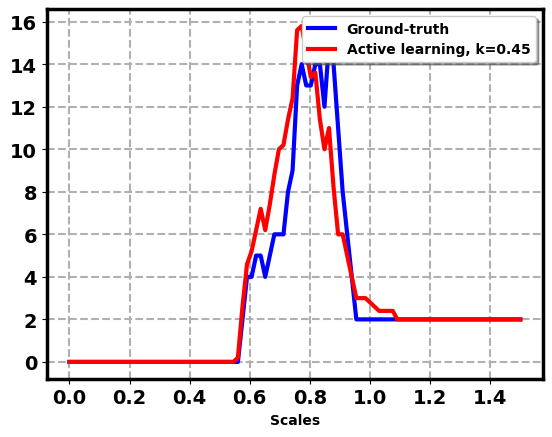}
\end{subfigure}
\begin{subfigure}{.119\linewidth}
\captionsetup{justification=centering}
  \centering
  \includegraphics[width=\linewidth]{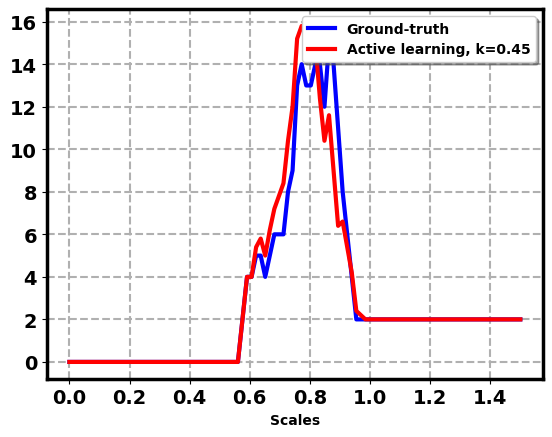}
\end{subfigure}
\begin{subfigure}{.119\linewidth}
\captionsetup{justification=centering}
  \centering
  \includegraphics[width=\linewidth]{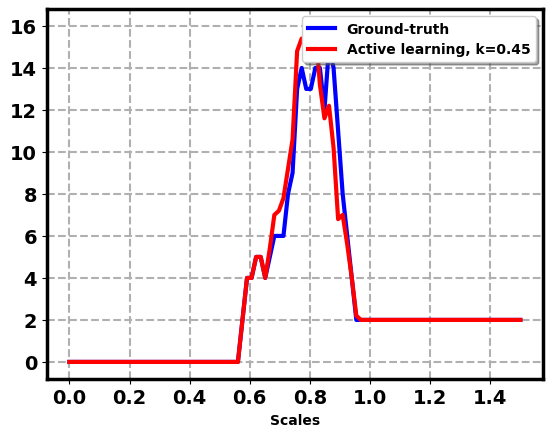}
\end{subfigure}
\begin{subfigure}{.119\linewidth}
\captionsetup{justification=centering}
  \centering
  \includegraphics[width=\linewidth]{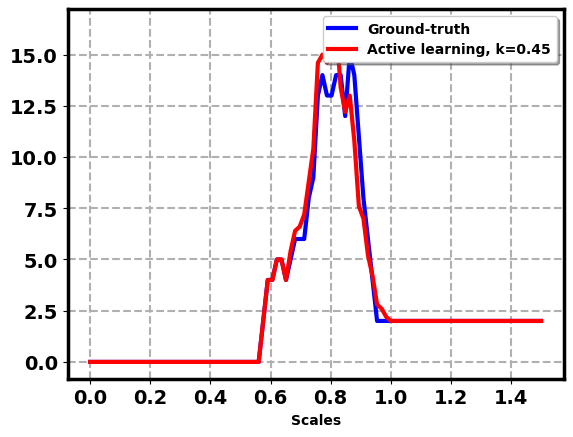}
\end{subfigure}
\begin{subfigure}{.119\linewidth}
\captionsetup{justification=centering}
  \centering
  \includegraphics[width=\linewidth]{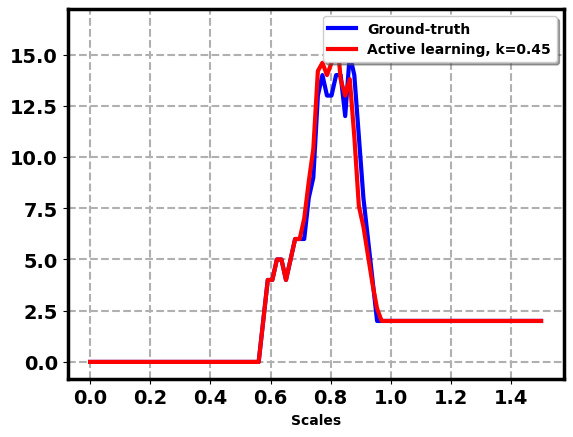}
\end{subfigure}
\begin{subfigure}{.119\linewidth}
\captionsetup{justification=centering}
  \centering
  \includegraphics[width=\linewidth]{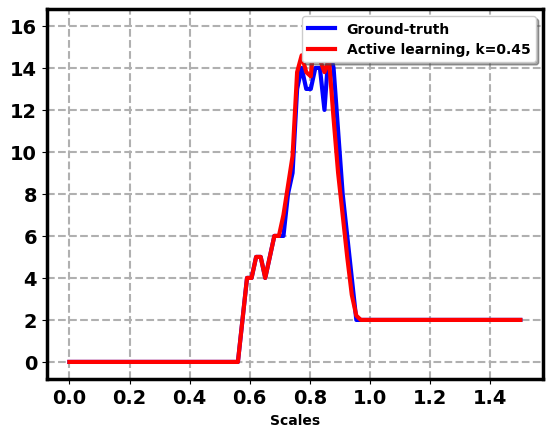}
\end{subfigure}\\
\begin{subfigure}{.119\linewidth}
\captionsetup{justification=centering}
  \centering
  \includegraphics[width=\linewidth]{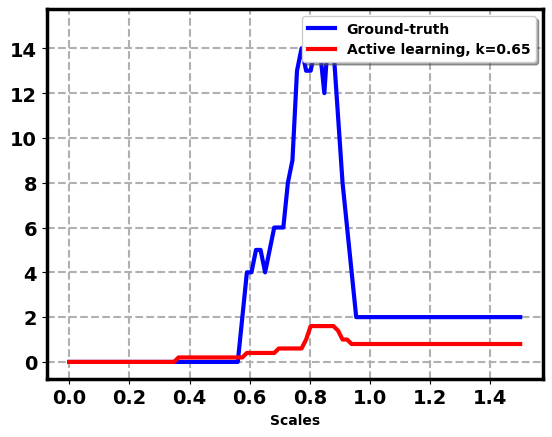}
\end{subfigure}
\begin{subfigure}{.119\linewidth}
\captionsetup{justification=centering}
  \centering
  \includegraphics[width=\linewidth]{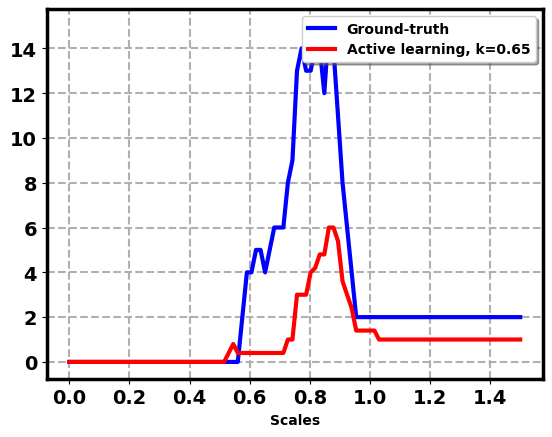}
\end{subfigure}
\begin{subfigure}{.119\linewidth}
\captionsetup{justification=centering}
  \centering
  \includegraphics[width=\linewidth]{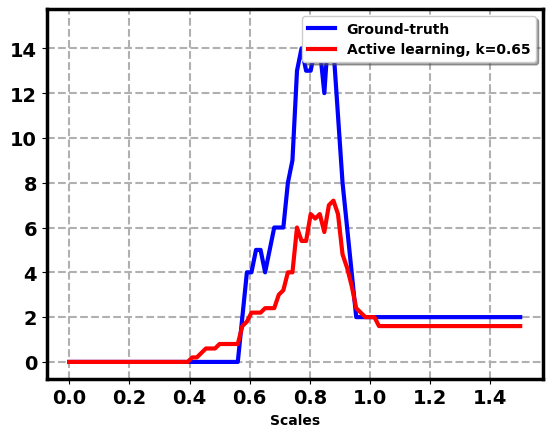}
\end{subfigure}
\begin{subfigure}{.119\linewidth}
\captionsetup{justification=centering}
  \centering
  \includegraphics[width=\linewidth]{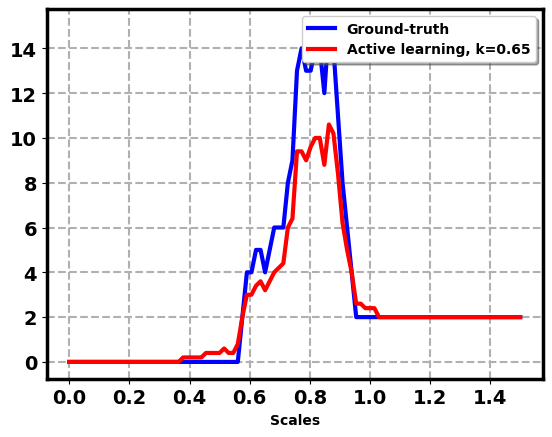}
\end{subfigure}
\begin{subfigure}{.119\linewidth}
\captionsetup{justification=centering}
  \centering
  \includegraphics[width=\linewidth]{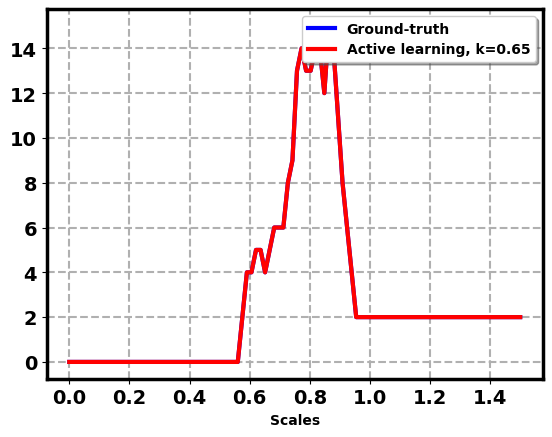}
\end{subfigure}
\begin{subfigure}{.119\linewidth}
\captionsetup{justification=centering}
  \centering
  \includegraphics[width=\linewidth]{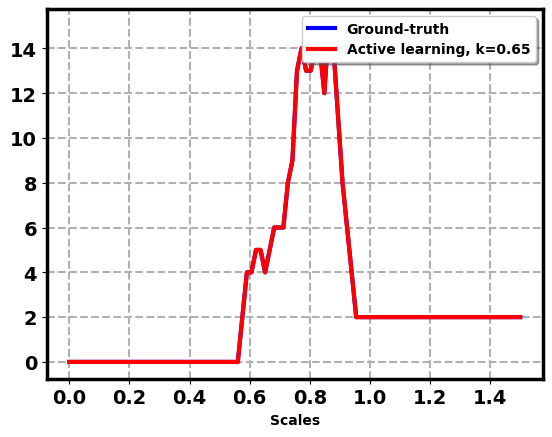}
\end{subfigure}
\begin{subfigure}{.119\linewidth}
\captionsetup{justification=centering}
  \centering
  \includegraphics[width=\linewidth]{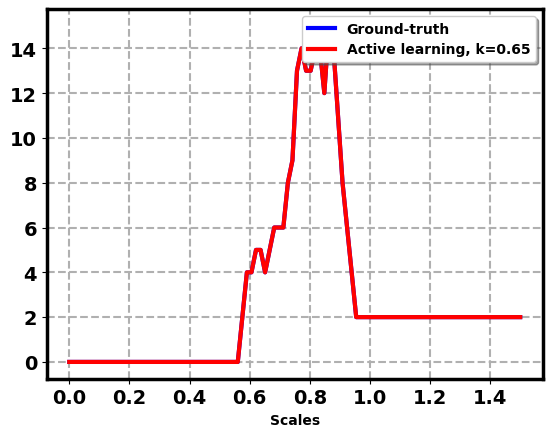}
\end{subfigure}
\begin{subfigure}{.119\linewidth}
\captionsetup{justification=centering}
  \centering
  \includegraphics[width=\linewidth]{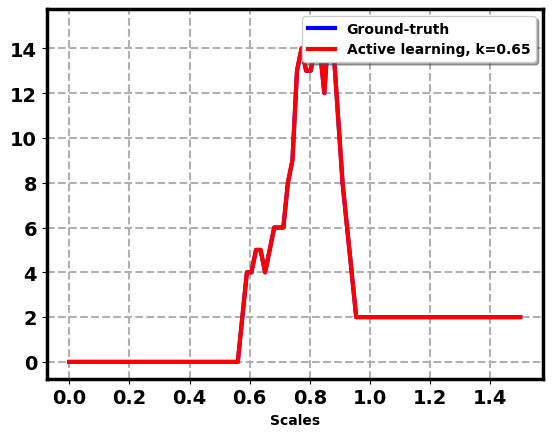}
\end{subfigure}
\caption{\footnotesize{$\beta_0$ estimated at the cost of different proportion of unlabelled data pool by the passive learning (first row) and active leanring methods with $0.25$ (second), $0.45$ (third) and $0.65$ (forth) radius near neighbors graphs.}}
\label{SynH0V1}
\end{figure} 

\begin{figure}[h]
\centering
\hspace*{\fill}
\begin{subfigure}{.119\linewidth}
  \centering
  \captionsetup{justification=centering}
  \caption*{5\%}
 \vspace{-0.3cm}
  \includegraphics[width=\linewidth]{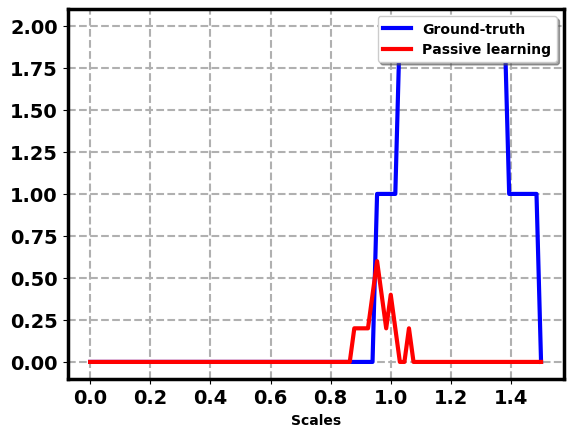}
\end{subfigure}
\begin{subfigure}{.119\linewidth}
  \centering
  \captionsetup{justification=centering}
  \caption*{15\%}
  \vspace{-0.3cm}
  \includegraphics[width=\linewidth]{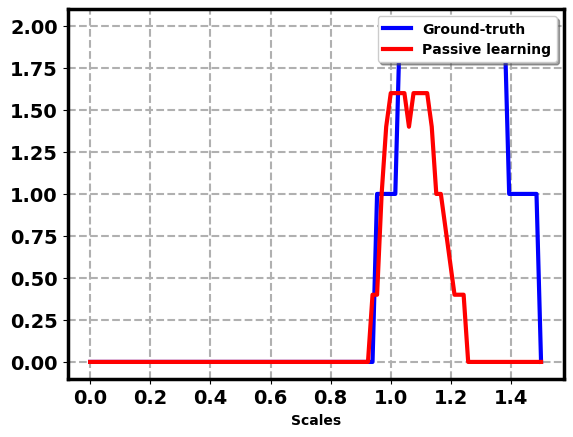}
\end{subfigure}
\begin{subfigure}{.119\linewidth}
  \centering
  \captionsetup{justification=centering}
  \caption*{25\%}
  \vspace{-0.3cm}
  \includegraphics[width=\linewidth]{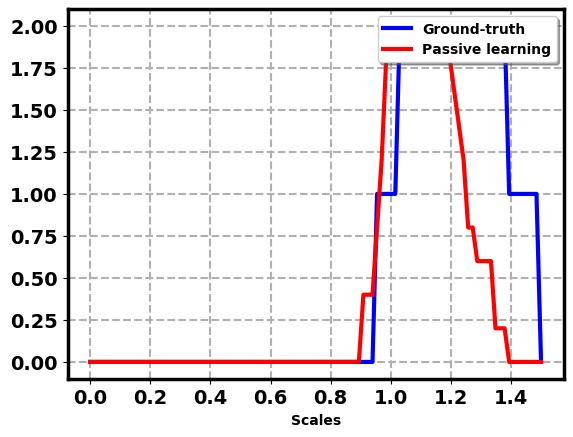}
\end{subfigure}
\begin{subfigure}{.119\linewidth}
  \centering
  \captionsetup{justification=centering}
  \caption*{35\%}
  \vspace{-0.3cm}
  \includegraphics[width=\linewidth]{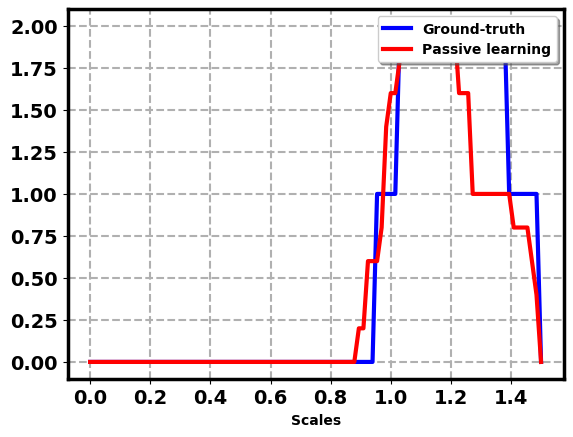}
\end{subfigure}
\begin{subfigure}{.119\linewidth}
  \centering
  \captionsetup{justification=centering}
  \caption*{45\%}
  \vspace{-0.3cm}
  \includegraphics[width=\linewidth]{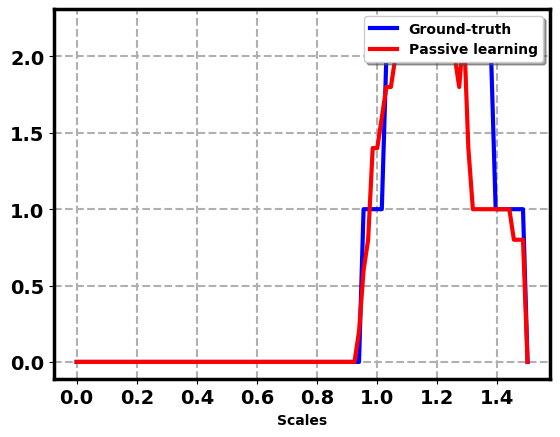}
\end{subfigure}
\begin{subfigure}{.119\linewidth}
  \centering
  \captionsetup{justification=centering}
  \caption*{55\%}
  \vspace{-0.3cm}
  \includegraphics[width=\linewidth]{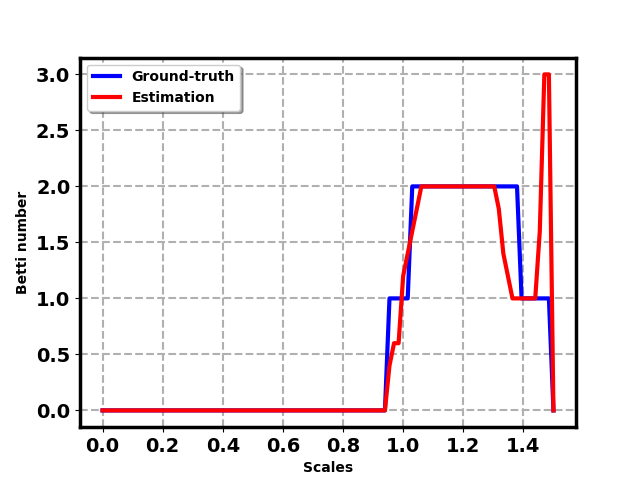}
\end{subfigure}
\begin{subfigure}{.119\linewidth}
  \centering
  \captionsetup{justification=centering}
  \caption*{65\%}
  \vspace{-0.3cm}
  \includegraphics[width=\linewidth]{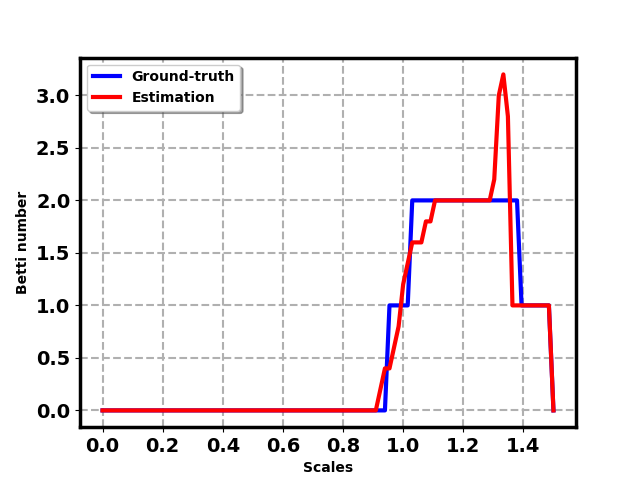}
\end{subfigure}
\begin{subfigure}{.119\linewidth}
  \centering
  \captionsetup{justification=centering}
  \caption*{75\%}
  \vspace{-0.3cm}
  \includegraphics[width=\linewidth]{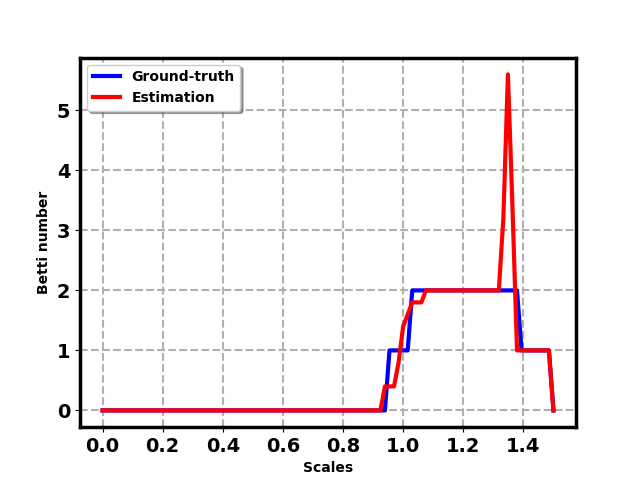}
\end{subfigure}
\\
\begin{subfigure}{.119\linewidth}
\captionsetup{justification=centering}
  \centering
  \includegraphics[width=\linewidth]{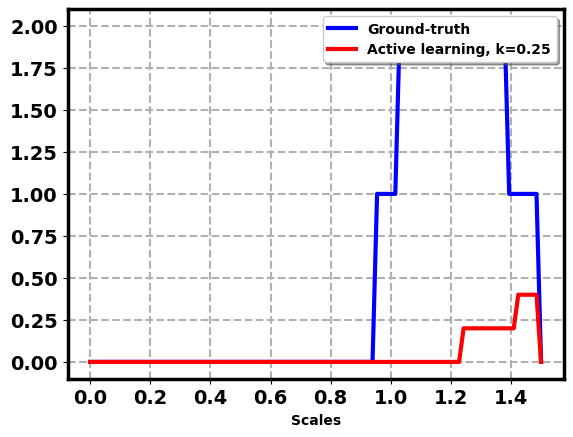}
\end{subfigure}
\begin{subfigure}{.119\linewidth}
\captionsetup{justification=centering}
  \centering
  \includegraphics[width=\linewidth]{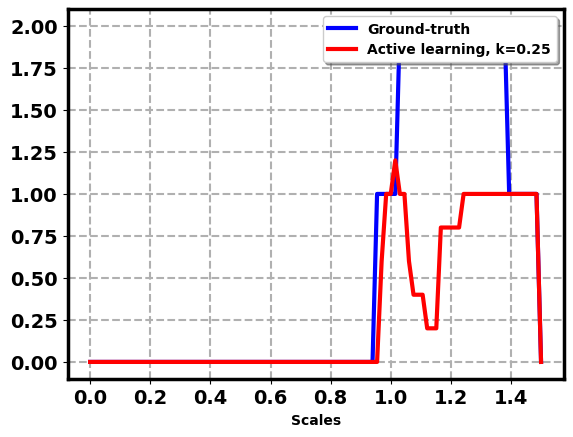}
\end{subfigure}
\begin{subfigure}{.119\linewidth}
\captionsetup{justification=centering}
  \centering
  \includegraphics[width=\linewidth]{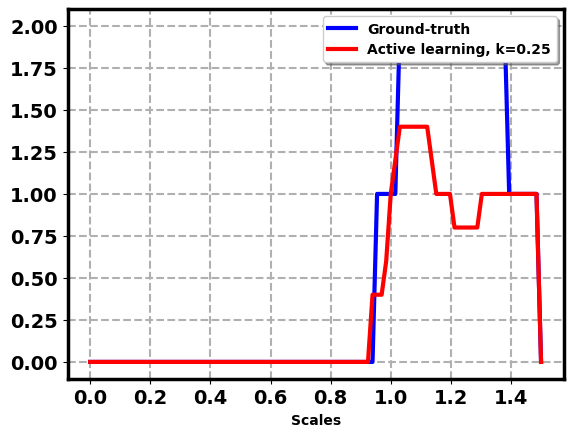}
\end{subfigure}
\begin{subfigure}{.119\linewidth}
\captionsetup{justification=centering}
  \centering
  \includegraphics[width=\linewidth]{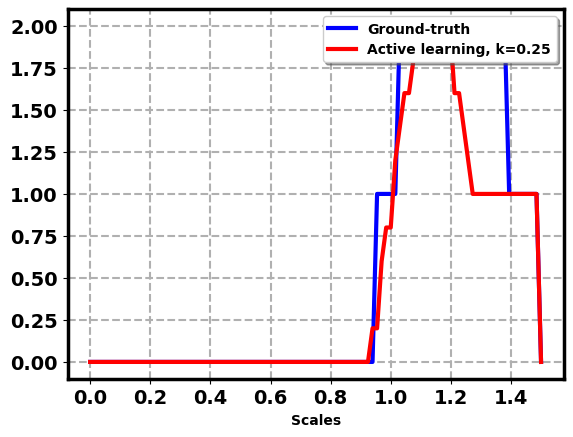}
\end{subfigure}
\begin{subfigure}{.119\linewidth}
\captionsetup{justification=centering}
  \centering
  \includegraphics[width=\linewidth]{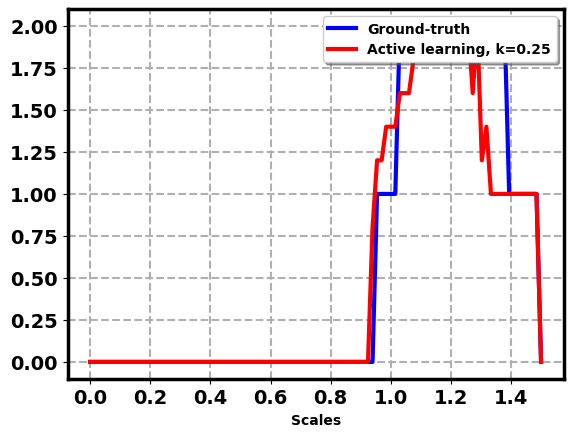}
\end{subfigure}
\begin{subfigure}{.119\linewidth}
\captionsetup{justification=centering}
  \centering
  \includegraphics[width=\linewidth]{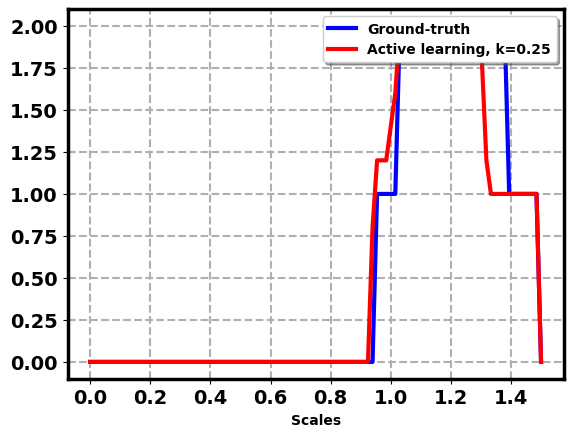}
\end{subfigure}
\begin{subfigure}{.119\linewidth}
\captionsetup{justification=centering}
  \centering
  \includegraphics[width=\linewidth]{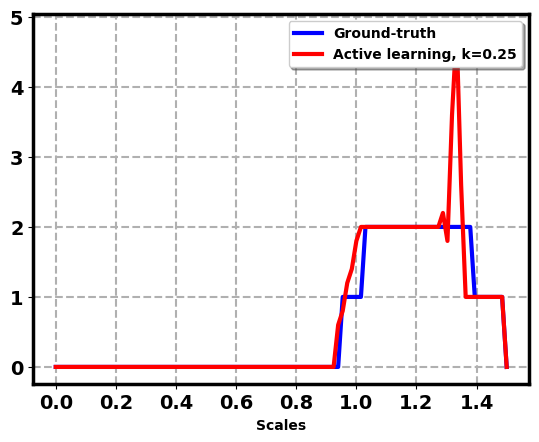}
\end{subfigure}
\begin{subfigure}{.119\linewidth}
\captionsetup{justification=centering}
  \centering
  \includegraphics[width=\linewidth]{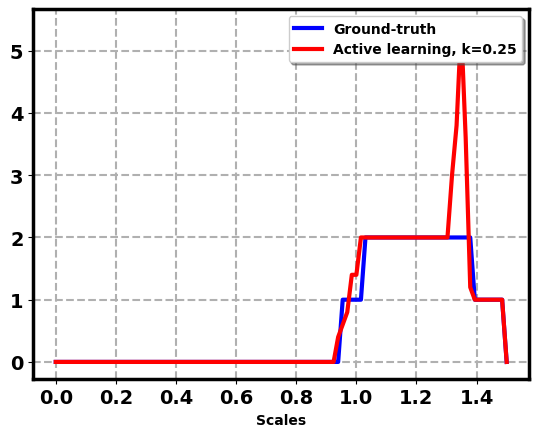}
\end{subfigure}\\
\begin{subfigure}{.119\linewidth}
\captionsetup{justification=centering}
  \centering
  \includegraphics[width=\linewidth]{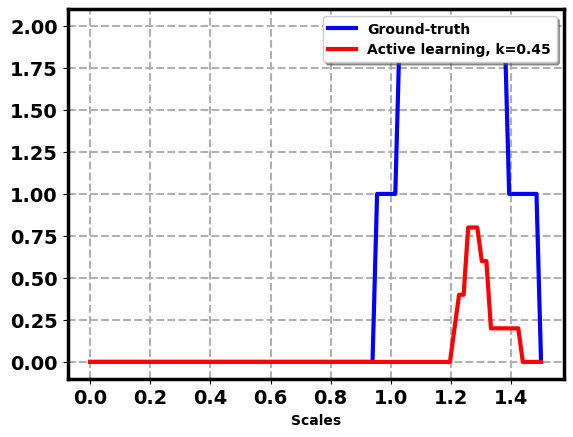}
\end{subfigure}
\begin{subfigure}{.119\linewidth}
\captionsetup{justification=centering}
  \centering
  \includegraphics[width=\linewidth]{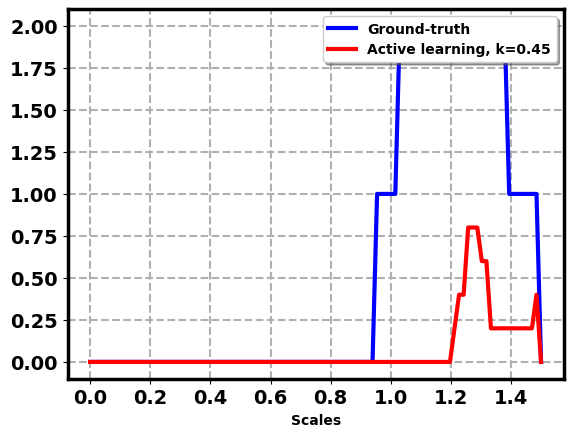}
\end{subfigure}
\begin{subfigure}{.119\linewidth}
\captionsetup{justification=centering}
  \centering
  \includegraphics[width=\linewidth]{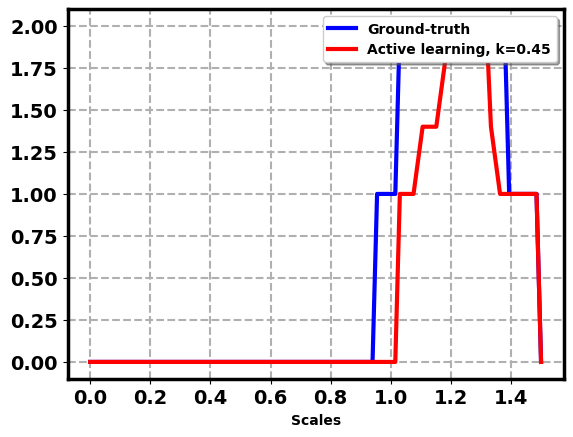}
\end{subfigure}
\begin{subfigure}{.119\linewidth}
\captionsetup{justification=centering}
  \centering
  \includegraphics[width=\linewidth]{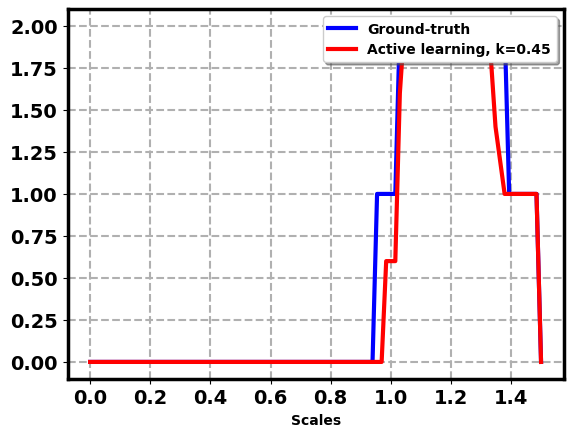}
\end{subfigure}
\begin{subfigure}{.119\linewidth}
\captionsetup{justification=centering}
  \centering
  \includegraphics[width=\linewidth]{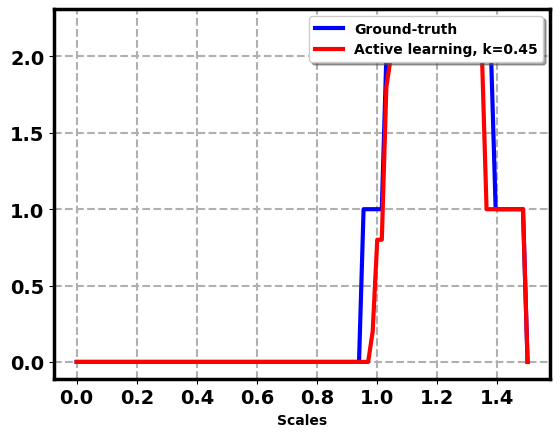}
\end{subfigure}
\begin{subfigure}{.119\linewidth}
\captionsetup{justification=centering}
  \centering
  \includegraphics[width=\linewidth]{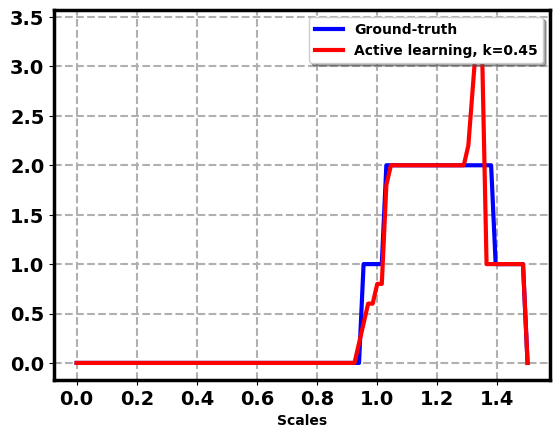}
\end{subfigure}
\begin{subfigure}{.119\linewidth}
\captionsetup{justification=centering}
  \centering
  \includegraphics[width=\linewidth]{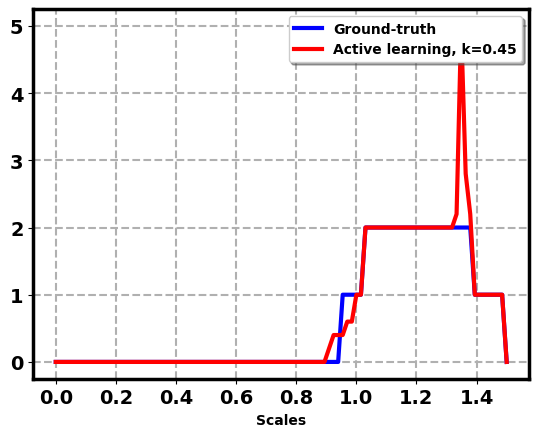}
\end{subfigure}
\begin{subfigure}{.119\linewidth}
\captionsetup{justification=centering}
  \centering
  \includegraphics[width=\linewidth]{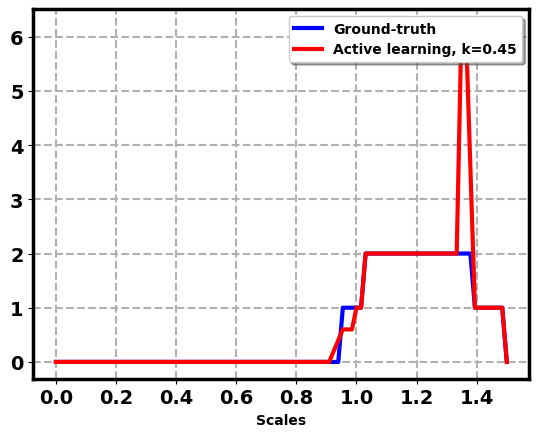}
\end{subfigure}\\
\begin{subfigure}{.119\linewidth}
\captionsetup{justification=centering}
  \centering
  \includegraphics[width=\linewidth]{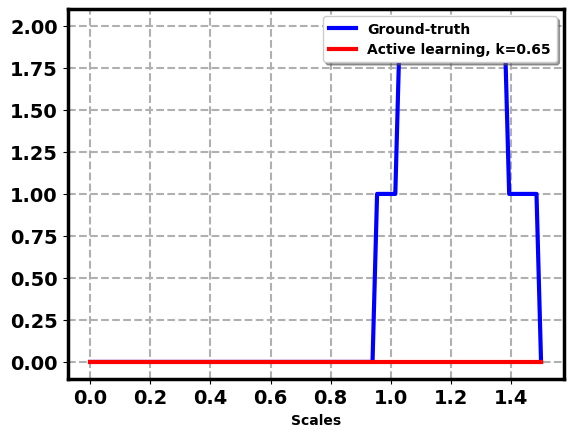}
\end{subfigure}
\begin{subfigure}{.119\linewidth}
\captionsetup{justification=centering}
  \centering
  \includegraphics[width=\linewidth]{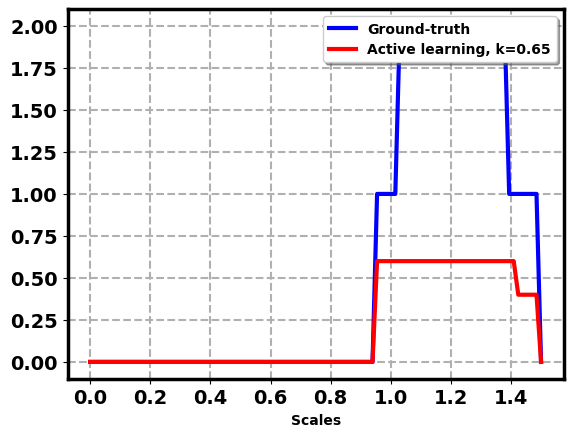}
\end{subfigure}
\begin{subfigure}{.119\linewidth}
\captionsetup{justification=centering}
  \centering
  \includegraphics[width=\linewidth]{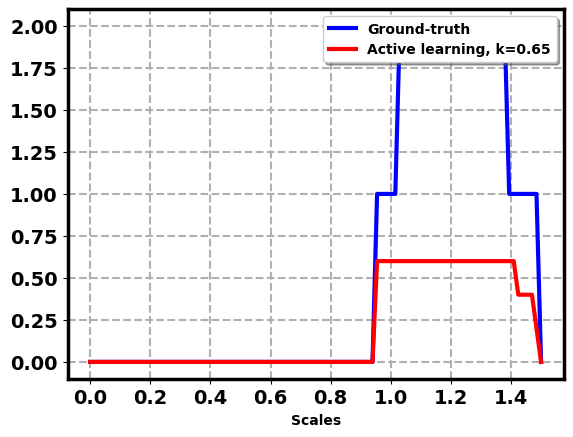}
\end{subfigure}
\begin{subfigure}{.119\linewidth}
\captionsetup{justification=centering}
  \centering
  \includegraphics[width=\linewidth]{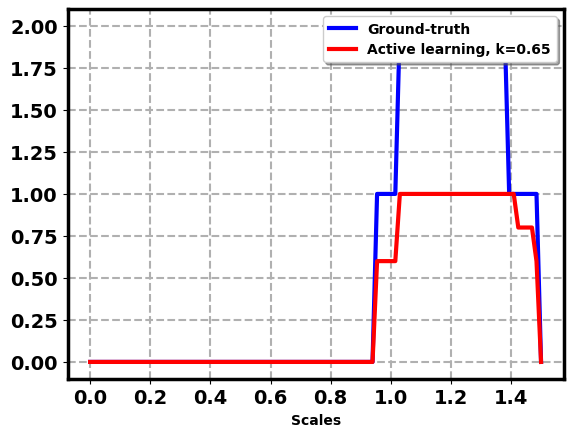}
\end{subfigure}
\begin{subfigure}{.119\linewidth}
\captionsetup{justification=centering}
  \centering
  \includegraphics[width=\linewidth]{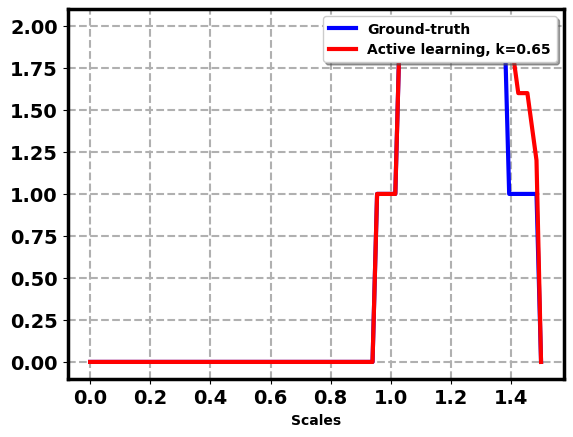}
\end{subfigure}
\begin{subfigure}{.119\linewidth}
\captionsetup{justification=centering}
  \centering
  \includegraphics[width=\linewidth]{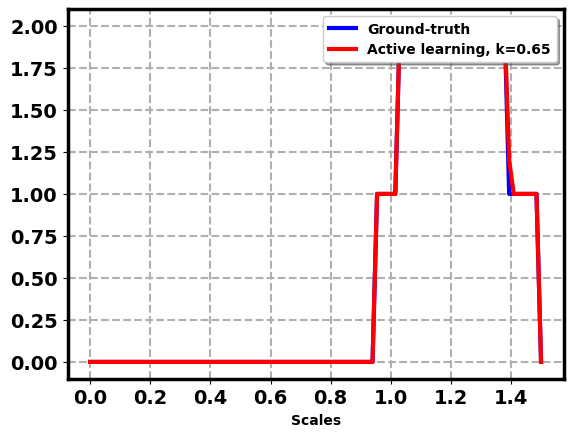}
\end{subfigure}
\begin{subfigure}{.119\linewidth}
\captionsetup{justification=centering}
  \centering
  \includegraphics[width=\linewidth]{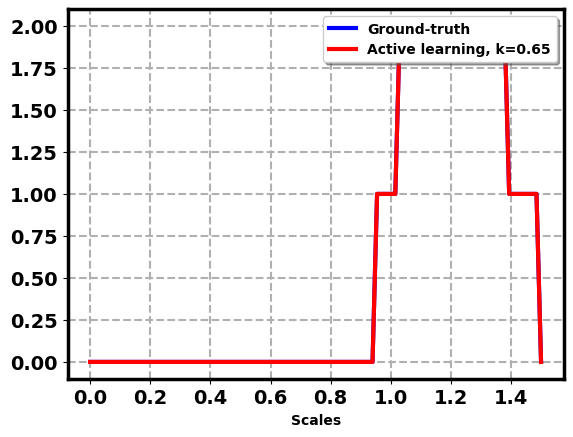}
\end{subfigure}
\begin{subfigure}{.119\linewidth}
\captionsetup{justification=centering}
  \centering
  \includegraphics[width=\linewidth]{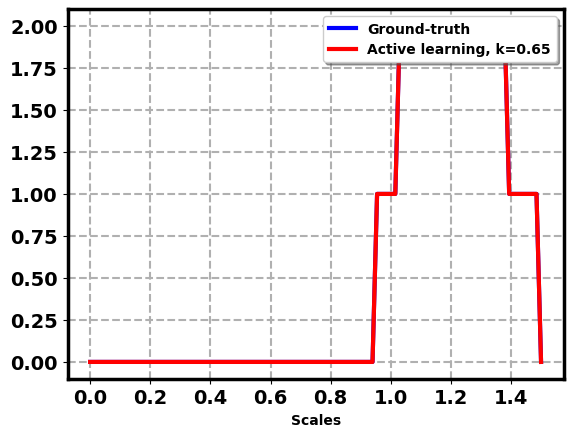}
\end{subfigure}
\caption{\footnotesize{$\beta_1$ estimated at the cost of different proportion of unlabelled data pool by the passive learning (first row) and active leanring methods with $0.25$ (second), $0.45$ (third) and $0.65$ (forth) radius near neighbors graphs.}}
\label{SynH1V1}
\end{figure} 

\begin{figure}[!htb]
\centering
\begin{subfigure}{0.5\linewidth}
  \centering
  \vspace{-0.3cm}
  \includegraphics[width=0.32\linewidth]{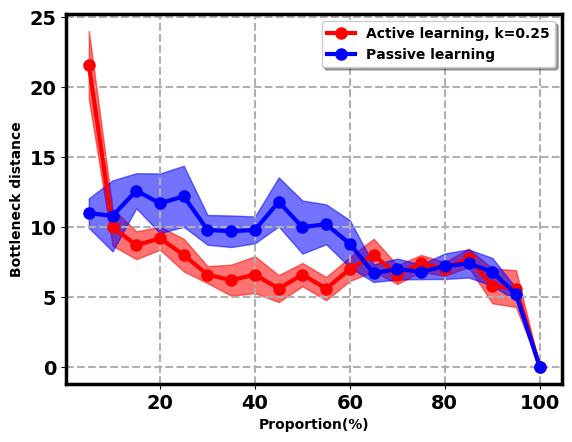}
  \includegraphics[width=0.32\linewidth]{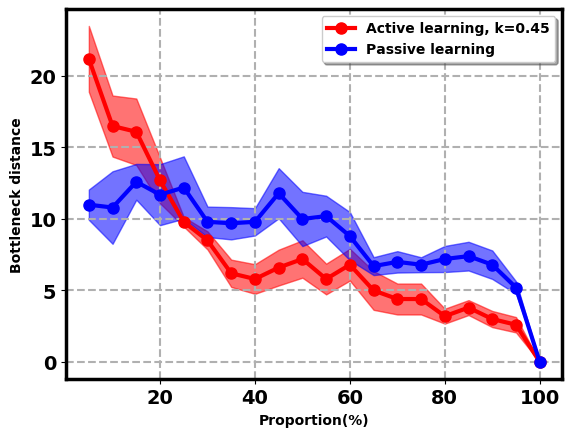}
  \includegraphics[width=0.32\linewidth]{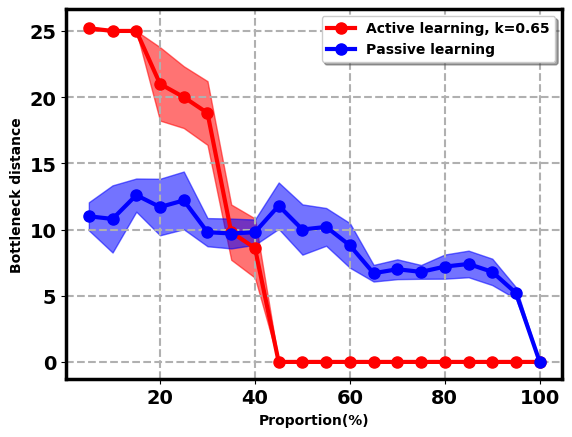}
  \caption{\footnotesize{Bottleneck distance from the ground-truth PD$_0$}}
\end{subfigure}
\begin{subfigure}{0.49\linewidth}
  \centering
    \vspace{-0.3cm}
  \includegraphics[width=0.32\linewidth]{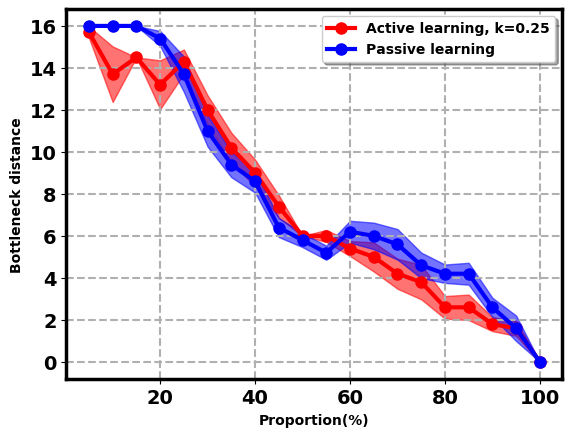}
  \includegraphics[width=0.32\linewidth]{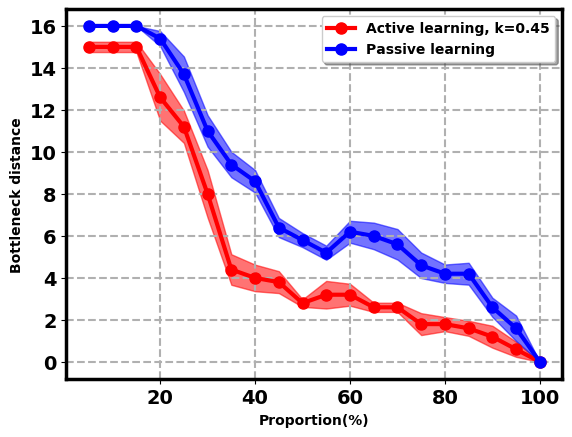}
  \includegraphics[width=0.32\linewidth]{{Synthetic/WD/M0/V3/BarPD1}.png}
   \caption{\footnotesize{Bottleneck distance from the ground-truth PD$_1$}}
\end{subfigure}
\caption{\footnotesize{Bottleneck distance from the ground-truths PD$_0$ and PD$_1$ by the passive learning and active learning on synthetic data. $k$ indicates the values used in the $k$-radius near neighbor graphs for the proposed active learning algorithm.}}
\label{SynBD}
\vspace{-0.1in}
\end{figure}
\clearpage
\subsection{Experimental Results on Real Data}
\begin{figure}[h]
\centering
\begin{subfigure}{0.49\linewidth}
  \centering
  \vspace{-0.3cm}
  \includegraphics[width=0.32\linewidth]{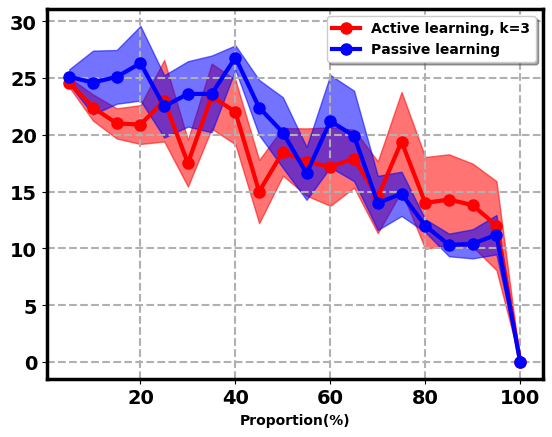}
  \includegraphics[width=0.32\linewidth]{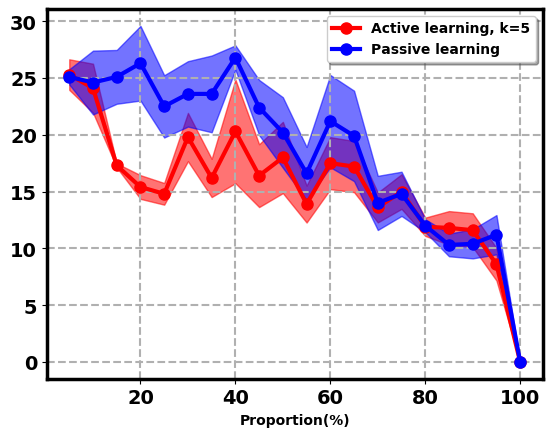}
  \includegraphics[width=0.32\linewidth]{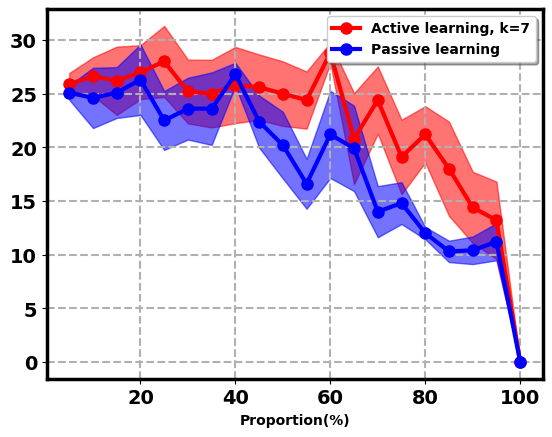}
  \caption{\footnotesize{Bottleneck distance from the ground-truth PD$_0$}}
\end{subfigure}
\begin{subfigure}{0.49\linewidth}
  \centering
    \vspace{-0.3cm}
  \includegraphics[width=0.32\linewidth]{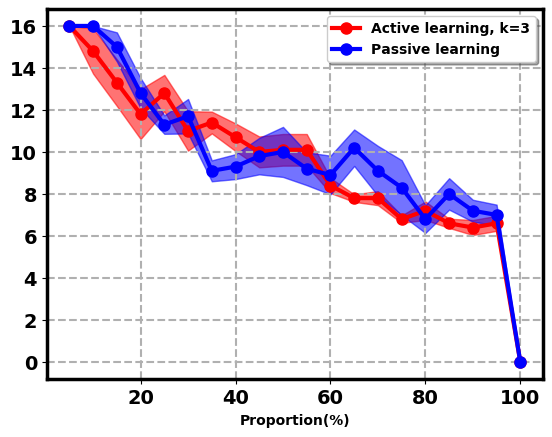}
  \includegraphics[width=0.32\linewidth]{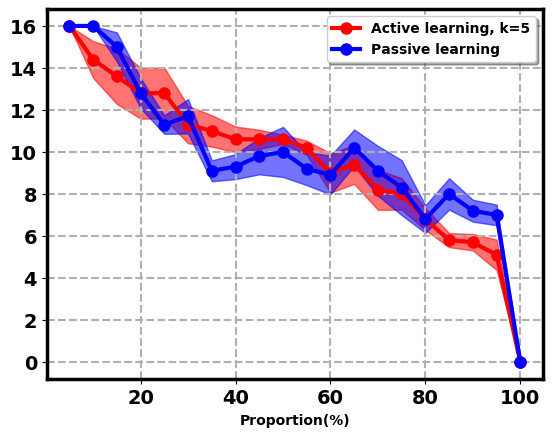}
  \includegraphics[width=0.32\linewidth]{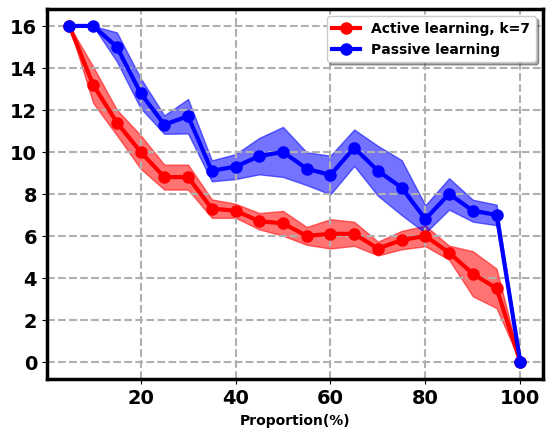}
   \caption{\footnotesize{Bottleneck distance from the ground-truth PD$_1$}}
\end{subfigure}
\caption{\footnotesize{Bottleneck distance from the ground-truths PD$_0$ and PD$_1$ by the passive learning and active learning on \textbf{Banknote}. $k$ indicates the values used in the $k$-nearest neighbor graphs for the proposed active learning algorithm.}}
\label{BankNoteBD}
\vspace{-0.1in}
\end{figure}

\begin{figure}[h]
\centering
\begin{subfigure}{0.49\linewidth}
  \centering
  \vspace{-0.3cm}
  \includegraphics[width=0.32\linewidth]{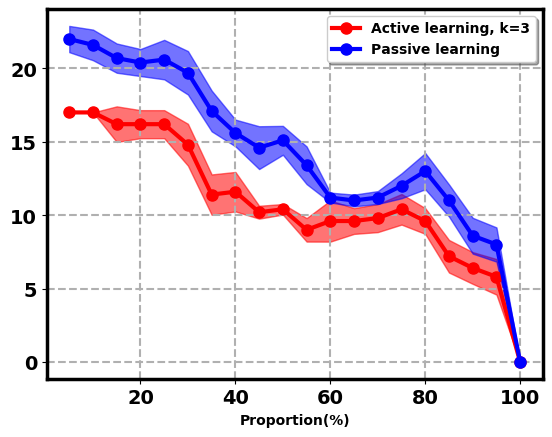}
  \includegraphics[width=0.32\linewidth]{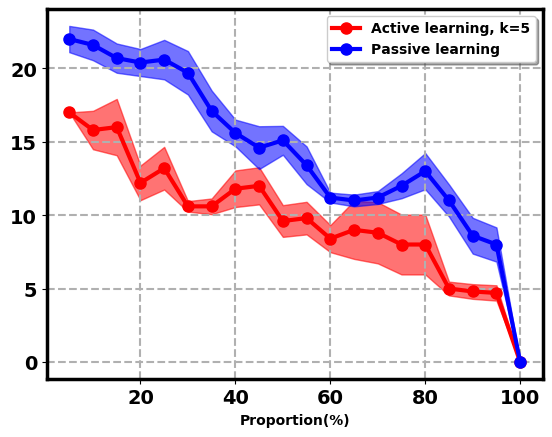}
  \includegraphics[width=0.32\linewidth]{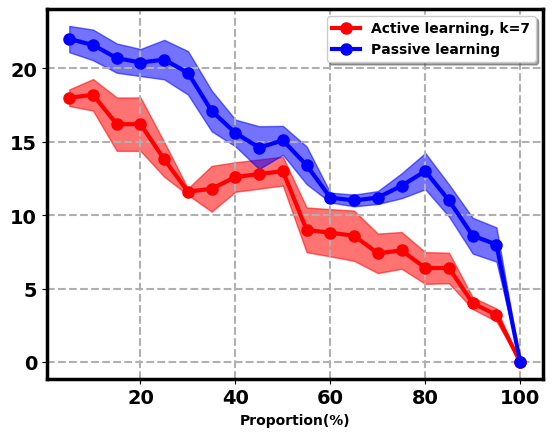}
  \caption{\footnotesize{Bottleneck distance from the ground-truth PD$_0$}}
\end{subfigure}
\begin{subfigure}{0.49\linewidth}
  \centering
    \vspace{-0.3cm}
  \includegraphics[width=0.32\linewidth]{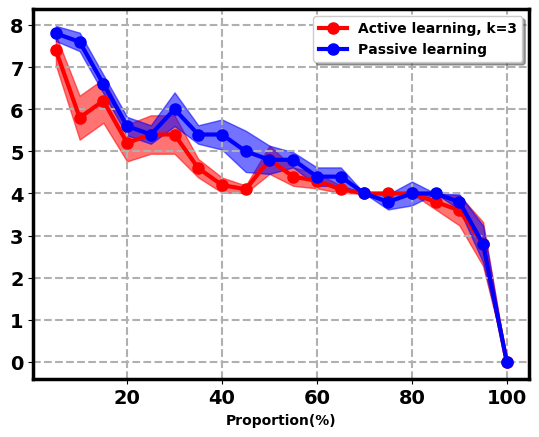}
  \includegraphics[width=0.32\linewidth]{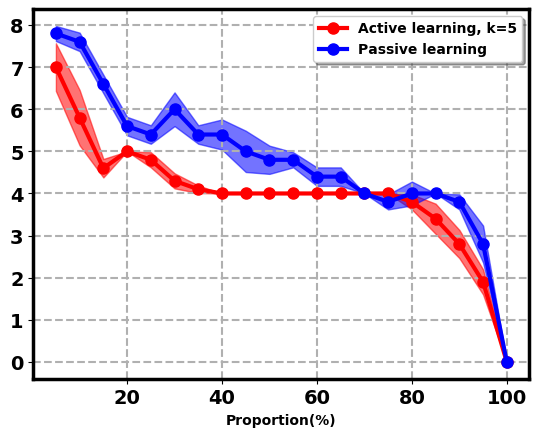}
  \includegraphics[width=0.32\linewidth]{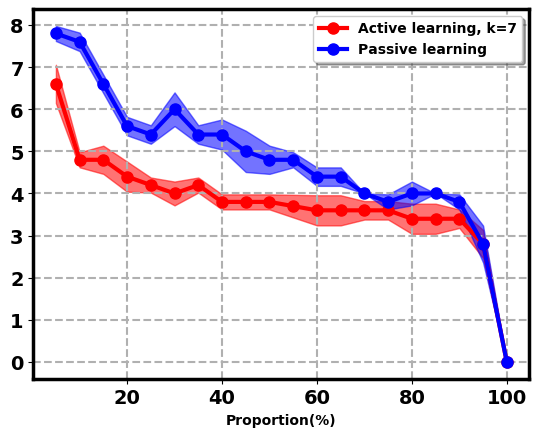}
   \caption{\footnotesize{Bottleneck distance from the ground-truth PD$_1$}}
\end{subfigure}
\caption{\footnotesize{Bottleneck distance from the ground-truths PD$_0$ and PD$_1$ by the passive learning and active learning on \textbf{MNIST}. $k$ indicates the values used in the $k$-nearest neighbor graphs for the proposed active learning algorithm.}}
\label{MNISTBD}
\vspace{-0.1in}
\end{figure}
\begin{figure}[!htb]
\centering
\begin{subfigure}{0.49\linewidth}
  \centering
  \includegraphics[width=0.32\linewidth]{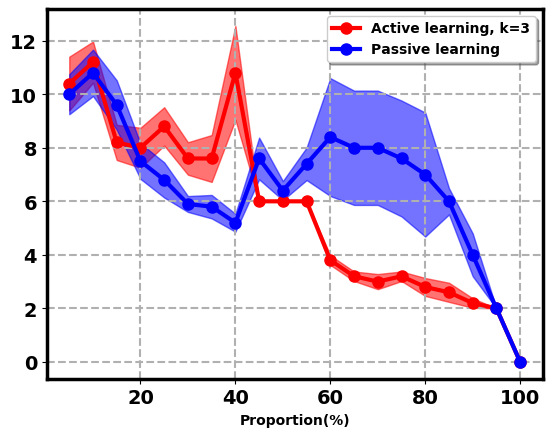}
  \includegraphics[width=0.32\linewidth]{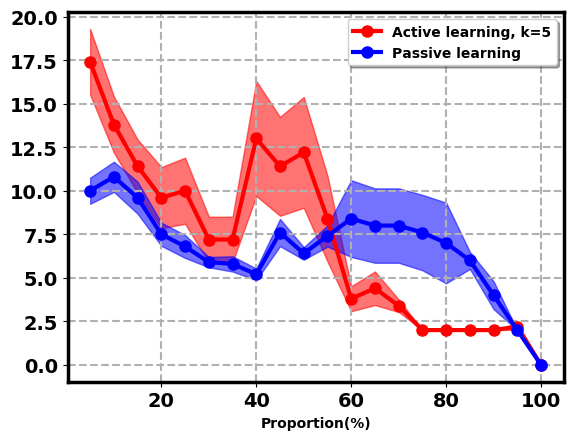}
  \includegraphics[width=0.32\linewidth]{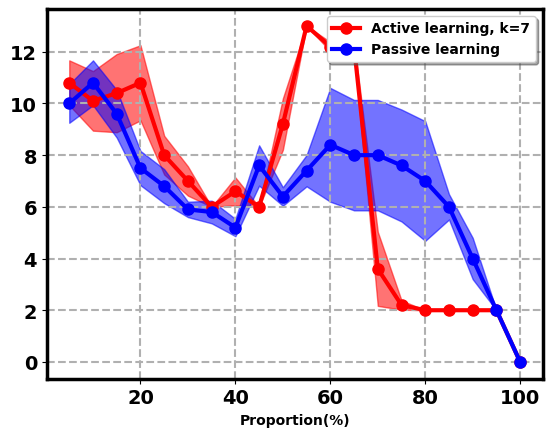}
  \caption{\footnotesize{Bottleneck distance from the ground-truth PD$_0$}}
\end{subfigure}
\begin{subfigure}{0.49\linewidth}
  \centering
    \vspace{-0.3cm}
  \includegraphics[width=0.32\linewidth]{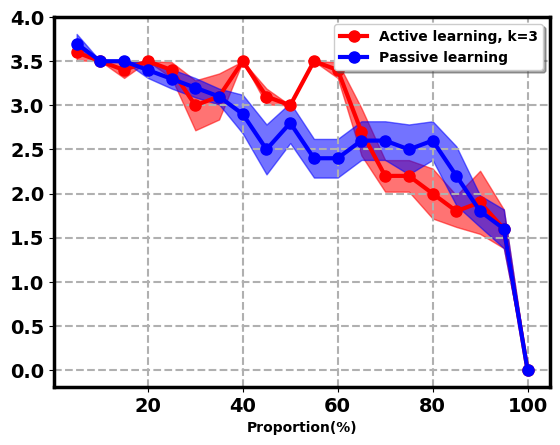}
  \includegraphics[width=0.32\linewidth]{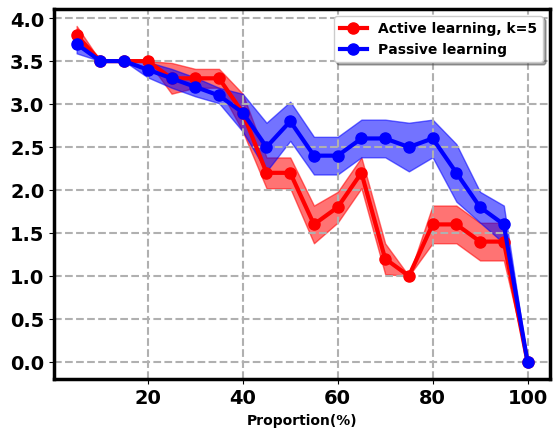}
  \includegraphics[width=0.32\linewidth]{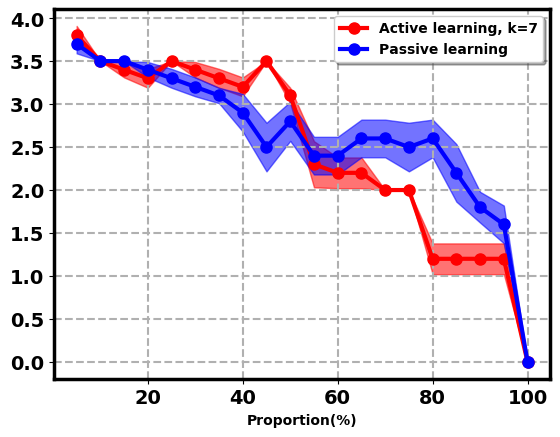}
   \caption{\footnotesize{Bottleneck distance from the ground-truth PD$_1$}}
\end{subfigure}
\caption{\footnotesize{Bottleneck distance from the ground-truths PD$_0$ and PD$_1$ by the passive learning and active learning on \textbf{CIFAR10}. $k$ indicates the values used in the $k$-nearest neighbor graphs for the proposed active learning algorithm.}}
\label{CIFARBD}
\vspace{-0.1in}
\end{figure}
\begin{table*}[t]
\centering
\renewcommand\tabcolsep{4pt} 
\resizebox{0.85\textwidth}{!}{
\begin{tabular}{|c|cccc|}
\cline{2-5}
\multicolumn{1}{c|}{\textbf{Banknote}}&KNN&SVM&Neural network&Decision tree\\\hline
Passive&0.0508\rpm	0.0018&	\textbf{0.1673\rpm	0.0077}&	0.0223\rpm	0.0000&	0.1491\rpm	0.0038

\\
Active$^1$&0.0635\rpm	0.0012&	0.3031\rpm	0.0011&	\textbf{0.0157\rpm	0.0000}&		0.1613\rpm	0.0004

\\
Active$^2$&0.0689\rpm	0.0014&	\textbf{0.1112\rpm	0.0109}&	0.0676\rpm	0.0033&		\textbf{0.0984\rpm	0.0026}

\\
Active$^3$&\textbf{0.0193\rpm	0.0000}&	0.1673\rpm	0.0077&	0.0464\rpm	0.0021&		0.1648\rpm	0.0024

\\\hline
Passive (ens)&0.0176	\rpm0.0000&	\textbf{0.0259\rpm	0.0000}&	0.0068	0.0000&		0.0736\rpm	0.0000

\\
Active$^1$ (ens)&0.0173\rpm	0.0000&	\textbf{0.0259\rpm	0.0000}&	\textbf{0.0039\rpm	0.0000}&		\textbf{0.0731\rpm	0.0000}

\\
Active$^2$ (ens)&\textbf{0.0149\rpm	0.0000}&	\textbf{0.0259\rpm	0.0000}&	0.0134\rpm	0.0001&		\textbf{0.0731\rpm	0.0000}

\\
Active$^3$ (ens)&\textbf{0.0149\rpm	0.0000}&	\textbf{0.0259\rpm	0.0000}&	0.0072\rpm	0.0000&		0.0770\rpm	0.0000
\\\hline
\end{tabular}}
\resizebox{0.85\textwidth}{!}{
\begin{tabular}{|c|cccc|}
\cline{2-5}
\multicolumn{1}{c|}{\textbf{MNIST}}&KNN&SVM&Neural network &Decision tree\\\hline
Passive&0.0150\rpm	0.0000&	0.0303\rpm	0.0000&	0.1053\rpm	0.0000&	0.0420\rpm	0.0003
\\
Active$^1$&\textbf{0.0124\rpm	0.0000}&\textbf{0.0255\rpm	0.0001}&	\textbf{0.0302\rpm	0.0004}&\textbf{0.0382\rpm	0.0001}
\\
Active$^2$&0.0144\rpm	0.0000&	0.0272\rpm	0.0001&	0.0428\rpm	0.0012&	0.0444\rpm	0.0002
\\
Active$^3$&0.0138\rpm	0.0000&	0.0303\rpm	0.0001&	0.0506\rpm	0.0010&	0.0448\rpm	0.0002
\\\hline
Passive (ens)&0.0127	\rpm0.0000&	0.0124\rpm	0.0000&	0.0137	0.0000&	\textbf{0.0274\rpm	0.0000}

\\
Active$^1$ (ens)&\textbf{0.0106\rpm	0.0000}&	\textbf{0.0119\rpm	0.0000}&	\textbf{0.0116\rpm	0.0000}&	0.0284\rpm	0.0000
\\
Active$^2$ (ens) &0.0121\rpm	0.0000&	\textbf{0.0119\rpm	0.0000}&	0.0138\rpm	0.0000&	0.0284\rpm	0.0000
\\
Active$^3$ (ens)&0.0110\rpm	0.0000&	\textbf{0.0119\rpm	0.0000}&	0.0138\rpm	0.0000&	\textbf{0.0274\rpm	0.0000}
\\\hline
\end{tabular}}
\resizebox{0.85\textwidth}{!}{
\begin{tabular}{|c|cccc|}
\cline{2-5}
\multicolumn{1}{c|}{\textbf{CIFAR10}}&KNN&SVM&Neural network&Decision tree\\\hline
Passive&0.3049\rpm	0.0004&	0.4309\rpm	0.0010&	\textbf{0.2796\rpm	0.0009}&	\textbf{0.3120\rpm	0.0000}
\\
Active$^1$&0.3072\rpm	0.0004&	\textbf{0.4055\rpm	0.0000}&	0.2872\rpm	0.0006&	\textbf{0.3120\rpm	0.0000}
\\
Active$^2$&\textbf{0.2813\rpm	0.0000}&	0.4182\rpm	0.0006&	0.3042\rpm	0.0006&	\textbf{0.3120\rpm	0.0000}

\\
Active$^3$&\textbf{0.2813\rpm	0.0000}&	0.4594\rpm	0.0008&	0.3042\rpm	0.0006&	\textbf{0.3120\rpm	0.0000}
\\\hline
Passive (ens)&0.2941\rpm	0.0002&	\textbf{0.2698\rpm	0.0000}&	0.2590	0.0001&	\textbf{0.3074\rpm	0.0000}
\\
Active$^1$ (ens) &0.2832\rpm	0.0000&	0.2797\rpm	0.0000&	\textbf{0.2558\rpm	0.0000}&	\textbf{0.3074\rpm	0.0000}
\\
Active$^2$ (ens)&\textbf{0.2813\rpm	0.0000}&	0.2797\rpm	0.0000&	\textbf{0.2558\rpm	0.0000}&	0.3120\rpm	0.0000
\\
Active$^3$ (ens)&\textbf{0.2813\rpm	0.0000}&	0.2864\rpm	0.0003&	0.2649\rpm	0.0001&	0.3097\rpm	0.0000
\\\hline
\end{tabular}}
\vspace{-0.2cm}
\caption{\footnotesize{Average test error rates(five trials) on Banknote, MNIST and CIFAR10 for the model selected with 15\% unlabelled pool data. Passive/Active stands for the non-ensemble classifiers selected by the \textbf{PD$_0$} homological similarities. Passive/Active (ens) stands for the  classifiers ensembled from two classifiers: one is selected by the \textbf{PD$_0$} homological similarities and the other one is selected by the validation error. The subscript 1, 2 and 3 of the active learning indicates the used 3NN, 5NN and 7NN graphs. Best performance in the non-ensemble and ensemble cases are boldfaced.}}
\label{TestErr}
\end{table*}
\begin{table*}[t]
\centering
\renewcommand\tabcolsep{4pt} 
\resizebox{0.85\textwidth}{!}{
\begin{tabular}{|c|cccc|}
\cline{2-5}
\multicolumn{1}{c|}{\textbf{Banknote}}&KNN&SVM&Neural network&Decision tree\\\hline
Passive&0.1072\rpm 0.0000&	0.3753\rpm	0.0005&	0.4316\rpm	0.0000&	0.1997\rpm	0.0000
\\
Active$^1$&0.0783\rpm 	0.0014&	0.3231\rpm	0.0012&	0.4316\rpm	0.0000&	0.1901\rpm	0.0004
\\
Active$^2$&0.1017\rpm 	0.0001&	0.3431\rpm	0.0012&	0.3730\rpm	0.0138&	0.1744\rpm	0.0026
\\
Active$^3$&\textbf{0.0346\rpm 	0.0013}&	\textbf{0.0836\rpm	0.0133}&	\textbf{0.1058\rpm	0.0265}&	\textbf{0.1613\rpm	0.0004}
\\\hline
Passive (ens)&0.0176\rpm	0.0000&	\textbf{0.0259\rpm	0.0000}&	0.0068\rpm	0.0000&	0.0741\rpm	0.0000
\\
Active$^1$ (ens)&0.0173\rpm	0.0000&	\textbf{0.0259\rpm	0.0000}&\textbf{0.0039\rpm	0.0000}&	\textbf{0.0731\rpm	0.0000}
\\
Active$^2$ (ens)&\textbf{0.0149\rpm	0.0000}&	\textbf{0.0259\rpm	0.0000}&	0.0134\rpm	0.0001&	\textbf{0.0731\rpm	0.0000}
\\
Active$^3$ (ens)&\textbf{0.0149\rpm0.0000}&	\textbf{0.0259\rpm	0.0000}&	0.0072\rpm	0.0000&	0.0770\rpm	0.0000\\\hline
\end{tabular}}
\resizebox{0.85\textwidth}{!}{
\begin{tabular}{|c|cccc|}
\cline{2-5}
\multicolumn{1}{c|}{\textbf{MNIST}}&KNN&SVM&Neural network &Decision tree\\\hline
Passive&0.0129\rpm	0.0000&	\textbf{0.0141\rpm	0.0000}&	0.0202\rpm	0.0000&	\textbf{0.0332\rpm	0.0000}
\\
Active$^1$&0.0128\rpm	0.0000&	0.0161\rpm	0.0001&	\textbf{0.0150\rpm	0.0000}&	0.0388\rpm	0.0001
\\
Active$^2$&0.0122\rpm	0.0000&	0.0162\rpm	0.0001&	0.0177\rpm	0.0000&	\textbf{0.0332\rpm	0.0000}
\\
Active$^3$&\textbf{0.0104\rpm	0.0000}&	0.0156\rpm	0.0001&	0.0388\rpm	0.0020&	\textbf{0.0332\rpm	0.0000}
\\\hline
Passive (ens)&0.0119	\rpm0.0000&	0.0124\rpm	0.0000&	\textbf{0.0104\rpm	0.0000}&	0.0290\rpm	0.0000
\\
Active$^1$ (ens)&0.0123\rpm	0.0000&\textbf{	0.0119\rpm	0.0000}&	\textbf{0.0104\rpm	0.0000}&	0.0284\rpm	0.0000\\
Active$^2$ (ens)&0.0108\rpm	0.0000&	\textbf{0.0119\rpm	0.0000}&	0.0125\rpm	0.0000&	0.0284\rpm	0.0000
\\
Active$^3$ (ens)&\textbf{0.0104\rpm	0.0000}&	\textbf{0.0119\rpm	0.0000}&	0.0127\rpm	0.0000&	\textbf{0.0274\rpm	0.0000}
\\\hline
\end{tabular}}
\resizebox{0.85\textwidth}{!}{
\begin{tabular}{|c|cccc|}
\cline{2-5}
\multicolumn{1}{c|}{\textbf{CIFAR10}}&KNN&SVM&Neural network&Decision tree\\\hline
Passive&\textbf{0.3065\rpm	0.0002}&	0.4683\rpm	0.0000&	0.3185\rpm	0.0000&	\textbf{0.3625\rpm	0.0000}
\\
Active$^1$&0.3201\rpm	0.0000&	0.4591\rpm	0.0005&\textbf{	0.3058\rpm	0.0006}&	\textbf{0.3625\rpm	0.0000}
\\
Active$^2$&0.3095\rpm	0.0001&\textbf{	0.4007\rpm	0.0038}&	\textbf{0.3058\rpm	0.0006}&	\textbf{0.3625\rpm	0.0000}
\\
Active$^3$&0.3109\rpm	0.0001&	0.4464	\rpm0.0005&	0.3185\rpm	0.0000&	\textbf{0.3625	\rpm0.0000}
\\\hline
Passive (ens)&0.2987	\rpm0.0001&		0.2698\rpm	0.0000&		0.2651\rpm	0.0001&		\textbf{0.3137\rpm	0.0002}
\\
Active$^1$ (ens)&0.2911	\rpm0.0001&		0.2797\rpm	0.0000&		\textbf{0.2558\rpm	0.0000}&		0.3146\rpm	0.0000
\\
Active$^2$ (ens)&0.2987\rpm	0.0001&		0.2864\rpm	0.0003&		0.2649\rpm	0.0001&		0.3214\rpm	0.0005
\\
Active$^3$ (ens)&\textbf{0.2935\rpm	0.0000}&		\textbf{0.2665\rpm	0.0000}&		0.2615\rpm	0.0001&		0.3221\rpm	0.0004
\\\hline
\end{tabular}}
\vspace{-0.2cm}
\caption{\footnotesize{Average test error rates(five trials) on Banknote, MNIST and CIFAR10 for the model selected with 15\% unlabelled pool data. Passive/Active stands for the non-ensemble classifiers selected by the \textbf{PD$_1$} homological similarities. Passive/Active (ens) stands for the  classifiers ensembled from two classifiers: one is selected by the \textbf{PD$_1$} homological similarities and the other one is selected by the validation error. The subscript 1, 2 and 3 of the active learning indicates the used 3NN, 5NN and 7NN graphs. Best performance in the non-ensemble and ensemble cases are boldfaced.}}
\vspace{-1cm}
\label{TestErr}
\end{table*}

\end{appendices}
\clearpage
\bibliographystyle{IEEEtran}
\bibliography{egbib}

\begin{thebibliography}{10}
\providecommand{\url}[1]{#1}
\csname url@samestyle\endcsname
\providecommand{\newblock}{\relax}
\providecommand{\bibinfo}[2]{#2}
\providecommand{\BIBentrySTDinterwordspacing}{\spaceskip=0pt\relax}
\providecommand{\BIBentryALTinterwordstretchfactor}{4}
\providecommand{\BIBentryALTinterwordspacing}{\spaceskip=\fontdimen2\font plus
\BIBentryALTinterwordstretchfactor\fontdimen3\font minus
  \fontdimen4\font\relax}
\providecommand{\BIBforeignlanguage}[2]{{%
\expandafter\ifx\csname l@#1\endcsname\relax
\typeout{** WARNING: IEEEtran.bst: No hyphenation pattern has been}%
\typeout{** loaded for the language `#1'. Using the pattern for}%
\typeout{** the default language instead.}%
\else
\language=\csname l@#1\endcsname
\fi
#2}}
\providecommand{\BIBdecl}{\relax}
\BIBdecl

\bibitem{schmidhuber1987evolutionary}
J.~Schmidhuber, ``Evolutionary principles in self-referential learning, or on
  learning how to learn: the meta-meta-... hook,'' Ph.D. dissertation,
  Technische Universit{\"a}t M{\"u}nchen, 1987.

\bibitem{moran2017can}
L.~Mor{\'a}n-Fern{\'a}ndez, V.~Bol{\'o}n-Canedo, and A.~Alonso-Betanzos, ``Can
  classification performance be predicted by complexity measures? a study using
  microarray data,'' \emph{Knowledge and Information Systems}, vol.~51, no.~3,
  pp. 1067--1090, 2017.

\bibitem{kusano2016persistence}
G.~Kusano, Y.~Hiraoka, and K.~Fukumizu, ``Persistence weighted gaussian kernel
  for topological data analysis,'' in \emph{International Conference on Machine
  Learning}, 2016, pp. 2004--2013.

\bibitem{chen2018topological}
C.~Chen, X.~Ni, Q.~Bai, and Y.~Wang, ``A topological regularizer for
  classifiers via persistent homology,'' \emph{arXiv preprint
  arXiv:1806.10714}, 2018.

\bibitem{ramamurthy2018topological}
\BIBentryALTinterwordspacing
K.~N. Ramamurthy, K.~Varshney, and K.~Mody, ``Topological data analysis of
  decision boundaries with application to model selection,'' vol.~97, pp.
  5351--5360, 09--15 Jun 2019. [Online]. Available:
  \url{http://proceedings.mlr.press/v97/ramamurthy19a.html}
\BIBentrySTDinterwordspacing

\bibitem{rieck2019persistent}
B.~Rieck, C.~Bock, and K.~Borgwardt, ``A persistent weisfeiler-lehman procedure
  for graph classification,'' in \emph{International Conference on Machine
  Learning}, 2019, pp. 5448--5458.

\bibitem{guss2018characterizing}
W.~H. Guss and R.~Salakhutdinov, ``On characterizing the capacity of neural
  networks using algebraic topology,'' \emph{arXiv preprint arXiv:1802.04443},
  2018.

\bibitem{rieck2018neural}
B.~Rieck, M.~Togninalli, C.~Bock, M.~Moor, M.~Horn, T.~Gumbsch, and
  K.~Borgwardt, ``Neural persistence: A complexity measure for deep neural
  networks using algebraic topology,'' \emph{arXiv preprint arXiv:1812.09764},
  2018.

\bibitem{varshney2015persistent}
K.~R. Varshney and K.~N. Ramamurthy, ``Persistent topology of decision
  boundaries,'' in \emph{2015 IEEE International Conference on Acoustics,
  Speech and Signal Processing (ICASSP)}.\hskip 1em plus 0.5em minus
  0.4em\relax IEEE, 2015, pp. 3931--3935.

\bibitem{ho2006measures}
T.~K. Ho, M.~Basu, and M.~H.~C. Law, ``Measures of geometrical complexity in
  classification problems,'' in \emph{Data complexity in pattern
  recognition}.\hskip 1em plus 0.5em minus 0.4em\relax Springer, 2006, pp.
  1--23.

\bibitem{hofer2017deep}
C.~Hofer, R.~Kwitt, M.~Niethammer, and A.~Uhl, ``Deep learning with topological
  signatures,'' in \emph{Advances in Neural Information Processing Systems},
  2017, pp. 1634--1644.

\bibitem{kim2020homotopy}
J.~Kim, J.~Shin, F.~Chazal, A.~Rinaldo, and L.~Wasserman, ``Homotopy
  reconstruction via the cech complex and the vietoris-rips complex,'' in
  \emph{The 36th International Symposium on Computational Geometry (SoCG
  2020)}, 2020.

\bibitem{niyogi2008finding}
P.~Niyogi, S.~Smale, and S.~Weinberger, ``Finding the homology of submanifolds
  with high confidence from random samples,'' \emph{Discrete \& Computational
  Geometry}, vol.~39, no. 1-3, pp. 419--441, 2008.

\bibitem{edelsbrunner2008persistent}
H.~Edelsbrunner and J.~Harer, ``Persistent homology - a survey,''
  \emph{Contemporary mathematics}, vol. 453, pp. 257--282, 2008.

\bibitem{dasarathy2015s2}
G.~Dasarathy, R.~Nowak, and X.~Zhu, ``S2: An efficient graph based active
  learning algorithm with application to nonparametric classification,'' in
  \emph{Conference on Learning Theory}, 2015, pp. 503--522.

\bibitem{1908.02518}
U.~Bauer, ``Ripser: efficient computation of vietoris-rips persistence
  barcodes,'' Aug. 2019, preprint.

\bibitem{scikittda2019}
\BIBentryALTinterwordspacing
N.~Saul and C.~Tralie, ``Scikit-tda: Topological data analysis for python,''
  2019. [Online]. Available: \url{https://doi.org/10.5281/zenodo.2533369}
\BIBentrySTDinterwordspacing

\bibitem{efrat2001geometry}
A.~Efrat, A.~Itai, and M.~J. Katz, ``Geometry helps in bottleneck matching and
  related problems,'' \emph{Algorithmica}, vol.~31, no.~1, pp. 1--28, 2001.

\bibitem{kerber2017geometry}
M.~Kerber, D.~Morozov, and A.~Nigmetov, ``Geometry helps to compare persistence
  diagrams,'' \emph{Journal of Experimental Algorithmics (JEA)}, vol.~22, pp.
  1--20, 2017.

\bibitem{Dua:2019}
\BIBentryALTinterwordspacing
D.~Dua and C.~Graff, ``{UCI} machine learning repository,'' 2017. [Online].
  Available: \url{http://archive.ics.uci.edu/ml}
\BIBentrySTDinterwordspacing

\bibitem{lecun1998gradient}
Y.~LeCun, L.~Bottou, Y.~Bengio, and P.~Haffner, ``Gradient-based learning
  applied to document recognition,'' \emph{Proceedings of the IEEE}, vol.~86,
  no.~11, pp. 2278--2324, 1998.

\bibitem{krizhevsky2009learning}
A.~Krizhevsky, G.~Hinton \emph{et~al.}, ``Learning multiple layers of features
  from tiny images,'' 2009.

\bibitem{Edwards2016TowardsAN}
H.~A. Edwards and A.~J. Storkey, ``Towards a neural statistician,''
  \emph{ArXiv}, vol. abs/1606.02185, 2016.

\bibitem{bridgwater_2018}
\BIBentryALTinterwordspacing
A.~Bridgwater, ``{Enough Training, Let's Get Down To The {AI} Supermarket},''
  \emph{Forbes}, Sep 2018. [Online]. Available:
  \url{https://www.forbes.com/sites/adrianbridgwater/2018/09/18/enough-training-lets-get-down-to-the-ai-supermarket/#5f13cdbc10c3}
\BIBentrySTDinterwordspacing

\bibitem{ghrist2014elementary}
R.~W. Ghrist, \emph{Elementary applied topology}.\hskip 1em plus 0.5em minus
  0.4em\relax Createspace Seattle, 2014, vol.~1.

\end{thebibliography}
\end{document}